\documentclass{amsart}

\usepackage{amsmath,amssymb,amsthm}
\usepackage{graphicx}
\usepackage{booktabs}
\usepackage{tikz}
\usepackage[shortlabels]{enumitem}
\usepackage[hidelinks]{hyperref}
\usepackage[capitalise,noabbrev,nameinlink]{cleveref}
\theoremstyle{plain}
\newtheorem{thm}{Theorem}[section]
\newtheorem{lem}[thm]{Lemma}
\newtheorem{prop}[thm]{Proposition}
\newtheorem{cor}[thm]{Corollary}
\theoremstyle{definition}
\newtheorem{defn}[thm]{Definition}
\newtheorem{exa}[thm]{Example}
\newtheorem{rem}[thm]{Remark}
\crefname{thm}{Theorem}{Theorems}
\crefname{lem}{Lemma}{Lemmas}
\crefname{prop}{Proposition}{Propositions}
\crefname{cor}{Corollary}{Corollaries}
\crefname{defn}{Definition}{Definitions}
\crefname{exa}{Example}{Examples}
\crefname{rem}{Remark}{Remarks}
\crefname{equation}{}{}
\Crefname{equation}{Eq.}{Eqs.}
\crefformat{equation}{(#2#1#3)}
\crefrangeformat{equation}{(#3#1#4)--(#5#2#6)}
\crefmultiformat{equation}{(#2#1#3)}{ and (#2#1#3)}{, (#2#1#3)}{ and (#2#1#3)}

\providecommand\R{\mathbb R}
\providecommand\C{\mathbb C}
\providecommand\Z{\mathbb Z}
\providecommand\PP{\mathbb P}
\providecommand\Sph{\mathbb S^2}
\providecommand\Ball{\mathbb B^3}
\providecommand\Harm{\mathcal H}
\providecommand\Sym{\mathrm{Sym}}
\providecommand\Mob{\mathrm{M\ddot ob}}
\providecommand\Or{\mathrm O}
\providecommand\SO{\mathrm{SO}}
\providecommand\SU{\mathrm{SU}}
\providecommand\SL{\mathrm{SL}}
\providecommand\PSL{\mathrm{PSL}}
\providecommand\GL{\mathrm{GL}}
\DeclareMathOperator{\Fix}{Fix}
\DeclareMathOperator{\Stab}{Stab}

\DeclareMathOperator{\Gram}{Gram}
\DeclareMathOperator{\Emb}{Emb}
\DeclareMathOperator{\Diff}{Diff}
\DeclareMathOperator{\diag}{diag}
\DeclareMathOperator{\disc}{disc}
\DeclareMathOperator{\tr}{tr}
\DeclareMathOperator{\sgn}{sgn}
\providecommand\abs[1]{\lvert#1\rvert}
\providecommand\norm[1]{\lVert#1\rVert}
\providecommand\ip[2]{\langle#1,#2\rangle}
\providecommand\dw{\,d\omega}

\begin{document}

\title{Invariant Shape Analysis of Surfaces with Spherical Topology}
\date{}

\author{T. Shaska}
\email{shaska@oakland.edu}

\author{M.-R. Siadat}

\address{Department of Computer Science and Engineering, Oakland University, Rochester, MI, 48309}

\begin{abstract}
Spherical harmonic descriptors of closed 3D shapes depend on the parameterization, the pose and the scale of the surface, and the standard rotation-invariant reductions, the power spectrum and the bispectrum, discard the relative orientation of the harmonic bands and cannot distinguish a shape from its mirror image.  We construct a descriptor that removes all three dependencies exactly and loses nothing else: a conformal parameterization normalized by its conformal barycenter, followed by polynomial invariants of the rotation group.  Identifying each harmonic band with a binary form turns the rotation quotient into classical invariant theory and makes reflections visible as the sign of an invariant, so chirality is recorded.  The descriptor is complete for the truncated expansion, stable in the orbit distance, and comes with numerical diagnostics.  Benchmarks confirm the guarantees, and on bilateral anatomical structures the descriptor separates mirror-image pairs from asymmetric pairs, which parity-blind descriptors cannot.
\end{abstract}

\keywords{spherical harmonics, shape descriptors, invariant theory, binary forms, conformal parameterization, chirality}

\subjclass{13A50, 14L24, 65D18, 68U05}

\maketitle

\section{Introduction}
\label{sec:intro}

Spherical harmonic expansions are a standard representation of closed genus-zero surfaces in shape analysis, from the SPHARM descriptors of brain structures~\cite{BrechbuhlerGerigKubler95,Styner06} to rotation-invariant retrieval of 3D models~\cite{KazhdanFunkhouserRusinkiewicz03}.  A surface is mapped to the sphere, a coordinate or radial function is expanded in spherical harmonics, and the coefficients are used as features.  The coefficients, however, depend on three choices that are not part of the shape: the map to the sphere, the position of the surface in space, and its scale.  The map is the serious one.  For the area-preserving parameterizations of SPHARM the residual freedom is the infinite-dimensional group of area-preserving diffeomorphisms of the sphere, so an individual coefficient is a function of the map and not of the surface; the first-order-ellipsoid alignment that follows fixes a frame by convention, is undefined when two semi-axes coincide, and leaves a finite sign ambiguity.  The usual remedies remove the dependence on rotations by discarding information.  The power spectrum keeps only the norm of each band and forgets how the bands are oriented relative to each other; the bispectrum~\cite{Kakarala12} keeps more but its completeness and its behaviour under reflection are not known band by band; and every descriptor built from rotation-invariant quantities of even degree cannot distinguish a shape from its mirror image, which for bilateral anatomy is the difference between the left and the right side.

This paper removes all three dependencies exactly and loses nothing else.  Conformal parameterization~\cite{GuWangChanThompsonYau04,ChoiLamLui15} reduces the map freedom to the M\"obius group, normalization by the conformal barycenter~\cite{BadenCraneKazhdan18,CantarellaSchumacher22} reduces it to a rotation of the sphere, and the rotation is removed by polynomial invariants of the rotation group, for which the classical identification of harmonics with binary forms~\cite{Cartan1938,Olive14} supplies both the invariants and, through a real structure, the sign that records handedness.  Position is removed by a canonical center and scale by weighted projectivization.  The construction is exact at every step, and the descriptor it produces is complete for the truncated expansion.  Formally, let $S\subset\R^3$ be a smoothly embedded two-sphere; the shape space is
\[
\mathcal S=\Emb(\Sph,\R^3)\big/\bigl(G_{\rm task}\times\Diff^+(\Sph)\bigr),
\]
where $G_{\rm task}$ is the group of orientation-preserving rigid motions of $\R^3$, of all rigid motions, or of orientation-preserving similarities, and the paper computes coordinates on $\mathcal S$ at the level of truncated coefficients.  Constructions of separating invariants for general compact groups, with dimension or stability guarantees, are given in~\cite{DymGortler24,CahillIversonMixonPacker24}; here the group and the representation are specific, and the invariants are explicit.

The coordinate model expands the centered coordinate map, and the radial model expands the distance from a canonical center.  Our results are as follows.

\emph{Canonical parameterization.}  Centered conformal parameterizations of $S$ form one $\SO(3)$-orbit, so the whole coefficient tuple is determined up to one rotation of the sphere (\cref{lem:centering,cor:orbit}); the all-band coordinate expansion determines $S$ up to an orientation-preserving rigid motion (\cref{thm:sep-coord}); and the squared $L^2$ truncation error at degree $L$ is at most $\abs S/(2\pi(L+1)(L+2))$, independently of the conformal factor (\cref{prop:truncation}).

\emph{Harmonics as binary forms.}  A harmonic $f$ of degree $l$ is sent to the binary form $F_f(z)=f(v(z))$ of degree $2l$, where $v$ parametrizes the null conic of $x_1^2+x_2^2+x_3^2$.  This is an explicitly invertible, rotation-equivariant isomorphism that scales the $L^2$ norm to the Bombieri--Weyl norm by an explicit constant (\cref{thm:correspondence}).  Real harmonics are the fixed points of an antilinear involution induced by the quaternionic structure of $\C^2$, and the roots of $F_f$ are the endpoints of the Maxwell axes of $f$.  A multihomogeneous $\SL_2(\C)$-invariant of degree $d_l$ in band $l$ is real on real harmonics if $\sum_ll\,d_l$ is even and purely imaginary if it is odd; the first are the $\Or(3)$-invariants and the second the pseudo-invariants (\cref{lem:reality}).

\emph{Separation and chirality.}  Seven invariants of degrees $1,2,2,3,3,4,6$ separate the $\SO(3)$-orbits of the truncation through degree two (\cref{thm:L2}).  A real harmonic of degree three is congruent to its mirror image iff the skew invariant $R$ of its sextic vanishes, iff its three Maxwell axes are coplanar or one is orthogonal to a bisector of the other two, iff (for distinct axes) the associated genus-two curve has an elliptic involution; moreover $-iR=\det(a,b,c)\,G$ with $G$ a polynomial in the inner products of the axes (\cref{thm:chirality3}).

\emph{Frames, metrics, stability.}  On the principal stratum a canonical frame reduces the residual rotations to the Klein four-group, and the invariant monomials of degree at most three in the aligned coefficients separate orbits; the frame has no continuous extension to the degenerate strata (\cref{thm:frame}).  The orbit distance is a metric on the quotient, the invariants are Lipschitz for it on bounded sets, and on compact sets it is bounded by a H\"older power of the invariant discrepancy (\cref{thm:stability}).  \Cref{thm:certified} collects these properties into one statement, and \cref{sec:computation} gives the numerical realization with a synthetic end-to-end test.

\emph{Benchmarks.}  On classes of dipole--quadrupole pairs that share their power spectrum and bispectrum, and on their mirror images, the power spectrum and the bispectrum classify at chance, the $\Or(3)$-invariants reach one half, and the seven invariants of \cref{thm:L2} classify all classes; the frame coordinates are ill-conditioned like the reciprocal of the spectral gap, while the invariants are not; and the measured stability exponents agree with \cref{thm:stability} (\cref{sec:benchmarks}).  On eleven bilateral pairs of a public anatomical atlas, the descriptor identifies the pairs that are one chiral shape in two handednesses, with the pseudo-invariants recording the handedness, and separates them from pairs of different shapes; the diagnostics reject the two structures on which the conformal map folds (\cref{sec:bench-anatomy}).


The identification of harmonics with binary forms, the spin covering and the Maxwell--Sylvester theorem are classical (\cref{rem:transfer}).  The contributions of this paper are the reality and parity rule of \cref{lem:reality}, which makes reflections visible in the phase of an invariant; the combination of conformal-barycenter normalization with this correspondence into a descriptor that is exactly independent of parameterization, position and scale, with a truncation bound independent of the conformal factor (\cref{prop:truncation,thm:certified}); a complete descriptor through degree two with an explicit pseudo-invariant (\cref{thm:L2}); the characterization of chirality in degree three by the geometry of the Maxwell axes and the factorization $-iR=\lambda^{15}\det(a,b,c)\,G$ (\cref{thm:chirality3}); the identification of the residual group of ellipsoid-type alignment, with the proof that no continuous frame exists across the degenerate strata (\cref{thm:frame}); and two-sided stability with respect to the orbit distance (\cref{thm:stability}).

The construction is motivated by an unpublished shape analysis of deep perisylvian epilepsy by D.~I.~Sims, M.-R.~Siadat, K.~Elisevich and H.~Soltanian-Zadeh (2015), which we call SPHARM-COM; \cref{sec:discussion} identifies the freedoms that its radial descriptor left unresolved.

\section{Canonical Parameterization}
\label{sec:canonical}

Let $\Sph=\{u\in\R^3:\abs u=1\}$ be the unit sphere and $\Ball=\{x\in\R^3:\abs x<1\}$ the open unit ball.  We write $\ip\cdot\cdot$ for the Euclidean inner product on $\R^3$ and for its complex bilinear extension to $\C^3$, and $\abs\cdot$ for the Euclidean norm.  We identify $\Sph$ with $\PP^1=\{[z_0:z_1]\}$ through the coordinate $\zeta=z_0/z_1=(-n_1+in_2)/(1-n_3)$ for $n\in\Sph$; this is stereographic projection from the north pole followed by a reflection, it is orientation-preserving for the outward orientation of $\Sph$, and it is the identification compatible with \cref{sec:binary}.  Every element of $\Or(3)\setminus\SO(3)$ has the form $-R$ with $R\in\SO(3)$.  A set with a free and transitive action of a group $G$ is a \textbf{$G$-torsor}.  For a probability measure $\mu$ on $\Sph$ and a continuous map $g$ of $\Sph$, the pushforward $g_*\mu(E)=\mu(g^{-1}(E))$ satisfies $\int f\,d(g_*\mu)=\int f\circ g\,d\mu$; an \textbf{atom} of $\mu$ is a point of positive mass; and $d\omega=dA_{\rm round}/4\pi$ is the round probability measure.  The \textbf{M\"obius group} $\Mob\cong\PSL_2(\C)$ is the group of orientation-preserving conformal diffeomorphisms of $\Sph$~\cite{Ahlfors81}.

\subsection{Spherical harmonics}

For $l\ge0$ let $\Harm_l$ be the space of restrictions to $\Sph$ of homogeneous harmonic polynomials of degree $l$ on $\R^3$; $\dim\Harm_l=2l+1$, $-\Delta_{\Sph}$ acts on $\Harm_l$ by $l(l+1)$, and $L^2(\Sph,d\omega)$ is the orthogonal Hilbert sum of the $\Harm_l$~\cite{AxlerBourdonRamey01}.  For instance $\Harm_1=\{\ip a\cdot:a\in\R^3\}$ and $\Harm_2=\{u\mapsto u^{\top}Qu:Q=Q^{\top},\ \tr Q=0\}$.  Write $u=(\sin\theta\cos\phi,\sin\theta\sin\phi,\cos\theta)$, let $P_l$ be the Legendre polynomial and $P_l^m(t)=(1-t^2)^{m/2}P_l^{(m)}(t)$.  We fix the orthonormal basis
\[
\begin{split}
Y_{l0}&=\sqrt{2l+1}\,P_l(\cos\theta),\\
Y_{lm}&=N_{lm}\,P_l^m(\cos\theta)\cos m\phi,\\
Y_{l,-m}&=N_{lm}\,P_l^m(\cos\theta)\sin m\phi,
\end{split}
\]
for $1\le m\le l$, with $N_{lm}=\sqrt{2(2l+1)(l-m)!/(l+m)!}$.  This fixes normalization and signs; software libraries differ from it by constant factors such as the Condon--Shortley factor $(-1)^m$.  The \textbf{coefficients} of $f\in L^2(\Sph,d\omega)$ are $c_{lm}(f)=\int_{\Sph}fY_{lm}\dw$, the orthogonal projection onto $\Harm_l$ is $\pi_lf=\sum_{\abs m\le l}c_{lm}(f)Y_{lm}$, and $l$ is the \textbf{band}.  For $\R^3$-valued functions $\pi_l$ is taken componentwise.

The group $\SO(3)$ acts by $(Q\cdot f)(u)=f(Q^{-1}u)$.  It preserves each $\Harm_l$ and acts on $c_l(f)=(c_{l,-l},\dots,c_{ll})$ by an orthogonal matrix $D_l(Q)$, the real form of the Wigner matrix~\cite{Vilenkin68}.  The band norms $\norm{\pi_lf}$ are invariant; the individual coefficients are not.  A rotation $R_\alpha$ by $\alpha$ about the $x_3$-axis fixes $c_{l0}$ and rotates each pair $(c_{lm},c_{l,-m})$ by the angle $m\alpha$, while rotations about other axes mix different orders $m$.  A coefficient $c_{lm}$ of a surface therefore depends on how the surface was placed on the sphere and in space.  The rest of this section removes the placement on the sphere up to one rotation.

\subsection{Conformal parameterizations and the conformal barycenter}

Throughout, $S\subset\R^3$ is a smoothly embedded two-sphere, oriented by its outward normal, with induced metric $g_S$, area form $dA$ and area $\abs S$.  A diffeomorphism $\varphi\colon\Sph\to S$ is \textbf{conformal} if $\varphi^*g_S=e^{2\rho}g_{\rm round}$ for a smooth function $\rho$; $e^{2\rho}$ is its \textbf{conformal factor}.  By the uniformization theorem~\cite{deSaintGervais16}, the set $\Phi(S)$ of orientation-preserving conformal diffeomorphisms $\Sph\to S$ is nonempty, and $\Mob$ acts on it freely and transitively by $\varphi\mapsto\varphi\circ g$; that is, $\Phi(S)$ is a $\Mob$-torsor.  Every element of $\Mob$ extends uniquely to a M\"obius transformation of $\overline{\Ball}$, the extended group acts transitively on $\Ball$, and the stabilizer of the origin is $\SO(3)$ acting linearly~\cite{Ahlfors81}.  Hence removing the three nonrotational degrees of freedom of $\Mob$ amounts to choosing a point of $\Ball$.

For $x\in\Ball$, the \textbf{hyperbolic translation} $\tau_x\colon\overline{\Ball}\to\overline{\Ball}$ is the map
\begin{equation}
\label{eq:tau}
\tau_x(u)=\frac{(1-\abs x^2)(u-x)-\abs{u-x}^2x}{1-2\ip ux+\abs u^2\abs x^2}
\end{equation}
The map $\tau_x$, defined for $u\in\overline{\Ball}$, is a M\"obius transformation of $\overline{\Ball}$.  In the Poincar\'e ball model of hyperbolic space it is the translation along the geodesic through $0$ and $x$.  It sends $x$ to $0$ and $0$ to $-x$, and it fixes the endpoints $\pm x/\abs x$ of this geodesic.  It maps $\Sph$ to itself, and $\tau_0$ is the identity.  For a probability measure $\mu$ on $\Sph$ set
\[
\xi_\mu(x)=\int_{\Sph}\tau_x(u)\,d\mu(u),\qquad x\in\Ball.
\]
Since $\int f\,d(g_*\mu)=\int f\circ g\,d\mu$, the vector $\xi_\mu(x)$ is the Euclidean center of mass of the measure $(\tau_x)_*\mu$.

Douady and Earle~\cite{DouadyEarle86} proved the following.  If $\mu$ has no atoms, then the vector field $\xi_\mu$ has a unique zero $B(\mu)\in\Ball$, and
\begin{equation}
\label{eq:natural}
B(g_*\mu)=g\bigl(B(\mu)\bigr)\qquad(g\in\Mob).
\end{equation}
The point $B(\mu)$ is the \textbf{conformal barycenter} of $\mu$.  Equivalently, $B(\mu)$ is the unique $x\in\Ball$ such that $(\tau_x)_*\mu$ has center of mass $0$.  Since $\tau_0$ is the identity, $B(\mu)=0$ if and only if $\int_{\Sph}u\,d\mu(u)=0$.  \Cref{fig:barycenter} illustrates the construction.

\begin{figure}[t]
\centering
\resizebox{\columnwidth}{!}{%
\begin{tikzpicture}[scale=2.1]
\shade[ball color=blue!8] (0,0) circle (1);
\draw (0,0) circle (1);
\draw[thin] (1,0) arc[start angle=0,end angle=-180,x radius=1,y radius=0.459];
\draw[thin,densely dotted,gray] (1,0) arc[start angle=0,end angle=180,x radius=1,y radius=0.459];
\foreach \p in {(-0.97,0.09),(-0.34,-0.60),(-0.47,-0.68),(-0.82,0.48),(-0.49,0.44),(-0.52,-0.68),(0.20,0.32)} \fill[red!35] \p circle (0.022);
\draw[dashed] (-0.70,-0.57) -- (0.70,0.57);
\foreach \p in {(0.66,0.52),(0.50,0.78),(0.57,0.69),(0.78,0.53),(0.68,0.60),(0.73,0.61),(0.69,0.60),(0.71,0.51),(0.71,0.56),(0.71,0.55),(0.70,0.56),(0.73,0.61),(0.58,0.74),(0.63,0.57),(0.79,0.49),(0.85,0.32),(0.69,0.52),(0.78,0.55),(0.84,0.51),(0.81,0.35),(0.77,0.44),(0.64,0.52),(0.66,0.64),(0.63,0.60),(0.68,0.59),(0.71,0.52),(-0.01,-0.53),(-0.90,0.32),(-0.99,-0.16),(-0.73,0.31),(-0.81,0.08),(-0.54,-0.81),(-0.78,-0.07),(-0.80,0.60),(-0.36,-0.93)} \fill[red!75!black] \p circle (0.03);
\draw[thick] (0.17,0.26) -- (0.26,0.35) (0.17,0.35) -- (0.26,0.26);
\node[below left] at (0.21,0.30) {\scriptsize $c$};
\fill (0,0) circle (0.02);
\node[below right] at (0,0) {\scriptsize $0$};
\filldraw[fill=yellow!70,thick] (0.53,0.43) circle (0.045);
\node[left] at (0.53,0.43) {\scriptsize $x$};
\node at (0,-1.3) {\small $\mu$};
\draw[->,thick] (1.25,0) -- node[above] {\small $\tau_x$} (1.95,0);
\begin{scope}[xshift=3.2cm]
\shade[ball color=blue!8] (0,0) circle (1);
\draw (0,0) circle (1);
\draw[thin] (1,0) arc[start angle=0,end angle=-180,x radius=1,y radius=0.459];
\draw[thin,densely dotted,gray] (1,0) arc[start angle=0,end angle=180,x radius=1,y radius=0.459];
\foreach \p in {(-0.64,0.75),(0.92,0.14),(0.63,0.59),(0.64,0.63),(-0.32,0.90),(1.00,-0.06),(0.57,-0.82),(0.88,0.23),(0.46,-0.17),(-0.69,-0.71),(-0.83,-0.45),(-0.77,-0.52),(-0.77,-0.49),(-0.66,-0.58),(-0.67,-0.59),(-0.79,-0.42),(-0.85,-0.46),(-0.82,-0.51),(-0.73,-0.43),(-0.68,-0.59),(-0.70,-0.63),(-0.61,-0.46),(-0.80,-0.53),(-0.83,-0.39),(-0.67,-0.64)} \fill[red!35] \p circle (0.022);
\draw[dashed] (-0.70,-0.57) -- (0.70,0.57);
\foreach \p in {(0.22,0.10),(-0.31,0.91),(0.50,0.76),(0.61,0.76),(0.67,0.13),(0.79,0.49),(0.78,0.43),(0.71,0.51),(0.08,0.41),(0.48,0.16),(0.61,-0.76),(0.82,-0.36),(0.07,0.09),(0.28,0.95),(0.15,0.69),(0.57,0.73),(0.68,0.20)} \fill[red!75!black] \p circle (0.03);
\filldraw[fill=yellow!70,thick] (0.00,0.00) circle (0.045);
\node[below right] at (0.00,0.00) {\scriptsize $0$};
\node at (0,-1.3) {\small $(\tau_x)_*\mu$};
\end{scope}
\end{tikzpicture}}
\caption{The conformal barycenter on $\Sph$.  Left: a probability measure $\mu$ consisting of $42$ point masses of weight $1/42$ on $\Sph$ (points on the far hemisphere are drawn lighter; each atom has mass less than $1/2$, see \cref{rem:mesh}), its Euclidean center of mass $c$ (cross), and its conformal barycenter $x=B(\mu)\in\Ball$ (circle).  The dashed segment is the geodesic of $\Ball$ through $0$ and $x$.  Right: the pushforward $(\tau_x)_*\mu$ under the hyperbolic translation $\tau_x$; its Euclidean center of mass is the origin.  The points were computed from~\cref{eq:tau}.}
\label{fig:barycenter}
\end{figure}

Each $\varphi\in\Phi(S)$ determines the probability measure $\mu_\varphi=\abs S^{-1}\varphi^*(dA)$ on $\Sph$.  Explicitly, $\mu_\varphi(E)=\abs{\varphi(E)}/\abs S$ for every Borel set $E\subset\Sph$, where $\abs{\varphi(E)}$ is the area of $\varphi(E)$.  We call $\varphi$ \textbf{centered} if
\[
\int_{\Sph}u\,d\mu_\varphi(u)=0,
\]
and write $\Phi_0(S)$ for the set of centered parameterizations.  Thus $\varphi$ is centered if the area of $S$, transported to $\Sph$ by $\varphi$, has its center of mass at the origin.

\begin{lem}
\label{lem:centering}
The following are true:
\begin{enumerate}[label=(\roman*)]
\item $\Phi_0(S)$ is nonempty and is a single orbit under $\SO(3)$ acting by precomposition.
\item If $T(x)=\lambda Rx+b$ with $\lambda>0$, $R\in\SO(3)$, then $\Phi_0(T(S))=T\circ\Phi_0(S)$.
\item If $T(x)=-\lambda Rx+b$ and $r$ is any reflection of $\Sph$ in a plane through the origin, then $\Phi_0(T(S))=T\circ\Phi_0(S)\circ r$.
\end{enumerate}
\end{lem}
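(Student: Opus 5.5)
The plan is to reduce everything to the Douady--Earle theorem and its naturality \cref{eq:natural}, together with the transformation rule $\mu_{\varphi\circ g}=(g^{-1})_*\mu_\varphi$ for $g\in\Mob$. This rule holds because $\mu_{\varphi\circ g}(E)=\abs{\varphi(g(E))}/\abs S=\mu_\varphi(g(E))$. Since $\varphi$ is a diffeomorphism and $dA$ is smooth, $\mu_\varphi$ is absolutely continuous with respect to $d\omega$, so it has no atoms and $B(\mu_\varphi)$ is defined. Moreover, $\varphi$ is centered if and only if $B(\mu_\varphi)=0$.

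For (i), we fix $\varphi\in\Phi(S)$ and set $x=B(\mu_\varphi)$. We take $g=\tau_x^{-1}$, restricted to $\Sph$, which is an element of $\Mob$. Then $\mu_{\varphi\circ g}=(\tau_x)_*\mu_\varphi$, and by naturality $B\bigl((\tau_x)_*\mu_\varphi\bigr)=\tau_x(x)=0$. So $\varphi\circ\tau_x^{-1}\in\Phi_0(S)$, and $\Phi_0(S)$ is nonempty.

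Next, let $\varphi,\psi\in\Phi_0(S)$. Because $\Phi(S)$ is a $\Mob$-torsor, $\psi=\varphi\circ g$ for a unique $g\in\Mob$. Then
\[
0=B(\mu_\psi)=B\bigl((g^{-1})_*\mu_\varphi\bigr)=g^{-1}\bigl(B(\mu_\varphi)\bigr)=g^{-1}(0),
\]
so $g$ fixes the origin and therefore lies in the stabilizer $\SO(3)$. Conversely, for $Q\in\SO(3)$ the rule gives $\mu_{\varphi\circ Q}=(Q^{-1})_*\mu_\varphi$. Its center of mass is $Q^{-1}\int u\,d\mu_\varphi=0$, so $\varphi\circ Q$ is again centered. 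Freeness of the $\SO(3)$-action is inherited from freeness of the $\Mob$-action.

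For (ii), the map $T$ is an orientation-preserving similarity. Hence $T\circ\varphi$ is an orientation-preserving conformal diffeomorphism onto $T(S)$, with conformal factor multiplied by $\lambda^2$. The outward normal is carried to the outward normal, because $T$ preserves orientation and maps the bounded component of the complement to the bounded component. Areas scale uniformly by $\lambda^2$, so $\mu_{T\circ\varphi}=\mu_\varphi$. Thus $T\circ\Phi(S)=\Phi(T(S))$ and centeredness is preserved. The reverse inclusion follows by applying the same argument to $T^{-1}$.

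For (iii), $T$ reverses orientation. Then $T\circ\varphi$ is conformal but orientation-reversing with respect to the outward normal of $T(S)$, which is again the image of the outward normal. Precomposing with a reflection $r\in\Or(3)\setminus\SO(3)$ of $\Sph$, which is an anticonformal isometry, restores orientation. So $T\circ\varphi\circ r\in\Phi(T(S))$, and every element of $\Phi(T(S))$ arises this way by the torsor property. For the measure, $\mu_{T\circ\varphi\circ r}=(r^{-1})_*\mu_\varphi$, since $\mu_{T\circ\varphi}=\mu_\varphi$ as in (ii). Because $r$ is linear, the center of mass is $r^{-1}\int u\,d\mu_\varphi$, which vanishes exactly when $\varphi$ is centered. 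The choice of $r$ is immaterial: two reflections differ by an element of $\SO(3)$, which (i) absorbs.

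The main obstacle is orientation bookkeeping. One must check that $T$ maps the outward normal of $S$ to the outward normal of $T(S)$, so that the sign of $\det$ of $T$ alone decides whether $T\circ\varphi$ preserves orientation. We settle this by noting that $T$ maps the interior domain of $S$ to the interior domain of $T(S)$.
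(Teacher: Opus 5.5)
Your proposal is correct and follows essentially the same route as the paper. It uses the rule $\mu_{\varphi\circ g}=(g^{-1})_*\mu_\varphi$ together with Douady--Earle naturality for (i), invariance of the normalized area measure under similarities for (ii), and $\mu_{T\circ\varphi\circ r}=r_*\mu_\varphi$ with linearity of $r$ for (iii). The differences are cosmetic: you exhibit the centering element explicitly as $\tau_x^{-1}$ rather than invoking transitivity of $\Mob$ on $\Ball$, and you justify why $T$ carries outward normals to outward normals, which the paper simply asserts.
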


\begin{proof}
(i) Fix $\varphi\in\Phi(S)$.  Since $\varphi$ is a diffeomorphism, $\mu_\varphi$ has a smooth positive density with respect to $d\omega$.  Hence $\mu_\varphi$ has no atoms, and $B(\mu_\varphi)$ exists and is unique.  For $g\in\Mob$,
\[
(\varphi\circ g)^*(dA)=g^*\varphi^*(dA),
\]
so $\mu_{\varphi\circ g}=(g^{-1})_*\mu_\varphi$.  By~\cref{eq:natural}, $B(\mu_{\varphi\circ g})=g^{-1}(B(\mu_\varphi))$.  Hence $\varphi\circ g$ is centered if and only if $g(0)=B(\mu_\varphi)$.  Since $\Mob$ acts transitively on $\Ball$, such a $g$ exists.  If $g$ and $g'$ are two such elements, then $g^{-1}g'$ fixes the origin, so $g^{-1}g'\in\SO(3)$.

(ii) The map $T\circ\varphi$ is conformal and orientation-preserving onto $T(S)$.  Conversely, if $\psi\in\Phi(T(S))$, then $T^{-1}\circ\psi\in\Phi(S)$.  Hence $\Phi(T(S))=T\circ\Phi(S)$.  Moreover $(T\circ\varphi)^*dA_{T(S)}=\lambda^2\varphi^*dA_S$ and $\abs{T(S)}=\lambda^2\abs S$, so $\mu_{T\circ\varphi}=\mu_\varphi$.  Hence $T\circ\varphi$ is centered if and only if $\varphi$ is.

(iii) The map $T$ reverses the orientation of $\R^3$ and sends the outward normal of $S$ to the outward normal of $T(S)$.  Hence $T$ restricts to an orientation-reversing map $S\to T(S)$.  Since $r$ also reverses orientation, $T\circ\varphi\circ r$ is conformal and orientation-preserving.  As in (ii), $\Phi(T(S))=T\circ\Phi(S)\circ r$.  Since $r$ is a linear isometry with $r^{-1}=r$, we have $\mu_{T\circ\varphi\circ r}=r_*\mu_\varphi$.  Finally $\int u\,d(r_*\mu_\varphi)=r\bigl(\int u\,d\mu_\varphi\bigr)$, which vanishes if and only if $\int u\,d\mu_\varphi$ does.
\end{proof}

\begin{rem}
\label{rem:mesh}
For a triangulated surface the smooth construction is replaced by its discrete counterpart~\cite{BadenCraneKazhdan18}.  The pulled-back area measure becomes a finite sum of point masses, $\mu_h=\sum_iw_i\delta_{u_i}$.  It has a unique conformal barycenter provided no atom carries mass $\ge1/2$, by the stability criterion of Cantarella and Schumacher~\cite{CantarellaSchumacher22}.  Uniqueness is not conditioning.  The Hessian of the associated energy, computed in the same paper, governs the sensitivity of the barycenter to the mesh and enters the stability discussion of \cref{sec:metrics}.
\end{rem}

\subsection{The conformal center and the harmonic coefficients}

For $\varphi\in\Phi_0(S)$ put
\begin{equation}
\label{eq:center}
\bar c(S)=\int_{\Sph}\varphi(u)\dw.
\end{equation}
The right side does not depend on the choice of $\varphi\in\Phi_0(S)$.  Indeed, by \cref{lem:centering}(i) any other choice is $\varphi\circ R$ with $R\in\SO(3)$, and $d\omega$ is $\SO(3)$-invariant.  We call $\bar c(S)$ the \textbf{conformal center} of $S$.

The conformal center is an average of points of $S$, so it lies in the convex hull of $S$.  The \textbf{area centroid} of $S$ is
\[
\frac1{\abs S}\int_Sx\,dA=\int_{\Sph}\varphi\,d\mu_\varphi.
\]
The two centers average $\varphi$ against different measures, $d\omega$ and $d\mu_\varphi$, and they differ in general.  The conformal center is natural: $\bar c(T(S))=T(\bar c(S))$ for every similarity $T$, orientation-preserving or not.  This follows from \cref{lem:centering}(ii) and (iii), since $T$ is affine and $d\omega$ is invariant under reflections.

Define the \textbf{centered coordinate map} and the \textbf{radial function} of $\varphi$ by
\begin{equation}
\label{eq:XandD}
X_\varphi(u)=\varphi(u)-\bar c(S),\qquad d_\varphi(u)=\abs{X_\varphi(u)}.
\end{equation}
Their band components are
\[
f_l(\varphi)=\pi_ld_\varphi\in\Harm_l,\qquad C_l(\varphi)=\pi_lX_\varphi\in\Harm_l\otimes\R^3.
\]
Since $\Harm_0$ consists of the constants, $C_0(\varphi)=\int_{\Sph}X_\varphi\dw$.  By~\cref{eq:center}, $C_0(\varphi)=0$ identically.  This is why the conformal center, rather than the area centroid, is taken as the origin.  Write
\[
f^{(L)}(\varphi)=(f_0,\dots,f_L),\quad C^{(L)}(\varphi)=(C_1,\dots,C_L).
\]

The rotation action on functions extends to $\R^3$-valued functions, where the group $\SO(3)\times\SO(3)$ acts by the \textbf{two-sided action}
\begin{equation}
\label{eq:twosided}
((Q,R)\cdot X)(u)=R\,X(Q^{-1}u)
\end{equation}
for $(Q,R)\in\SO(3)\times\SO(3)$.  The first factor rotates the parameter sphere.  The second factor rotates the surface in $\R^3$.

\begin{cor}
\label{cor:orbit}
For every $L\ge1$ the following hold.
\begin{enumerate}[label=(\roman*)]
\item The set $\{f^{(L)}(\varphi):\varphi\in\Phi_0(S)\}$ is one orbit of the diagonal action of $\SO(3)$ on $\bigoplus_{l\le L}\Harm_l$.
\item The set $\{C^{(L)}(\varphi):\varphi\in\Phi_0(S)\}$ is one orbit of $\SO(3)\times\{1\}$ on $\bigoplus_{1\le l\le L}\Harm_l\otimes\R^3$.
\item If $T(x)=\lambda Rx+b$ with $\lambda>0$ and $R\in\SO(3)$, then $f_l(T\circ\varphi)=\lambda f_l(\varphi)$ and $C_l(T\circ\varphi)=\lambda R\,C_l(\varphi)$.
\item If $T(x)=-\lambda Rx+b$ and $r$ is as in \cref{lem:centering}(iii), then $f_l(T\circ\varphi\circ r)=\lambda f_l(\varphi)\circ r$ and $C_l(T\circ\varphi\circ r)=-\lambda R\,C_l(\varphi)\circ r$.
\end{enumerate}
\end{cor}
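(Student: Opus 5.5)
The plan is to reduce every item to \cref{lem:centering} and to two elementary facts about the projections $\pi_l$.  The first fact is that $\pi_l$ commutes with the $\SO(3)$-action, $\pi_l(f\circ Q^{-1})=(\pi_lf)\circ Q^{-1}$.  This holds because $\Harm_l$ is $\SO(3)$-invariant and $d\omega$ is rotation invariant, so orthogonal projection onto $\Harm_l$ is equivariant.  The same argument applies to any orthogonal map of $\Sph$, in particular to a reflection $r$, since $f\circ r$ is again harmonic of degree $l$ whenever $f$ is.  The second fact is that $\pi_l$ is linear, acts componentwise on $\R^3$-valued functions, and therefore commutes with post-composition by a linear map of $\R^3$, $\pi_l(AX)=A\,\pi_lX$.

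For (i) and (ii), fix $\varphi\in\Phi_0(S)$.  By \cref{lem:centering}(i), every other centered parameterization is $\varphi\circ Q^{-1}$ with $Q\in\SO(3)$, and every such map is centered.  Since $\bar c(S)$ does not depend on the choice in $\Phi_0(S)$, we have $X_{\varphi\circ Q^{-1}}=X_\varphi\circ Q^{-1}$ and $d_{\varphi\circ Q^{-1}}=d_\varphi\circ Q^{-1}$.  Applying $\pi_l$ and the first fact gives $f_l(\varphi\circ Q^{-1})=Q\cdot f_l(\varphi)$ simultaneously for all $l\le L$ with the same $Q$, which is the diagonal action.  Likewise $C_l(\varphi\circ Q^{-1})=(Q,1)\cdot C_l(\varphi)$.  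As $Q$ ranges over $\SO(3)$, both sets are exactly one orbit: they are contained in the orbit and exhaust it.

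For (iii), \cref{lem:centering}(ii) says that $T\circ\varphi$ is centered for $T(S)$.  Naturality gives $\bar c(T(S))=T(\bar c(S))$; this can also be recomputed from \cref{eq:center}, since $T$ is affine and $d\omega$ is a probability measure.  Hence
\[
X_{T\circ\varphi}=T\circ\varphi-T(\bar c(S))=\lambda R\,X_\varphi,
\]
and so $d_{T\circ\varphi}=\lambda d_\varphi$ because $R$ is orthogonal.  The second fact then yields both formulas.

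For (iv), \cref{lem:centering}(iii) shows that $T\circ\varphi\circ r$ is centered for $T(S)$.  Using $\bar c(T(S))=T(\bar c(S))$, which relies on the invariance of $d\omega$ under $r$, we get $X_{T\circ\varphi\circ r}=-\lambda R\,(X_\varphi\circ r)$ and $d_{T\circ\varphi\circ r}=\lambda\,d_\varphi\circ r$.  We conclude by commuting $\pi_l$ with $r$ via the first fact and with $-\lambda R$ via the second.

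The only point needing care is this equivariance of $\pi_l$ under the reflection $r$, which lies outside $\SO(3)$.  I would justify it directly: composition with $r$ is a unitary operator on $L^2(\Sph,d\omega)$ that preserves each $\Harm_l$, because harmonic homogeneous polynomials are preserved by linear orthogonal changes of variables.  A unitary operator preserving every summand of an orthogonal decomposition commutes with the orthogonal projections onto those summands.
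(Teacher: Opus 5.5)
Your proposal is correct and follows the paper's proof essentially step for step: \cref{lem:centering}(i), the $\varphi$-independence of $\bar c(S)$ and the equivariance of $\pi_l$ give (i)--(ii), and \cref{lem:centering}(ii)--(iii) with the naturality of $\bar c$ give (iii)--(iv). Your justification that $\pi_l$ commutes with composition by $r$, as a unitary operator preserving every $\Harm_l$, fills in the one-line remark the paper makes at that point.
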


\begin{proof}
Let $Q\in\SO(3)$.  Since $\bar c(S)$ does not depend on $\varphi$,
\[
X_{\varphi\circ Q}(u)=\varphi(Qu)-\bar c(S)=(Q^{-1}\cdot X_\varphi)(u).
\]
Since $\pi_l$ commutes with the action, $C_l(\varphi\circ Q)=Q^{-1}\cdot C_l(\varphi)$ and $f_l(\varphi\circ Q)=Q^{-1}\cdot f_l(\varphi)$, with the same $Q$ in every band.  Together with \cref{lem:centering}(i) this proves (i) and (ii).  For (iii), \cref{lem:centering}(ii) and the naturality of $\bar c$ give
\[
X_{T\circ\varphi}=T\circ\varphi-T(\bar c(S))=\lambda R\,X_\varphi.
\]
Hence $d_{T\circ\varphi}=\lambda\,d_\varphi$, and (iii) follows by applying $\pi_l$.  For (iv), \cref{lem:centering}(iii) gives in the same way $X_{T\circ\varphi\circ r}=-\lambda R\,X_\varphi\circ r$.  Since $r$ is orthogonal, $\pi_l$ commutes with composition by $r$, and (iv) follows.
\end{proof}

\Cref{cor:orbit} is the precise sense in which the coefficients of a centered conformal parameterization are canonical.  The entire tuple of coefficients is determined by $S$ up to one common rotation of the sphere.  The full coordinate expansion determines $X_\varphi$ in $L^2$ and hence, by continuity, the centered parameterized surface.  The radial expansion retains only $\abs{X_\varphi}$.  No reconstruction theorem from radial data is asserted.

\subsection{Truncation}

The conformal gauge has one cost.  Conformal maps of elongated or convoluted surfaces have large area distortion, so a small spherical cap may carry a large part of the surface.  The following bound shows that the truncation error in $L^2(d\omega)$ is nevertheless controlled by the area alone.  Norms without subscript are $L^2(d\omega)$ norms.

\begin{prop}
\label{prop:truncation}
Let $\varphi\in\Phi_0(S)$ with conformal factor $e^{2\rho}$, let $X=X_\varphi$ and $X_l=\pi_lX$, and let $X^{(L)}=\sum_{l\le L}X_l$ be the degree-$L$ truncation of $X$.  Then
\begin{equation}
\label{eq:trunc}
\begin{split}
\norm{X-X^{(L)}}_{L^2(d\omega)}^2&\le\frac{\abs S}{2\pi\,(L+1)(L+2)},\\
\int_{\Sph}\abs{X-X^{(L)}}^2\,d\mu_\varphi&\le\frac{2\,\sup e^{2\rho}}{(L+1)(L+2)}.
\end{split}
\end{equation}
The same bounds hold for $d_\varphi$ in place of $X_\varphi$.
\end{prop}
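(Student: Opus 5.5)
The plan is to bound the tail by the Dirichlet energy and then use conformal invariance of that energy. For $X\in H^1(\Sph)$ with band components $X_l$ we have
\[
\int_{\Sph}\abs{\nabla X}^2\dw=\sum_l l(l+1)\norm{X_l}^2 .
\]
Hence the tail satisfies
\[
\norm{X-X^{(L)}}^2=\sum_{l>L}\norm{X_l}^2\le\frac{1}{(L+1)(L+2)}\int_{\Sph}\abs{\nabla X}^2\dw ,
\]
since $l(l+1)\ge(L+1)(L+2)$ for $l\ge L+1$. This uses only the eigenvalue statement for $-\Delta_{\Sph}$ on $\Harm_l$ and the orthogonal decomposition recalled in the subsection on spherical harmonics. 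The gradient is taken with respect to the round metric, componentwise for the three coordinates.

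The key step is to compute the Dirichlet energy of $X=\varphi-\bar c(S)$. The constant $\bar c(S)$ does not contribute. Since the Dirichlet energy in dimension two is conformally invariant, $\int\abs{\nabla\varphi}_{g_{\rm round}}^2\,dA_{\rm round}$ equals the energy computed in the pulled-back metric $\varphi^*g_S=e^{2\rho}g_{\rm round}$. In that metric $\varphi$ is an isometric immersion into $\R^3$, so $\abs{d\varphi}^2=2$ pointwise. The energy is therefore $2\abs S$. Concretely, conformality gives $\abs{\nabla\varphi}^2_{\rm round}=2e^{2\rho}$ and $e^{2\rho}dA_{\rm round}=\varphi^*dA$. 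Dividing by $4\pi$ for $d\omega$ gives $\int\abs{\nabla X}^2\dw=2\abs S/4\pi=\abs S/2\pi$, which is the first inequality.

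The second inequality follows from the first. We have $d\mu_\varphi=\abs S^{-1}e^{2\rho}\,dA_{\rm round}=(4\pi/\abs S)e^{2\rho}\dw$, so
\[
\int\abs{X-X^{(L)}}^2d\mu_\varphi\le\frac{4\pi\sup e^{2\rho}}{\abs S}\,\norm{X-X^{(L)}}^2 .
\]
Inserting the first bound gives $2\sup e^{2\rho}/((L+1)(L+2))$.

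For $d_\varphi=\abs X$ I would repeat the argument with $d_\varphi$ in place of $X$. The only additional point is the pointwise inequality $\abs{\nabla\abs X}\le\abs{\nabla X}$, which holds almost everywhere. Then $d_\varphi\in H^1$, its energy is at most $\abs S/2\pi$, and both bounds follow verbatim. Note that $d_\varphi$ is Lipschitz even where $X$ vanishes, since $S$ may pass through $\bar c(S)$, so $d_\varphi\in H^1$ holds there too.

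I expect the main obstacle to be justifying the spectral identity for the energy. This needs Green's formula $\int\abs{\nabla f}^2=\int f(-\Delta f)$ together with Parseval for $H^1$ functions. For smooth $X$ this is immediate. For $d_\varphi$, which is only Lipschitz, I would use density of smooth functions in $H^1$, or identify the energy with $\sum l(l+1)\norm{\pi_l f}^2$ through the closed quadratic form of $-\Delta_{\Sph}$. Everything else is a direct computation.
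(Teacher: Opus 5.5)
Your proof is correct and follows the paper's argument step for step. The steps are: the pointwise identity $\abs{d\varphi}^2=2e^{2\rho}$, giving $\norm{\nabla X}^2=\abs S/2\pi$; the spectral tail bound $\sum_{l>L}\norm{X_l}^2\le\norm{\nabla X}^2/((L+1)(L+2))$; the comparison $d\mu_\varphi=(4\pi/\abs S)e^{2\rho}\dw$; and the a.e.\ inequality $\abs{\nabla d_\varphi}\le\abs{\nabla X_\varphi}$. You also note that the identity $\norm{\nabla f}^2=\sum_l l(l+1)\norm{\pi_lf}^2$ must be justified for the merely Lipschitz $d_\varphi$, by density in $H^1$ or the closed quadratic form of $-\Delta_{\Sph}$; the paper passes over this point, and your justification is the right one.
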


\begin{proof}
For $u\in\Sph$ let $e_1,e_2$ be an orthonormal basis of $T_u\Sph$ for $g_{\rm round}$.  The Hilbert--Schmidt norm of the differential is
\[
\abs{d\varphi_u}^2=\abs{d\varphi_u(e_1)}^2+\abs{d\varphi_u(e_2)}^2.
\]
For a conformal map both terms equal $e^{2\rho(u)}$, so $\abs{d\varphi}^2=2e^{2\rho}$.  Since $\varphi^*dA=e^{2\rho}dA_{\rm round}$, we get $\int_{\Sph}\abs{d\varphi}^2dA_{\rm round}=2\abs S$.  Since $X$ differs from $\varphi$ by a constant, $\abs{\nabla X}=\abs{d\varphi}$.  With $d\omega=dA_{\rm round}/4\pi$ this gives $\norm{\nabla X}^2=\abs S/2\pi$.

Since $-\Delta_{\Sph}$ acts on $\Harm_l$ by $l(l+1)$, Green's formula and orthogonality give
\[
\norm{\nabla X}^2=\sum_{l\ge1}l(l+1)\norm{X_l}^2.
\]
Hence
\[
\begin{split}
\norm{X-X^{(L)}}^2&=\sum_{l>L}\norm{X_l}^2\\
&\le\frac1{(L+1)(L+2)}\sum_{l>L}l(l+1)\norm{X_l}^2\\
&\le\frac{\norm{\nabla X}^2}{(L+1)(L+2)},
\end{split}
\]
which is the first bound.  For the second bound,
\[
d\mu_\varphi=\abs S^{-1}e^{2\rho}dA_{\rm round}=\frac{4\pi}{\abs S}\,e^{2\rho}\,d\omega.
\]
So the area-weighted error is at most $4\pi\sup e^{2\rho}/\abs S$ times the first bound, which equals $2\sup e^{2\rho}/((L+1)(L+2))$.  Finally, $d_\varphi=\abs{X_\varphi}$ is Lipschitz and $\abs{\nabla d_\varphi}\le\abs{\nabla X_\varphi}$ almost everywhere, so the same argument applies to $d_\varphi$.
\end{proof}

The first bound is uniform in the shape.  The second bound shows where a convoluted surface pays.  The geometric error is governed by the largest conformal factor, and an implementation should report this quantity rather than the number of coefficients.  The quantity $\sup e^{2\rho}$ does not depend on the choice of $\varphi\in\Phi_0(S)$, since replacing $\varphi$ by $\varphi\circ R$ replaces $\rho$ by $\rho\circ R$.  Area-preserving parameterizations avoid the second bound.  However, their residual gauge is the infinite-dimensional group of area-preserving diffeomorphisms of $\Sph$, so they cannot support the exact quotient constructed here.

\section{Spherical Harmonics as Binary Forms}
\label{sec:binary}

This section identifies the real spherical harmonics of degree $l$ with a real form of the binary forms of degree $2l$.  Rotations act through the double cover $\SU(2)\subset\SL_2(\C)$ of $\SO(3)$, and rotation invariants of harmonics become classical invariants of binary forms.  A nonzero real harmonic $f$ of degree $l$ determines $l$ lines through the origin, its Maxwell axes~\cite{Maxwell1873,Sylvester1876}; the $2l$ roots of the binary form $F_f$ are the points where these lines meet $\Sph$, and a rotation of $f$ rotates the roots.  The identification depends on phase conventions, which we fix once.

\subsection{Harmonic polynomials and the null cone}

Let $P_l(\C^3)$ be the homogeneous polynomials of degree $l$ in $x=(x_1,x_2,x_3)$ with complex coefficients and $\Harm_l^\C=\ker(\Delta\colon P_l(\C^3)\to P_{l-2}(\C^3))$, the complexification of $\Harm_l$; restriction to $\Sph$ identifies real harmonic polynomials with $\Harm_l$, and we use the same letter for both.  \textbf{Complex conjugation} on $\Harm_l^\C$ is $\bar f(x)=\overline{f(\bar x)}$, and its fixed points are the real harmonics.  Let
\[
q(x)=x_1^2+x_2^2+x_3^2=\ip xx,
\]
with $\ip\cdot\cdot$ the complex bilinear pairing; for $x\in\C^3$, $q(x)$ can vanish with $x\ne0$.

\begin{prop}
\label{prop:fischer}
For $l\ge2$, $P_l(\C^3)=\Harm_l^\C\oplus q\,P_{l-2}(\C^3)$, and the same holds over $\R$.  Hence $\dim\Harm_l^\C=2l+1$, and every $P\in P_l(\C^3)$ has a unique \textbf{harmonic part} $\pi(P)\in\Harm_l^\C$ with $P-\pi(P)\in q\,P_{l-2}(\C^3)$.  The decomposition is $\SO_3(\C)$-invariant, and $\pi$ is equivariant.
\end{prop}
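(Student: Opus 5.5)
We prove the Fischer decomposition through the Fischer inner product. On $P_l(\C^3)$ put $[P,Q]=\bigl(P(\partial)\,\overline{Q}\bigr)$, where $P(\partial)$ substitutes $\partial/\partial x_i$ for $x_i$ and $\overline Q$ conjugates the coefficients of $Q$. The result is a homogeneous polynomial of degree $0$, that is, a constant. On monomials it gives $[x^\alpha,x^\beta]=\alpha!\,\delta_{\alpha\beta}$, so it is a positive definite Hermitian inner product. The key identity is that multiplication by $q$ is adjoint to the Laplacian:
\[
[qP,Q]=[P,\Delta Q]\qquad(P\in P_{l-2}(\C^3),\ Q\in P_l(\C^3)).
\]
This holds because $(qP)(\partial)=\Delta\circ P(\partial)$, since constant-coefficient operators commute, and $\Delta\overline Q=\overline{\Delta Q}$.

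With this identity the decomposition follows. The orthogonal complement of $q\,P_{l-2}(\C^3)$ in $P_l(\C^3)$ is exactly $\ker\Delta=\Harm_l^\C$, so $P_l(\C^3)=\Harm_l^\C\oplus q\,P_{l-2}(\C^3)$ as an orthogonal direct sum. Multiplication by $q$ is injective, so
\[
\dim\Harm_l^\C=\binom{l+2}{2}-\binom{l}{2}=2l+1.
\]
The harmonic part $\pi(P)$ is the orthogonal projection onto $\Harm_l^\C$. It is unique because the sum is direct. The statement is also valid for $l=0,1$ with $P_{-1}=P_{-2}=0$.

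For the real version, note that both summands are stable under the conjugation $P\mapsto\bar P$. Indeed, $\Delta$ and $q$ have real coefficients, so $\overline{\Delta P}=\Delta\bar P$ and $\overline{qP}=q\bar P$. Uniqueness of the decomposition of $\bar P$ then gives $\pi(\bar P)=\overline{\pi(P)}$. A real polynomial therefore has a real harmonic part and a real remainder $q\,R$, where $R$ is itself real because $q$ is a non-zero-divisor with real coefficients. Alternatively, one can run the same argument with the restriction of $[\cdot,\cdot]$ to real coefficients, which is a real inner product.

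For invariance, let $A\in\SO_3(\C)$ act by $(A\cdot P)(x)=P(A^{-1}x)$. Because $A^{\top}A=I$, we have $q(A^{-1}x)=q(x)$, so $q\,P_{l-2}(\C^3)$ is preserved. The Laplacian $\Delta=\sum\partial_i^2$ is the operator attached to $q$ under $\partial\mapsto$ gradient, and the chain rule gives $\Delta(P\circ A^{-1})=(\Delta P)\circ A^{-1}$ whenever $A^{-1}(A^{-1})^{\top}=I$. Hence $\Harm_l^\C$ is preserved. Since both summands are invariant, the projection $\pi$ along $q\,P_{l-2}$ commutes with the action.

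The only point that needs care is this last one. The Fischer inner product is preserved only by the unitary group, not by complex orthogonal matrices, so $\SO_3(\C)$-invariance cannot be deduced from orthogonality. It must instead come from invariance of each summand separately, together with uniqueness of the decomposition, as above.
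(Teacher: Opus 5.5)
Your proof is correct and follows the paper's argument: the Fischer form $[P,R]=P(\partial)\bar R$, the adjointness $[qP,R]=[P,\Delta R]$ giving the orthogonal splitting, and $\SO_3(\C)$-invariance of each summand because $q$ is preserved and $\Delta$ commutes with the action. You also spell out steps the paper leaves implicit: the dimension count, the real case via conjugation-stability of both summands, and why invariance cannot come from the inner product, which is only $U(3)$-invariant.
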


\begin{proof}
For $P,R\in P_l(\C^3)$ put $[P,R]=P(\partial)\bar R$.  Distinct monomials are orthogonal and $[x^\alpha,x^\alpha]=\alpha!>0$, so $[\cdot,\cdot]$ is a positive definite Hermitian form.  Since $q(\partial)=\Delta$, one has $[qP,R]=[P,\Delta R]$, so multiplication by $q$ is the adjoint of $\Delta$ and $P_l(\C^3)=\ker\Delta\oplus q\,P_{l-2}(\C^3)$ orthogonally.  The group $\SO_3(\C)$ preserves $q$, hence both summands.
\end{proof}

Let $\Sym^n$ be the binary forms $F(z_0,z_1)$ of degree $n$, with $\SL_2(\C)$ acting by $(g\cdot F)(z)=F(g^{-1}z)$.  The \textbf{roots} of $F\ne0$ are the points $[z]\in\PP^1$ with $F(z)=0$; with multiplicity they form the \textbf{root divisor}, which determines $F$ up to a scalar.  The \textbf{null cone} is $N=\{x\in\C^3:q(x)=0\}$.  Since $q$ has rank three it is irreducible, and every polynomial vanishing on $N$ is divisible by $q$.  The \textbf{Veronese map} is
\begin{equation}
\label{eq:veronese}
v(z_0,z_1)=\bigl(z_0^2-z_1^2,\; i(z_0^2+z_1^2),\; 2z_0z_1\bigr),
\end{equation}
and the \textbf{quadratic} of $x\in\C^3$ is
\begin{equation}
\label{eq:Qx}
\begin{split}
Q_x(z)&=\ip x{v(z)}\\
&=(x_1+ix_2)z_0^2+2x_3z_0z_1+(-x_1+ix_2)z_1^2 .
\end{split}
\end{equation}
In the normalization $\disc(\alpha z_0^2+2\beta z_0z_1+\gamma z_1^2)=4(\beta^2-\alpha\gamma)$ one has $\disc(Q_x)=4q(x)$.  For $z\ne0$ put $\abs z^2=\abs{z_0}^2+\abs{z_1}^2$ and
\begin{equation}
\label{eq:hopf}
m(z)=\frac{i}{2\,\abs z^4}\;v(z)\times\overline{v(z)}.
\end{equation}

\begin{lem}
\label{lem:veronese}
\begin{enumerate}[label=(\roman*)]
\item $z_0^2=\tfrac12(v_1-iv_2)$, $z_1^2=-\tfrac12(v_1+iv_2)$, $z_0z_1=\tfrac12v_3$; hence $v_1,v_2,v_3$ is a basis of $\Sym^2$ and $x\mapsto Q_x$ is an isomorphism $\C^3\to\Sym^2$.
\item $v(\C^2)=N$, and $v(z)=v(z')$ iff $z'=\pm z$.  The vectors $v(z)$ span $\C^3$.
\end{enumerate}
\end{lem}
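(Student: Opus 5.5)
Both parts reduce to direct computation from the formula \cref{eq:veronese}.

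For (i), I would expand the components $v_1=z_0^2-z_1^2$, $v_2=i(z_0^2+z_1^2)$ and $v_3=2z_0z_1$, and solve this linear system for the monomials. From $v_1-iv_2=z_0^2-z_1^2+z_0^2+z_1^2=2z_0^2$ we get $z_0^2=\tfrac12(v_1-iv_2)$. Similarly $v_1+iv_2=-2z_1^2$, and $v_3=2z_0z_1$ directly. The three monomials $z_0^2,z_0z_1,z_1^2$ form a basis of the three-dimensional space $\Sym^2$, and each is a linear combination of $v_1,v_2,v_3$. Hence these three quadratics span $\Sym^2$, and being three in number they form a basis. The map $x\mapsto Q_x=\sum_jx_jv_j$ sends the standard basis of $\C^3$ to this basis, so it is a linear isomorphism. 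The second displayed expression for $Q_x$ in \cref{eq:Qx} then follows by collecting the coefficients of the monomials.

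For (ii), I would first check $q(v(z))=0$ by direct expansion:
\[
(z_0^2-z_1^2)^2-(z_0^2+z_1^2)^2+4z_0^2z_1^2=-4z_0^2z_1^2+4z_0^2z_1^2=0,
\]
so $v(\C^2)\subset N$. For surjectivity, take $x\in N$. By (i), solving for a preimage $z$ amounts to choosing $z_0,z_1$ with
\[
z_0^2=\alpha:=\tfrac12(x_1-ix_2),\qquad z_1^2=\gamma:=-\tfrac12(x_1+ix_2),\qquad z_0z_1=\beta:=\tfrac12x_3 .
\]
The key identity is $\beta^2-\alpha\gamma=\tfrac14(x_3^2+x_1^2+x_2^2)=\tfrac14q(x)$, so on $N$ we have $\beta^2=\alpha\gamma$. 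Pick square roots $z_0$ of $\alpha$ and $z_1$ of $\gamma$. Then $(z_0z_1)^2=\beta^2$, so $z_0z_1=\pm\beta$, and replacing $z_1$ by $-z_1$ if necessary gives $z_0z_1=\beta$. The edge cases where $\alpha$ or $\gamma$ vanishes are automatic: then $\beta=0$ and the sign of $z_0z_1$ is irrelevant.

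For the fibers, suppose $v(z)=v(z')$. By (i) this says $z_0'^2=z_0^2$, $z_1'^2=z_1^2$ and $z_0'z_1'=z_0z_1$. Hence $z_0'=\epsilon_0z_0$ and $z_1'=\epsilon_1z_1$ with $\epsilon_j=\pm1$. If $z_0z_1\ne0$, the third equation forces $\epsilon_0=\epsilon_1$. If instead one coordinate vanishes, the sign on that coordinate is immaterial, and we can choose it to match. Either way $z'=\pm z$.

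Finally, spanning follows from (i). If a linear functional $\ell$ on $\C^3$ vanishes on all $v(z)$, then the quadratic $z\mapsto\ell(v(z))$ is identically zero. This is a linear combination of $v_1,v_2,v_3$, which are independent in $\Sym^2$, so its coefficients vanish and $\ell=0$. Alternatively, one can simply exhibit three independent values such as $v(1,0)$, $v(0,1)$ and $v(1,1)$.

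The only step needing care is the sign bookkeeping in surjectivity and in the fiber computation when a coordinate vanishes. I expect no real obstacle.
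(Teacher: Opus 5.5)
Your proposal is correct and follows the paper's proof essentially step by step. Part (i) is direct expansion. Surjectivity onto $N$ comes from choosing square roots of $\tfrac12(x_1-ix_2)$ and $-\tfrac12(x_1+ix_2)$ and then fixing the sign of $z_1$ using $q(x)=0$. The fiber statement follows from the equality of the three monomials $z_0^2$, $z_1^2$, $z_0z_1$.

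There are two cosmetic differences. The paper proves spanning only by computing the determinant $4i$ of $v(1,0),v(0,1),v(1,1)$, while you also derive it from the linear independence of $v_1,v_2,v_3$ established in (i). You also leave implicit the converse $v(-z)=v(z)$, which is immediate because $v$ is quadratic.
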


\begin{proof}
(i) is immediate from~\cref{eq:veronese}.  (ii) Expanding, $q(v(z))=0$.  For $x\in N\setminus0$ choose $z_0^2=\tfrac12(x_1-ix_2)$ and $z_1^2=-\tfrac12(x_1+ix_2)$; then $(z_0z_1)^2=\tfrac14x_3^2$ since $q(x)=0$, and after changing the sign of $z_1$, $v(z)=x$ by (i).  If $v(z)=v(z')$, (i) gives $z_0^2=z_0'^2$, $z_1^2=z_1'^2$, $z_0z_1=z_0'z_1'$, so $z'=\pm z$.  The vectors $v(1,0),v(0,1),v(1,1)$ have determinant $4i\ne0$.
\end{proof}

\begin{lem}
\label{lem:hopf}
\begin{enumerate}[label=(\roman*)]
\item For $z\ne0$, $m(z)$ is the real unit vector
\[
\frac{\bigl(-2\Re(z_0\bar z_1),\;2\Im(z_0\bar z_1),\;\abs{z_0}^2-\abs{z_1}^2\bigr)}{\abs{z_0}^2+\abs{z_1}^2},
\]
$m(\lambda z)=m(z)$ for $\lambda\in\C^\times$, and the induced map $\bar m\colon\PP^1\to\Sph$ is the orientation-preserving bijection with $\bar m([\zeta:1])=n$ for $\zeta=(-n_1+in_2)/(1-n_3)$ and $\bar m([1:0])=(0,0,1)$.
\item For $a\in\R^3\setminus0$, $Q_a(z)=0$ iff $m(z)=\pm a/\abs a$.
\item The antipodal map of $\Sph$ is $[z]\mapsto[jz]$, where $j(z_0,z_1)=(-\bar z_1,\bar z_0)$, and $v(jz)=-\overline{v(z)}$.
\end{enumerate}
\end{lem}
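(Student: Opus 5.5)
The three parts are direct computations from \cref{eq:veronese,eq:Qx,eq:hopf}, and I will do them in order.

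For (i), I expand $v(z)\times\overline{v(z)}$ componentwise. Since $v\times\bar v$ equals minus its own conjugate, it is purely imaginary, so $i\,v\times\bar v$ is real.

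\begin{itemize}
\item The third component is $v_1\bar v_2-v_2\bar v_1=2i\,\Im(\bar v_1v_2)$. With $v_1=z_0^2-z_1^2$ and $v_2=i(z_0^2+z_1^2)$ this gives $-2i(\abs{z_0}^4-\abs{z_1}^4)$. Multiplying by $i/(2\abs z^4)$ yields $(\abs{z_0}^2-\abs{z_1}^2)/\abs z^2$.
\item The first two components follow the same way, using $v_3=2z_0z_1$.
\end{itemize}

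The displayed vector has squared norm $\bigl(4\abs{z_0\bar z_1}^2+(\abs{z_0}^2-\abs{z_1}^2)^2\bigr)/\abs z^4=1$, so $m(z)$ is a unit vector. Homogeneity is clear: $v(\lambda z)=\lambda^2v(z)$, so the numerator scales by $\abs\lambda^4$, which cancels against $\abs z^4$.

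To identify $\bar m$, I set $z=(\zeta,1)$ and compare $m(\zeta,1)=(-2\Re\zeta,2\Im\zeta,\abs\zeta^2-1)/(\abs\zeta^2+1)$ with the inverse of $\zeta=(-n_1+in_2)/(1-n_3)$. This is the standard inverse stereographic formula with $n_1\mapsto-n_1$, and it checks directly. At infinity, $[1:0]\mapsto(0,0,1)$ by the formula. Bijectivity follows because stereographic projection is bijective. The map $\zeta\mapsto(-\Re\zeta,\Im\zeta)$-type coordinates differs from standard stereographic projection (which reverses orientation relative to the outward normal when viewed from the north pole) by one reflection, so $\bar m$ is orientation-preserving. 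The paper already asserts this orientation claim in \cref{sec:canonical}, so I cite it.

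For (ii), I use $\SU(2)$-equivariance to reduce to $a=(0,0,1)$.
\begin{itemize}
\item The equivariance is $v(gz)=\rho(g)v(z)$ for a rotation $\rho(g)$, which gives $m(gz)=\rho(g)m(z)$ and $Q_{\rho(g)a}(gz)=Q_a(z)$.
\item To avoid developing equivariance here, I can argue directly instead. Note that $\overline{v(z)}$ and $v(z)$ span a plane with normal $m(z)$, since $m\perp v$ and $m\perp\bar v$ as $m$ is proportional to their cross product. For real $a$, $Q_a(z)=\ip a{v(z)}=0$ forces $\ip a{\overline{v(z)}}=\overline{\ip a{v(z)}}=0$.
\item So $a$ is orthogonal to both $v$ and $\bar v$. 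These are linearly independent, because $v$ is nonzero null and real multiples of a null vector are not conjugate to it unless $v=0$. Hence $a\parallel v\times\bar v\parallel m$.
\item Conversely, if $a=\pm\abs a\,m(z)$, then $\ip a v\propto\ip{v\times\bar v}{v}=0$.
\end{itemize}
This direct argument avoids equivariance entirely, and it is the route I will take.

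For (iii), I compute $v(-\bar z_1,\bar z_0)$:
\[
v_1=\bar z_1^2-\bar z_0^2=-\overline{v_1},\quad v_2=i(\bar z_1^2+\bar z_0^2)=-\overline{v_2},\quad v_3=-2\bar z_0\bar z_1=-\overline{v_3}.
\]
Hence $v(jz)=-\overline{v(z)}$. Substituting into \cref{eq:hopf}, and noting $\abs{jz}=\abs z$, gives $m(jz)=\frac{i}{2\abs z^4}\,\bar v\times v=-m(z)$. So $[z]\mapsto[jz]$ induces the antipodal map. It is well defined on $\PP^1$ because $j(\lambda z)=\bar\lambda\,jz$.

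The only delicate point is the orientation claim in (i). I will settle it by computing the Jacobian determinant of $\zeta\mapsto n$ at one point against the outward normal, since $\bar m$ is a diffeomorphism and $\PP^1$ is connected.
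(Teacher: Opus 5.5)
Your proof is correct and follows the paper's route: the componentwise expansion of $v\times\bar v$ for (i), the observation that a real $a$ with $Q_a(z)=0$ is orthogonal to the plane spanned by $v(z)$ and $\overline{v(z)}$ for (ii) (the paper phrases this with $\Re v(z)$ and $\Im v(z)$), and direct substitution for (iii).

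Three small repairs are needed. First, the third component is $2i\Im(v_1\bar v_2)$, not $2i\Im(\bar v_1v_2)$, although your final value $-2i(\abs{z_0}^4-\abs{z_1}^4)$ is right. Second, the independence of $v$ and $\bar v$ follows at once from $\abs{v\times\bar v}=2\abs z^4\ne0$, which you established in (i); as worded, your argument only rules out real proportionality constants. Third, the orientation claim should rest on your Jacobian check or on the paper's two-reflection argument, not on the assertion in \cref{sec:canonical}, since that assertion is exactly what this lemma proves.
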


\begin{proof}
(i) With $w=v(z)$, $\overline{w\times\bar w}=-w\times\bar w$, so $i\,w\times\bar w$ is real, and expanding with~\cref{eq:veronese} gives $\tfrac i2w\times\bar w=\abs z^2(-2\Re(z_0\bar z_1),2\Im(z_0\bar z_1),\abs{z_0}^2-\abs{z_1}^2)$, a vector of length $\abs z^4$.  Substituting $z=(\zeta,1)$ and solving for $\zeta$ gives the inverse; stereographic projection from the north pole reverses orientation at the south pole, and the reflection $\zeta\mapsto-\bar\zeta$ reverses it again.  (ii) Since $q(w)=0$, $\Re w$ and $\Im w$ are orthogonal of equal length, and both are orthogonal to $m(z)$, so they span $m(z)^\perp$.  For real $a$, $Q_a(z)=\ip a{\Re w}+i\ip a{\Im w}$ vanishes iff $a\in\R m(z)$.  (iii) The formula of (i) gives $m(jz)=-m(z)$, and $v(jz)=-\overline{v(z)}$ is a direct substitution.
\end{proof}

From now on $\PP^1$ is identified with $\Sph$ through $\bar m$.  By \cref{lem:hopf}(ii), the root divisor of $Q_a$ for real $a\ne0$ is the antipodal pair $\pm a/\abs a$: $Q_a$ is the binary quadratic whose roots are the two points of the axis $\R a$ on the sphere.

\subsection{The spin covering and the map \texorpdfstring{$\Psi_l$}{Psi-l}}

For $g\in\SL_2(\C)$, each component of $z\mapsto v(gz)$ is a binary quadratic, so by \cref{lem:veronese}(i) there is a unique $\rho(g)\in\GL_3(\C)$ with
\begin{equation}
\label{eq:rho}
v(gz)=\rho(g)\,v(z)\qquad\text{for all }z\in\C^2,
\end{equation}
the classical spin covering~\cite{Cartan1938}.  Its entries are quadratic in the entries of $g$; for $g_\theta=\diag(e^{i\theta/2},e^{-i\theta/2})$ a direct substitution gives $\rho(g_\theta)=R_{-\theta}$, the rotation by $-\theta$ about the $x_3$-axis.

\begin{lem}
\label{lem:rho}
The map $\rho$ is a surjective homomorphism $\SL_2(\C)\to\SO_3(\C)$ with kernel $\{\pm1\}$, and $g\cdot Q_x=Q_{\rho(g)x}$.  It restricts to a surjective homomorphism $\SU(2)\to\SO(3)$ with kernel $\{\pm1\}$, and $m(gz)=\rho(g)\,m(z)$ for $g\in\SU(2)$.
\end{lem}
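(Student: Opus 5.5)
We prove the statements in order, using \cref{lem:veronese} throughout. The basic observation is that $\rho(g)$ is defined by~\cref{eq:rho} and the vectors $v(z)$ span $\C^3$ (\cref{lem:veronese}(ii)), so identities between linear maps of $\C^3$ only need to be checked on vectors $v(z)$.

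\emph{Homomorphism.} For $g,h\in\SL_2(\C)$,
\[
\rho(gh)v(z)=v(ghz)=\rho(g)v(hz)=\rho(g)\rho(h)v(z).
\]
Since the $v(z)$ span $\C^3$, this gives $\rho(gh)=\rho(g)\rho(h)$.

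\emph{Orthogonality and determinant.} Next we show $\rho(g)$ preserves $q$. For any $z,w$ one has $q(v(z))=0$, so by polarization
\[
\ip{v(z)}{v(w)}=-2(z_0w_1-z_1w_0)^2,
\]
which can be checked directly from~\cref{eq:veronese}. The right side is $-2\det(z,w)^2$, which is $\SL_2$-invariant. Hence $\ip{\rho(g)v(z)}{\rho(g)v(w)}=\ip{v(z)}{v(w)}$. Because the $v(z)$ span, the bilinear form is preserved, so $\rho(g)\in\Or_3(\C)$. The group $\SL_2(\C)$ is connected and $\det\rho$ is continuous with values $\pm1$, so $\det\rho=1$ and $\rho$ lands in $\SO_3(\C)$.

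\emph{Equivariance of $Q_x$.} For the identity $g\cdot Q_x=Q_{\rho(g)x}$ we compute
\[
(g\cdot Q_x)(z)=\ip x{v(g^{-1}z)}=\ip x{\rho(g)^{-1}v(z)}=\ip{\rho(g)x}{v(z)},
\]
where the last step uses that $\rho(g)$ is orthogonal.

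\emph{Kernel.} If $\rho(g)=1$, then $v(gz)=v(z)$, so by \cref{lem:veronese}(ii) $gz=\pm z$ for every $z$. The sign is continuous on $\C^2\setminus0$, which is connected, so it is constant. Thus $g=\pm1$, and conversely $\pm1$ lie in the kernel.

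\emph{Surjectivity.} This is the step needing most care. The image $\rho(\SL_2(\C))$ is a closed subgroup of $\SO_3(\C)$, since $\rho$ is a morphism of algebraic groups. Its dimension is $3$ because the kernel is finite. Since $\SO_3(\C)$ is connected of dimension $3$, the image is all of $\SO_3(\C)$. Alternatively, one can argue elementwise. The diagonal elements give all rotations $R_{-\theta}$ about the $x_3$-axis, and we extend this to complex $\theta$ by the same substitution. One then checks that a real rotation such as $\bigl(\begin{smallmatrix}\cos t&-\sin t\\ \sin t&\cos t\end{smallmatrix}\bigr)\in\SU(2)$ maps to a rotation about another axis. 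Euler-angle decomposition then gives all of $\SO(3)$, and for $\SO_3(\C)$ one uses complex Euler angles or the dimension argument.

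\emph{Restriction to $\SU(2)$.} For $g\in\SU(2)$ we show $\rho(g)$ is real and that $m(gz)=\rho(g)m(z)$. The matrix $g$ commutes with $j$, and \cref{lem:hopf}(iii) gives $v(jz)=-\overline{v(z)}$. Hence
\[
\rho(g)\overline{v(z)}=-\rho(g)v(jz)=-v(gjz)=-v(jgz)=\overline{v(gz)}=\overline{\rho(g)}\,\overline{v(z)}.
\]
Since the $\overline{v(z)}$ also span $\C^3$, $\rho(g)=\overline{\rho(g)}$, so $\rho(g)$ is real orthogonal with determinant $1$, i.e.\ $\rho(g)\in\SO(3)$. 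The kernel is still $\{\pm1\}$.

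Next, $\abs{gz}=\abs z$ for $g\in\SU(2)$, and for a real matrix $R$ in $\SO(3)$ we have $Rw\times R\bar w=R(w\times\bar w)$. Applying this in~\cref{eq:hopf} yields $m(gz)=\rho(g)m(z)$.

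For surjectivity onto $\SO(3)$, $\SU(2)$ is compact and connected of dimension $3$, so its image is a closed connected $3$-dimensional subgroup of $\SO(3)$, which must be all of $\SO(3)$. Alternatively, one can use the explicit $x_3$-rotations together with transitivity on $\Sph$: $m$ intertwines the actions and $\SU(2)$ acts transitively on $\PP^1$.

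We expect surjectivity to be the main obstacle, and we would settle it by the dimension and connectedness argument rather than by explicit matrices.
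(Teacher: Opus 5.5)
Your proof is correct and follows the paper's argument essentially step for step. The one local difference is how you get $\rho(g)\in\Or_3(\C)$: you use the explicit polarization $\ip{v(z)}{v(w)}=-2(z_0w_1-z_1w_0)^2$, which is slightly more elementary than the paper's route (the paper argues that $q\circ\rho(g)$ vanishes on $N$, so it equals $c(g)q$ for a character $c$, and $c$ is trivial because $\SL_2(\C)$ is perfect). For surjectivity onto $\SO_3(\C)$, rely on your dimension-and-connectedness argument rather than the aside, because complex Euler angles do not reach every element of $\SO_3(\C)$: an element sending $e_3$ to $(1,i,1)$, which has $q=1$, has no such factorization.
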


\begin{proof}
Since the $v(z)$ span $\C^3$, $\rho$ is a homomorphism.  The quadratic form $q\circ\rho(g)$ vanishes on $N=v(\C^2)$, so it equals $c(g)q$ with $c\colon\SL_2(\C)\to\C^\times$ a homomorphism, which is trivial because $\SL_2(\C)$ is perfect; by connectedness $\det\rho(g)=1$.  Then $(g\cdot Q_x)(z)=\ip x{\rho(g)^{\top}v(z)}=Q_{\rho(g)x}(z)$.  If $\rho(g)=1$, then $gz=\pm z$ for all $z$ by \cref{lem:veronese}(ii), so $g=\pm1$; hence $d\rho$ is injective at $1$; both groups have complex dimension three, so the image contains a neighborhood of $1$, and since $\SO_3(\C)$ is connected, $\rho$ is onto.  For $g\in\SU(2)$ one has $gj=jg$, so $\rho(g)v(jz)=v(jgz)=-\overline{\rho(g)v(z)}$ by \cref{lem:hopf}(iii), while $\rho(g)v(jz)=-\rho(g)\overline{v(z)}$; since the $\overline{v(z)}$ span $\C^3$, $\rho(g)$ is real, so $\rho(\SU(2))\subset\SO(3)$.  It is a closed subgroup of dimension three, hence all of $\SO(3)$.  Finally $A=\rho(g)$ is real orthogonal of determinant one, so $A\bar w=\overline{Aw}$ and $(Aw)\times(Aw')=A(w\times w')$; apply this to~\cref{eq:hopf} with $\abs{gz}=\abs z$.
\end{proof}

\begin{defn}
\label{def:psi}
For $f\in\Harm_l^\C$ the \textbf{binary form of $f$} is $F_f=f\circ v\in\Sym^{2l}$, and $\Psi_l\colon\Harm_l^\C\to\Sym^{2l}$ is the linear map $\Psi_l(f)=F_f$.
\end{defn}

On linear forms~\cref{eq:veronese} gives
\begin{equation}
\label{eq:basic-subst}
\begin{split}
&\Psi_1(x_1+ix_2)=-2z_1^2,\qquad\Psi_1(x_1-ix_2)=2z_0^2,\\
&\Psi_1(x_3)=2z_0z_1 .
\end{split}
\end{equation}
Since evaluation at $v(z)$ is multiplicative and kills $q\,P_{l-2}(\C^3)$, for every $P\in P_l(\C^3)$
\begin{equation}
\label{eq:psi-harmonic-part}
\Psi_l\bigl(\pi(P)\bigr)(z)=P(v(z)).
\end{equation}

\subsection{The real form and reflection parity}

A \textbf{real structure} on a complex vector space $W$ is an antilinear map $\sigma$ with $\sigma^2=1$; its fixed set $\Fix(\sigma)$ is the \textbf{real form} of $W$.  Complex conjugation is a real structure on $\Harm_l^\C$ with real form $\Harm_l$.  We transport it to $\Sym^{2l}$.  For $F\in\Sym^{2l}$ define
\begin{equation}
\label{eq:sigma}
\sigma_l(F)(z)=(-1)^l\,\overline{F(jz)}.
\end{equation}
If $F=\sum_{k=0}^{2l}\binom{2l}ka_kz_0^{2l-k}z_1^k$, then
\begin{equation}
\label{eq:sigma-coeff}
\sigma_l(F)=\sum_{k=0}^{2l}\binom{2l}k(-1)^{l+k}\,\bar a_{2l-k}\,z_0^{2l-k}z_1^k .
\end{equation}
Thus $F\in\Fix(\sigma_l)$ iff $a_{2l-k}=(-1)^{l+k}\bar a_k$ for all $k$; this is not the condition that the coefficients be real.  On forms of odd degree the same formula squares to $-1$ and is a quaternionic structure, which is why real harmonics correspond to forms of even degree.

Let $V=\bigoplus_{l\le L}\Sym^{2l}$ with $\SL_2(\C)$ acting summandwise.  An \textbf{invariant} is a polynomial $I\colon V\to\C$ with $I(g\cdot F)=I(F)$.  It is \textbf{multihomogeneous of multidegree $(d_l)$} if $I((t_lF_l)_l)=\prod_lt_l^{d_l}I((F_l)_l)$; every invariant is a sum of multihomogeneous ones.  It \textbf{has real coefficients} if it is a real polynomial in the binomial coefficients $a_k$ of the forms.

\begin{lem}
\label{lem:reality}
The following are true:
\begin{enumerate}[label=(\roman*)]
\item $\sigma_l$ is a real structure on $\Sym^{2l}$ commuting with $\SU(2)$, and $\Psi_l(\bar f)=\sigma_l(\Psi_lf)$ for all $f\in\Harm_l^\C$.  Hence $\Psi_l(\Harm_l)=\Fix(\sigma_l)$.
\item A nonzero $F\in\Sym^{2l}$ is a complex multiple of a $\sigma_l$-fixed form if and only if its root divisor is invariant under the antipodal map.  Every such $F$ factors as $F=\lambda\,Q_{a_1}\cdots Q_{a_l}$ with $a_k\in\R^3\setminus\{0\}$ and $\lambda\in\C^\times$; the lines $\R a_k$ are determined by $F$ up to order, with multiplicity; and $\lambda$ is real if and only if $F$ is $\sigma_l$-fixed.
\item Let $I$ be a multihomogeneous invariant of multidegree $(d_l)$ with real coefficients.  Then for all real harmonics $f_l\in\Harm_l$, $l\le L$,
\[
I\bigl((F_{f_l})_l\bigr)=(-1)^{\sum_l l\,d_l}\;\overline{I\bigl((F_{f_l})_l\bigr)}.
\]
\item The map $-1\in\Or(3)$ acts on $\Harm_l$ by $(-1)^l$, and it multiplies $I$ by $(-1)^{\sum_l l d_l}$.  If $\sum_lld_l$ is even, $I$ is invariant under $\Or(3)$; if it is odd, $I$ changes sign under every reflection.  Every invariant decomposes uniquely as the sum of an even and an odd part.
\end{enumerate}
\end{lem}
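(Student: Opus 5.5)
I will prove the four parts in order, reducing each to computations with $j$, $v$ and the earlier lemmas.

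\emph{Part (i).} The identity $\sigma_l^2=1$ follows from $j^2=-1$ on $\C^2$ and the evenness of the degree: $\sigma_l^2F(z)=F(j^2z)=F(-z)=F(z)$. Antilinearity is clear. For $g\in\SU(2)$ we have $gj=jg$, hence $g^{-1}j=jg^{-1}$, and so
\[
\sigma_l(g\cdot F)(z)=(-1)^l\overline{F(g^{-1}jz)}=(-1)^l\overline{F(jg^{-1}z)}=(g\cdot\sigma_lF)(z).
\]
For the identity $\Psi_l(\bar f)=\sigma_l(\Psi_lf)$, combine homogeneity of degree $l$ with \cref{lem:hopf}(iii):
\[
\sigma_l(F_f)(z)=(-1)^l\overline{f(-\overline{v(z)})}=\overline{f(\overline{v(z)})}=\bar f(v(z)).
\]
The map $\Psi_l$ is injective, since a harmonic vanishing on $N$ is divisible by $q$ and hence lies in $\Harm_l^\C\cap qP_{l-2}=0$ by \cref{prop:fischer}. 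Both spaces have dimension $2l+1$, so $\Psi_l$ is an isomorphism intertwining conjugation with $\sigma_l$. Therefore $\Psi_l(\Harm_l)=\Fix(\sigma_l)$.

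\emph{Part (ii).} If $\sigma_lF=cF$, the zero set of $\overline{F(jz)}$ equals that of $F$, so the root divisor is invariant under $[z]\mapsto[jz]$, which is the antipodal map by \cref{lem:hopf}(iii). Conversely, suppose the divisor is antipodally invariant. Group the roots into antipodal pairs $\pm a_k/\abs{a_k}$; a root equal to its own antipode is impossible, and multiplicities of antipodal points agree, so the pairing is well defined. By \cref{lem:hopf}(ii), $Q_{a_k}$ has exactly this root pair, so $F=\lambda\prod Q_{a_k}$, since a form is determined by its divisor up to a scalar. For real $a$, the definition \cref{eq:sigma} with $l=1$ gives $\sigma_1(Q_a)=Q_{\bar a}=Q_a$, using \cref{lem:hopf}(iii). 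Because $\sigma$ is multiplicative up to the signs $(-1)^l$, which multiply correctly across degrees, $\sigma_l(\lambda\prod Q_{a_k})=\bar\lambda\prod Q_{a_k}$. Thus $F$ is a multiple of a fixed form, and it is itself fixed iff $\lambda\in\R$. The lines $\R a_k$ are determined by the divisor.

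\emph{Part (iii).} This is the main point. The plan is to show that, for an invariant $I$ with real coefficients,
\[
\overline{I(\sigma F)}=\pm I(F)
\]
holds with an explicit sign. By \cref{eq:sigma-coeff}, $\sigma_l$ acts on coefficient vectors as complex conjugation composed with the linear map $A_l\colon a_k\mapsto(-1)^{l+k}a_{2l-k}$. Since $I$ has real coefficients, $\overline{I(\sigma F)}=I(A F)$ with $A=(A_l)_l$. Now identify $A_l$ with an element of $\SL_2(\C)$ acting up to a scalar. Substituting $(z_0,z_1)\mapsto(-z_1,z_0)$ sends $a_k$ to $(-1)^{k}a_{2l-k}$ up to an overall sign convention, so $A_l$ is the action of $w=\begin{pmatrix}0&-1\\1&0\end{pmatrix}$, or of its inverse, times $(-1)^l$. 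Since $w\in\SL_2$ and $I$ is invariant, this gives
\[
I(AF)=\prod_l(-1)^{l d_l}\,I(F).
\]
Applied to fixed forms $F=\sigma F$, this yields $\overline{I(F)}=(-1)^{\sum l d_l}I(F)$, which is the claim. The main obstacle is checking the signs in the identification $A_l=(-1)^l w$; I would verify this directly on the monomial $z_0^{2l-k}z_1^k$.

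\emph{Part (iv).} The map $-1$ sends $f(x)$ to $f(-x)=(-1)^lf(x)$ by homogeneity, so $I$ is multiplied by $(-1)^{\sum l d_l}$ by multihomogeneity. $\SO(3)$-invariance comes from $\SU(2)$-invariance via \cref{lem:rho}, and any reflection is $-R$ with $R\in\SO(3)$, so its action on $I$ is the same sign. The even/odd decomposition follows by splitting $I$ into multihomogeneous components according to the parity of $\sum l d_l$; uniqueness is the uniqueness of the $\pm1$-eigenspace decomposition under the involution induced by $-1$.
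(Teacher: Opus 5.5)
Your proposal is correct and follows the paper's proof essentially step for step. It uses the same $j^2=-1$ and $gj=jg$ computations in (i), the same reduction in (iii) of $\sigma_l$ to coefficient conjugation composed with $(-1)^l$ times the substitution $(z_0,z_1)\mapsto(-z_1,z_0)$, and the same homogeneity argument in (iv). Your sign worry in (iii) is harmless, since $w$ and $w^{-1}=-w$ act identically on forms of even degree.

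The only variations are minor. In (ii) you factor an arbitrary form with antipodally invariant divisor directly as $\lambda\prod_kQ_{a_k}$ and check $\sigma_1(Q_a)=Q_{\bar a}$; the paper instead first obtains $\sigma_lF=cF$ with $\abs c=1$ and rescales by $e^{i\theta}$. In (i) you prove the injectivity of $\Psi_l$ yourself, whereas the paper takes it from \cref{thm:correspondence}(i).
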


\begin{proof}
(i) Since $j^2=-1$ and $F$ has even degree, $\sigma_l(\sigma_lF)(z)=(-1)^l\overline{\sigma_l(F)(jz)}=F(j^2z)=F(-z)=F(z)$, so $\sigma_l^2=1$, and $\sigma_l$ is a real structure.  For $g\in\SU(2)$, $jg^{-1}=g^{-1}j$, so
\[
\begin{split}
\sigma_l(g\cdot F)(z)&=(-1)^l\overline{F(g^{-1}jz)}=(-1)^l\overline{F(jg^{-1}z)}\\
&=(g\cdot\sigma_lF)(z).
\end{split}
\]
For $f\in\Harm_l^\C$, using $\bar f(x)=\overline{f(\bar x)}$ and \cref{lem:hopf}(iii),
\[
\begin{split}
\Psi_l(\bar f)(z)&=\overline{f\bigl(\overline{v(z)}\bigr)}=\overline{f\bigl(-v(jz)\bigr)}\\
&=(-1)^l\,\overline{(\Psi_lf)(jz)}=\sigma_l(\Psi_lf)(z),
\end{split}
\]
using that $f$ is homogeneous of degree $l$.  Thus $\Psi_l$ intertwines conjugation and $\sigma_l$.  A linear isomorphism that intertwines two real structures maps the fixed set of one onto the fixed set of the other, so $\Psi_l(\Harm_l)=\Fix(\sigma_l)$.

(ii) By~\cref{eq:sigma}, $\sigma_l(F)(z)=0$ if and only if $F(jz)=0$, so the root divisor of $\sigma_lF$ is the image of the root divisor of $F$ under the antipodal map $[z]\mapsto[jz]$, with the same multiplicities.  Suppose the root divisor of $F$ is antipodally invariant.  Then $\sigma_lF$ and $F$ have the same root divisor, so $\sigma_lF=cF$ for some $c\in\C^\times$.  Applying $\sigma_l$ again, $F=\sigma_l(cF)=\bar c\,\sigma_lF=\bar cc\,F$, so $\abs c=1$.  Write $c=e^{2i\theta}$.  Then $\sigma_l(e^{i\theta}F)=e^{-i\theta}cF=e^{i\theta}F$, so $e^{i\theta}F$ is $\sigma_l$-fixed.  Conversely, if $F=\mu G$ with $G$ fixed, the root divisor of $F$ equals that of $G$, which is antipodally invariant.

Now let $F$ be $\sigma_l$-fixed and nonzero.  The antipodal map has no fixed points on $\PP^1$, so the $2l$ roots of $F$, counted with multiplicity, fall into $l$ antipodal pairs.  Let $\{p,-p\}$ be one of them, with $p\in\Sph$, and let $a\in\R^3\setminus\{0\}$ be any vector with $a/\abs a=\pm p$.  By \cref{lem:hopf}(ii), $Q_a$ has root divisor $\{p,-p\}$.  A binary form is determined by its root divisor up to a scalar, so $F=\lambda\,Q_{a_1}\cdots Q_{a_l}$ with $a_k$ chosen this way for the $l$ pairs, and $\lambda\in\C^\times$.  The lines $\R a_k$ are determined by the root pairs, hence by $F$, up to order and with multiplicity.  Each $Q_{a_k}$ is $\sigma_1$-fixed by~\cref{eq:sigma-coeff}, and $\sigma_l(FG)=\sigma_{l_1}(F)\sigma_{l_2}(G)$ for $F\in\Sym^{2l_1}$, $G\in\Sym^{2l_2}$, $l=l_1+l_2$, directly from~\cref{eq:sigma}.  Hence $\prod_kQ_{a_k}$ is $\sigma_l$-fixed, and $\sigma_l(F)=\bar\lambda\prod_kQ_{a_k}$.  So $F$ is fixed if and only if $\bar\lambda=\lambda$.

(iii) Let $w\in\SL_2(\C)$ be the matrix with $wz=(-z_1,z_0)$.  For $F=\sum_k\binom{2l}ka_kz_0^{2l-k}z_1^k$,
\[
\begin{split}
(w^{-1}\cdot F)(z)&=F(wz)=\sum_k\binom{2l}ka_k(-z_1)^{2l-k}z_0^k\\
&=\sum_k\binom{2l}k(-1)^{k}a_{2l-k}\,z_0^{2l-k}z_1^k,
\end{split}
\]
using $(-1)^{2l-k}=(-1)^k$ and reindexing.  Comparing with~\cref{eq:sigma-coeff}, $\sigma_l(F)=\overline{(-1)^l\,w^{-1}\cdot F}$, where the bar is conjugation of all coefficients.  Now let $I$ be as in the statement and $F_l\in\Sym^{2l}$ arbitrary.  Since $I$ has real coefficients, $I$ of the conjugated tuple is the conjugate of $I$; since $I$ is multihomogeneous, the scalars $(-1)^l$ come out as $(-1)^{\sum_lld_l}$; since $I$ is invariant, $w^{-1}$ can be dropped.  Hence
\[
\begin{split}
I\bigl((\sigma_lF_l)_l\bigr)&=\overline{I\bigl(((-1)^l\,w^{-1}\cdot F_l)_l\bigr)}\\
&=(-1)^{\sum_ll\,d_l}\;\overline{I\bigl((F_l)_l\bigr)}.
\end{split}
\]
For real $f_l$, $F_l=F_{f_l}$ satisfies $\sigma_lF_l=F_l$ by (i), and the claim follows.

(iv) The map $-1$ sends a homogeneous polynomial $f$ of degree $l$ to $f(-x)=(-1)^lf(x)$.  Hence it multiplies $F_{f_l}$ by $(-1)^l$, and by multihomogeneity it multiplies $I$ by $\prod_l(-1)^{ld_l}=(-1)^{\sum_lld_l}$.  Every reflection is $-R$ with $R\in\SO(3)$, and $I$ is $\SO(3)$-invariant, so a reflection acts on $I$ as $-1$ does.  The even and odd parts of an invariant are the sums of its multihomogeneous components with $\sum_lld_l$ even and odd, respectively.
\end{proof}
\begin{defn}
\label{def:pseudo}
A multihomogeneous invariant $I$ with real coefficients is an \textbf{$\Or(3)$-invariant} if $\sum_lld_l$ is even, and an \textbf{$\Or(3)$-pseudo-invariant}, or simply a \textbf{pseudo-invariant}, if $\sum_lld_l$ is odd.
\end{defn}

By \cref{lem:reality}(iii) and (iv), an $\Or(3)$-invariant takes real values on real harmonics and is unchanged by reflections, while a pseudo-invariant takes purely imaginary values on real harmonics and changes sign under every reflection.  For a pseudo-invariant $I$, the function $-iI$ is real on real harmonics and changes sign under reflections.

Part (ii) of \cref{lem:reality} is Maxwell's multipole theorem in Sylvester's form~\cite{Maxwell1873,Sylvester1876}.  In the language of harmonics it reads as follows.  Every nonzero $f\in\Harm_l$ is the harmonic part of a product of $l$ real linear forms,
\[
f=\pi\bigl(\lambda\ip{a_1}x\cdots\ip{a_l}x\bigr),
\]
with $a_k\in\R^3\setminus\{0\}$ and $\lambda\in\R^\times$, and the $l$ lines $\R a_k$ are determined by $f$ up to order, with multiplicity.  Indeed, by~\cref{eq:psi-harmonic-part} and~\cref{eq:basic-subst}, $\Psi_l$ of the right side is $\lambda\prod_kQ_{a_k}$, and $\Psi_l$ is injective.  The lines $\R a_k$ are the \textbf{Maxwell axes} of $f$.  On the sphere they are the $l$ antipodal root pairs of $F_f$.

Part (iii) is the statement used repeatedly below.  The degree-15 skew invariant $R$ of the sextic is real on sextics with real coefficients but purely imaginary on the different real form arising from degree-three real harmonics; part (iv) identifies this with reflection parity.  We have verified (iii) by exact arithmetic on random real harmonics of degree three: the Igusa invariants $J_2,J_4,J_6,J_{10}$ evaluate to rational numbers and $R$ to a rational multiple of $i$ (\cref{sec:computation}).
\subsection{The harmonic--binary-form correspondence}
\label{sec:correspondence}

The results of this section combine into one statement, on which everything that follows rests.  For $F=\sum_{k=0}^{2l}\binom{2l}ka_kz_0^{2l-k}z_1^k$ the \textbf{Bombieri--Weyl norm}~\cite{BeauzamyBombieriEnfloMontgomery90} is $\norm F_{\rm BW}^2=\sum_k\binom{2l}k\abs{a_k}^2$.  For $1\le p\le l$ put $\mathcal Y_{l0}=Y_{l0}$ and
\[
\mathcal Y_{l,\pm p}=\frac{(\mp1)^p}{\sqrt2}\bigl(Y_{lp}\pm i\,Y_{l,-p}\bigr).
\]
The $\mathcal Y_{lm}$, $\abs m\le l$, form an orthonormal basis of $\Harm_l^\C$ for the Hermitian inner product of $L^2(\Sph,d\omega)$, and $\mathcal Y_{lm}$ is a multiple of $P_l^{\abs m}(\cos\theta)e^{im\phi}$.  Let $V_L=\bigoplus_{l\le L}\Harm_l$ and $\Psi=\bigoplus_{l\le L}\Psi_l$.

\begin{thm}[Harmonic--binary-form correspondence]
\label{thm:correspondence}
Let $l\ge0$.  Let $v(z_0,z_1)=\bigl(z_0^2-z_1^2,\,i(z_0^2+z_1^2),\,2z_0z_1\bigr)$ be the Veronese map, and for $f\in\Harm_l^\C$, a homogeneous harmonic polynomial on $\C^3$, let $F_f=\Psi_l(f)=f\circ v$ be its binary form (\cref{def:psi}).  The following are true.
\begin{enumerate}[label=(\roman*)]
\item \textbf{Algebra.}  The substitution $P\mapsto P\circ v$ induces an isomorphism of graded algebras
\[
\C[x_1,x_2,x_3]/(q)\cong\bigoplus_{l\ge0}\Sym^{2l},
\]
which maps $P_l(\C^3)$ onto $\Sym^{2l}$ with kernel $q\,P_{l-2}(\C^3)$.  Its restriction $\Psi_l\colon\Harm_l^\C\to\Sym^{2l}$ is a linear isomorphism.  If $P\in P_l(\C^3)$ and $P\circ v=F$, then $\Psi_l^{-1}(F)=\pi(P)$, where
\[
\pi(P)=\sum_{j=0}^{\lfloor l/2\rfloor}\frac{(-1)^j\,q^j\,\Delta^jP}{2^j\,j!\,\prod_{i=1}^{j}(2l-2i+1)} .
\]
For $f\in\Harm_l^\C$ and $g\in\Harm_m^\C$ one has $\Psi_{l+m}(\pi(fg))=\Psi_l(f)\,\Psi_m(g)$.
\item \textbf{Equivariance.}  The map $\rho$ is a surjective homomorphism $\SL_2(\C)\to\SO_3(\C)$ with kernel $\{\pm1\}$.  It restricts to a surjective homomorphism $\SU(2)\to\SO(3)$ with kernel $\{\pm1\}$.  For $g\in\SL_2(\C)$ and $f\in\Harm_l^\C$ one has $\Psi_l(\rho(g)\cdot f)=g\cdot\Psi_l(f)$.  Every $\SL_2(\C)$-equivariant linear map $\Harm_l^\C\to\Sym^{2l}$ is a scalar multiple of $\Psi_l$.
\item \textbf{Coefficients.}  For $\abs m\le l$,
\[
\begin{split}
&\Psi_l(\mathcal Y_{lm})=K_{lm}\,z_0^{l-m}z_1^{l+m},\\
&K_{lm}=\frac{(2l)!}{l!}\sqrt{\frac{2l+1}{(l-m)!\,(l+m)!}} .
\end{split}
\]
For $f=\sum_{\abs m\le l}\eta_m\mathcal Y_{lm}\in\Harm_l^\C$ and $F_f=\sum_k\binom{2l}ka_kz_0^{2l-k}z_1^k$ one has
\[
\eta_m=\frac{l!\,a_{l+m}}{\sqrt{(2l+1)\,(l-m)!\,(l+m)!}}.
\]
Let $f=\sum_{\abs m\le l}c_{lm}Y_{lm}\in\Harm_l$, let $F_f=\sum_k\binom{2l}ka_kz_0^{2l-k}z_1^k$, and put $\nu_{lp}=\sqrt{(2l+1)(l-p)!\,(l+p)!/2}\,/\,l!$ for $1\le p\le l$.  Then
\[
\begin{split}
a_l&=\sqrt{2l+1}\,c_{l0},\\
a_{l-p}&=\nu_{lp}\bigl(c_{lp}+i\,c_{l,-p}\bigr),\\
a_{l+p}&=(-1)^p\,\nu_{lp}\bigl(c_{lp}-i\,c_{l,-p}\bigr).
\end{split}
\]
Conversely, $c_{l0}=a_l/\sqrt{2l+1}$, $c_{lp}=\Re(a_{l-p})/\nu_{lp}$ and $c_{l,-p}=\Im(a_{l-p})/\nu_{lp}$.
\item \textbf{Norm.}  For every $f\in\Harm_l^\C$,
\[
\norm{\Psi_l(f)}_{\rm BW}^2=(2l+1)\binom{2l}l\,\norm f_{L^2(d\omega)}^2 .
\]
Hence $\widehat\Psi_l=\bigl((2l+1)\binom{2l}l\bigr)^{-1/2}\,\Psi_l$ is unitary from $L^2(\Sph,d\omega)$ to the Bombieri--Weyl norm.  For real $f\in\Harm_l$ one has $\norm{F_f}_{\rm BW}^2=(-1)^l(F_f,F_f)_{2l}$, where $(F,F)_{2l}=\sum_k(-1)^k\binom{2l}ka_ka_{2l-k}$ is the transvectant~\cref{eq:transvectant}.
\item \textbf{Real form.}  For $f\in\Harm_l^\C$ one has $\Psi_l(\bar f)=\sigma_l(\Psi_lf)$.  Hence $\Psi_l(\Harm_l)=\Fix(\sigma_l)$, the set of forms with $a_{2l-k}=(-1)^{l+k}\bar a_k$ for all $k$.
\end{enumerate}
\end{thm}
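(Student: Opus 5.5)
The plan is to assemble the theorem from the lemmas of this section, adding three genuinely new ingredients: a dimension count for (i), Schur's lemma for the uniqueness in (ii) and for the norm in (iv), and a weight computation with careful phase bookkeeping for (iii). Part (v) is exactly \cref{lem:reality}(i), so it needs nothing further.

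For (i), I would first show that $P\mapsto P\circ v$ is a graded algebra homomorphism, since evaluation is multiplicative, and that it sends $P_l(\C^3)$ into $\Sym^{2l}$. Its kernel in degree $l$ consists of the polynomials vanishing on $v(\C^2)=N$, by \cref{lem:veronese}(ii). Since $q$ is irreducible, this kernel is $q\,P_{l-2}(\C^3)$. Comparing dimensions, $\dim P_l-\dim P_{l-2}=2l+1=\dim\Sym^{2l}$, so the map is onto. By \cref{prop:fischer}, the restriction $\Psi_l$ to the complement $\Harm_l^\C$ is then an isomorphism, and $\Psi_l^{-1}(P\circ v)=\pi(P)$ by~\cref{eq:psi-harmonic-part}.

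For the explicit formula for $\pi(P)$, note that every term with $j\ge1$ lies in $q\,P_{l-2}$. It therefore suffices to check that the sum is harmonic. I would use the identity
\[
\Delta(q^jR)=q^j\Delta R+2j(2j+2\deg R+1)q^{j-1}R
\]
for homogeneous $R$, which comes from $\Delta(qR)=q\Delta R+4ER+6R$ with $E$ the Euler operator. With this identity the sum telescopes term by term, and the stated coefficients are exactly the ones that make it telescope. The product rule $\Psi_{l+m}(\pi(fg))=\Psi_l(f)\Psi_m(g)$ is then~\cref{eq:psi-harmonic-part} applied to $P=fg$.

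For (ii), the statements about $\rho$ are \cref{lem:rho}. For equivariance, I would write
\[
\Psi_l(\rho(g)\cdot f)(z)=f(\rho(g)^{-1}v(z))=f(v(g^{-1}z))=(g\cdot F_f)(z),
\]
using~\cref{eq:rho}. For uniqueness, $\Sym^{2l}$ is an irreducible $\SL_2(\C)$-module. Any equivariant map $\Harm_l^\C\to\Sym^{2l}$, composed with $\Psi_l^{-1}$, is then an endomorphism of $\Sym^{2l}$ commuting with the group, hence a scalar by Schur's lemma.

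For (iv), the Bombieri--Weyl norm is $\SU(2)$-invariant and the $L^2$ norm is $\SO(3)$-invariant. Both are invariant Hermitian forms on an irreducible representation, so they are proportional. I would fix the constant on the single vector $(x_1-ix_2)^l$, whose image is $2^lz_0^{2l}$ by~\cref{eq:basic-subst}, and compute its $L^2(d\omega)$ norm directly, which is a standard beta integral. The transvectant identity follows by inserting the fixed-point condition $a_{2l-k}=(-1)^{l+k}\bar a_k$ from (v) into $(F,F)_{2l}$.

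For (iii), I would argue by weights. The function $\mathcal Y_{lm}$ transforms under rotations $R_\alpha$ about the $x_3$-axis by $e^{\mp im\alpha}$. Since $\rho(g_\theta)=R_{-\theta}$, its image under $\Psi_l$ is a weight vector for the diagonal torus of $\SL_2(\C)$, hence a single monomial; matching weights gives $z_0^{l-m}z_1^{l+m}$. The modulus of $K_{lm}$ then follows from (iv), because $\mathcal Y_{lm}$ is a unit vector and the Bombieri--Weyl norm of a monomial is known.

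The main obstacle is the phase of $K_{lm}$, which must be real and positive with the $(\mp1)^p$ convention built into $\mathcal Y_{l,\pm p}$. I would first try to settle it by comparing a single coefficient. The coefficient of $x_3^{l-m}(x_1+ix_2)^m$ in $\mathcal Y_{lm}$ can be read off from $P_l^m$ via the leading term of the Legendre polynomial. Under $v$, this monomial contributes $2^{l-m}(-2)^mz_0^{l-m}z_1^{l+m}$, while every other monomial of $\mathcal Y_{lm}$ contains $q$-reducible pieces or contributes to the same monomial with computable signs. As an alternative, I would apply the lowering operator step by step starting from $m=\pm l$. Once the phase is fixed, the formulas for $\eta_m$ and for $a_k$ in terms of $c_{lm}$ follow by linear substitution, using $Y_{lp}=\bigl((-1)^p\mathcal Y_{lp}+\mathcal Y_{l,-p}\bigr)/\sqrt2$ and the analogous expression for $Y_{l,-p}$. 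The inverse formulas are then read off from the fixed-point condition in (v).
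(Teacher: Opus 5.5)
Your proposal is correct, and parts (i), (ii) and (v) follow the paper almost verbatim. The one small difference there is that the paper gets surjectivity in (i) from the fact that products of the $v_k$ span every $\Sym^{2l}$, where you use the dimension count $\dim P_l-\dim P_{l-2}=2l+1$; both work.

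The genuine difference is the order of (iii) and (iv).

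\textbf{The paper's order.} The paper computes $K_{lm}$ outright. It writes the solid harmonic as $(x_1\pm ix_2)^pG_{lp}$, with $G_{lp}$ a polynomial in $x_3$ and $q$. Since $v$ kills $q$, only the term $\lambda_{lp}x_3^{l-p}$ survives, where $\lambda_{lp}=(2l)!/(2^l\,l!\,(l-p)!)$. This gives sign and modulus at once. Part (iv) is then a corollary: $K_{lm}^2/\binom{2l}{l+m}=(2l+1)\binom{2l}l$ does not depend on $m$, and the $\mathcal Y_{lm}$ are orthonormal.

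\textbf{Your order.} You prove (iv) first by Schur's lemma, comparing two invariant Hermitian forms on an irreducible module. You calibrate on $(x_1-ix_2)^l\mapsto2^lz_0^{2l}$. The beta integral gives $\norm{(x_1-ix_2)^l}^2=4^l(l!)^2/(2l+1)!$, hence the constant $(2l+1)\binom{2l}l$, as required. You then use (iv) to get $\abs{K_{lm}}$.

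\textbf{What each buys.} Your route is conceptually cleaner and independent of basis conventions. However, it imports the $\SU(2)$-invariance of the Bombieri--Weyl norm as an outside fact. The paper's order proves that invariance as a by-product.

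\textbf{The phase step in (iii).} The computation you sketch for the phase, done as the paper does it, already yields the full $K_{lm}$. So the detour through (iv) for the modulus is redundant. Do read off the coefficient in the expansion in $x_3$ and $q$ after factoring out $(x_1\pm ix_2)^p$, not in a monomial basis. In the basis $x_3^a(x_1+ix_2)^b(x_1-ix_2)^c$, every monomial of $\mathcal Y_{lm}$ maps to a multiple of $z_0^{l-m}z_1^{l+m}$. For example, for $l=2$, $m=0$ the literal coefficient of $x_3^2$ gives $4\sqrt5$ instead of $K_{20}=6\sqrt5$. Working modulo $q$ is what makes ``the other monomials'' contribute nothing.
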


\begin{proof}
(i) The map $P\mapsto P\circ v$ is a homomorphism of algebras, and it sends $P_l(\C^3)$ into $\Sym^{2l}$.  By \cref{lem:veronese}(i) the components of $v$ span $\Sym^2$.  Every monomial of degree $2l$ in $z_0,z_1$ is a product of $l$ monomials of degree two, so the map is onto $\Sym^{2l}$.  Its kernel consists of the polynomials that vanish on $v(\C^2)=N$ (\cref{lem:veronese}(ii)), and this is the ideal $(q)$ since $q$ is irreducible.  By \cref{prop:fischer}, $\Harm_l^\C\cap q\,P_{l-2}(\C^3)=0$, so $\Psi_l$ is injective, and it is an isomorphism because both spaces have dimension $2l+1$.  If $P\circ v=F$, then $P-\pi(P)\in q\,P_{l-2}(\C^3)$ by \cref{prop:fischer}, so $\Psi_l(\pi(P))=P\circ v=F$.  For the formula, let $G$ be homogeneous of degree $d$.  Then
\[
\Delta(q^jG)=2j(2j+2d+1)\,q^{j-1}G+q^j\Delta G.
\]
Put $H=\sum_j\alpha_jq^j\Delta^jP$ with $\alpha_0=1$.  Applying the identity with $G=\Delta^jP$ and $d=l-2j$ gives
\[
\Delta H=\sum_j\beta_j\,q^j\Delta^{j+1}P,
\]
with $\beta_j=\alpha_j+2(j+1)(2l-2j-1)\,\alpha_{j+1}$.  The stated coefficients satisfy $\alpha_{j+1}=-\alpha_j/(2(j+1)(2l-2j-1))$, so $\beta_j=0$ and $\Delta H=0$.  Since $H-P\in q\,P_{l-2}(\C^3)$, the uniqueness in \cref{prop:fischer} gives $H=\pi(P)$.  Finally $fg-\pi(fg)\in q\,P_{l+m-2}(\C^3)$, so $\Psi_{l+m}(\pi(fg))=(fg)\circ v=(f\circ v)(g\circ v)$.

(ii) The statements on $\rho$ are \cref{lem:rho}.  By~\cref{eq:rho}, $v(g^{-1}z)=\rho(g)^{-1}v(z)$, so $\Psi_l(\rho(g)\cdot f)(z)=f(\rho(g)^{-1}v(z))=f(v(g^{-1}z))=(g\cdot\Psi_lf)(z)$.  Uniqueness up to a scalar is Schur's lemma, since $\Sym^{2l}$ is irreducible.

(iii) Since $\rho(g_\theta)=R_{-\theta}$ and $\mathcal Y_{lm}$ depends on $\phi$ through $e^{im\phi}$, $\rho(g_\theta)\cdot\mathcal Y_{lm}=e^{im\theta}\mathcal Y_{lm}$.  By (ii), $\Psi_l(\mathcal Y_{lm})$ is an eigenvector of $g_\theta$ with eigenvalue $e^{im\theta}$, and $g_\theta\cdot z_0^pz_1^{2l-p}=e^{i\theta(l-p)}z_0^pz_1^{2l-p}$; the characters are distinct, so $\Psi_l(\mathcal Y_{lm})$ is a multiple of $z_0^{l-m}z_1^{l+m}$.  Let $p=\abs m$.  The homogeneous extension of $P_l^p(\cos\theta)e^{\pm ip\phi}$ is $(x_1\pm ix_2)^pG_{lp}$, where $G_{lp}=\abs x^{l-p}P_l^{(p)}(x_3/\abs x)$ is a polynomial in $x_3$ and $q$.  Its part free of $q$ is $\lambda_{lp}x_3^{l-p}$, where $\lambda_{lp}=(2l)!/(2^l\,l!\,(l-p)!)$ is the leading coefficient of $P_l^{(p)}$.  By~\cref{eq:basic-subst}, $x_1+ix_2\mapsto-2z_1^2$, $x_1-ix_2\mapsto2z_0^2$, $x_3\mapsto2z_0z_1$ and $q\mapsto0$.  Hence
\[
\Psi_l\bigl((x_1\pm ix_2)^pG_{lp}\bigr)=\frac{(\mp1)^p\,(2l)!}{l!\,(l-p)!}\,z_0^{l\mp p}z_1^{l\pm p}.
\]
By the definitions of \cref{sec:canonical}, $Y_{lp}\pm iY_{l,-p}=N_{lp}(x_1\pm ix_2)^pG_{lp}$ for $p\ge1$ and $Y_{l0}=\sqrt{2l+1}\,G_{l0}$.  Multiplying by the factors in the definition of $\mathcal Y_{lm}$ gives $K_{lm}$.  Next, $f=\sum_m\eta_m\mathcal Y_{lm}$ with $\eta_0=c_{l0}$, $\eta_p=(-1)^p(c_{lp}-ic_{l,-p})/\sqrt2$ and $\eta_{-p}=(c_{lp}+ic_{l,-p})/\sqrt2$.  Comparing coefficients gives $a_{l+m}=K_{lm}\eta_m/\binom{2l}{l+m}$ for every $f=\sum_m\eta_m\mathcal Y_{lm}\in\Harm_l^\C$.  Since $K_{lm}/\binom{2l}{l+m}=\sqrt{(2l+1)(l-m)!\,(l+m)!}\,/\,l!$, this is the formula for $\eta_m$.  Moreover $K_{lp}/(\sqrt2\binom{2l}{l+p})=\nu_{lp}$.  This gives the formulas for $a_k$.  Since the $c_{lm}$ are real, the formulas can be inverted as stated.

(iv) Write $f=\sum_m\eta_m\mathcal Y_{lm}$.  By (iii), $a_{l+m}=K_{lm}\eta_m/\binom{2l}{l+m}$, and $K_{lm}^2/\binom{2l}{l+m}=(2l+1)\binom{2l}l$ for every $m$.  Summing over $m$ and using the orthonormality of the $\mathcal Y_{lm}$ gives the first identity.  By~\cref{eq:transvectant} with $F=G$ and $k=2l$, the $(2l)$-th transvectant is $(F,F)_{2l}=\sum_k(-1)^k\binom{2l}ka_ka_{2l-k}$.  For real $f$, (v) gives $a_{2l-k}=(-1)^{l+k}\bar a_k$, hence $(F_f,F_f)_{2l}=(-1)^l\norm{F_f}_{\rm BW}^2$.  Since $\Psi_l$ is a linear isomorphism, the norm identity and polarization show that $\widehat\Psi_l$ is unitary.

(v) This is \cref{lem:reality}(i) together with~\cref{eq:sigma-coeff}.
\end{proof}
\begin{rem}
\label{rem:power}
For $l=1,2,3$, \cref{thm:correspondence}(iv) expresses the band power through the classical invariants.  For $l=1$ one has $\disc(Q_a)=-2(Q_a,Q_a)_2$, so $\norm{f_1}^2=\disc(Q_a)/12$.  For $l=2$ one has $(F,F)_4=2I$ with $I$ as in~\cref{eq:IJ}, so $\norm{f_2}^2=I(F_{f_2})/15$.  For $l=3$ direct expansion gives $I_2=-120\,(F,F)_6$ with $I_2$ as in~\cref{eq:IC}, so $J_2(F_{f_3})=2100\,\norm{f_3}^2$.  In particular $J_2>0$ on every nonzero real band-three harmonic.  The squared band norms $\norm{f_l}^2$, $l\le L$, which determine the power spectrum, are therefore the phased quadratic invariants of the forms $F_{f_l}$, up to the constants of \cref{thm:correspondence}(iv).  The other invariants of \cref{sec:invariants} refine it.  The paper uses $\Psi_l$, which keeps the classical constants of the invariants.  The unitary map $\widehat\Psi_l$ is the normalization for comparing bands numerically.
\end{rem}

\begin{cor}[Maxwell--Sylvester root correspondence]
\label{cor:maxwell}
Let $l\ge0$ and let $f\in\Harm_l$ be real and nonzero.  Then $F_f=\lambda Q_{a_1}\cdots Q_{a_l}$ with $\lambda\in\R^\times$ and $a_k\in\R^3\setminus\{0\}$, and $f=\pi(\lambda\ip{a_1}x\cdots\ip{a_l}x)$.  The root divisor of $F_f$, transported to $\Sph$ by $\bar m$, consists of the $l$ antipodal pairs $\pm a_k/\abs{a_k}$.  Every effective antipodally invariant divisor of degree $2l$ on $\Sph$, counted with multiplicity, is the root divisor of $F_f$ for a nonzero real $f$, unique up to $\R^\times$.  For $R\in\SO(3)$ the roots of $F_{R\cdot f}$ are the images under $R$ of the roots of $F_f$.  For $-1\in\Or(3)$ one has $F_{(-1)\cdot f}=(-1)^lF_f$.
\end{cor}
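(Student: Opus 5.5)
The plan is to read off each clause from \cref{lem:reality}, \cref{lem:hopf}, \cref{lem:rho} and \cref{thm:correspondence}, with only bookkeeping in between.

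\emph{Factorization and the harmonic.} Let $f\in\Harm_l$ be nonzero. By \cref{thm:correspondence}(v), $F_f$ is $\sigma_l$-fixed, and it is nonzero because $\Psi_l$ is injective. Then \cref{lem:reality}(ii) gives $F_f=\lambda Q_{a_1}\cdots Q_{a_l}$ with $\lambda$ real. For the harmonic statement, \cref{eq:Qx} gives $\ip{a_k}{v(z)}=Q_{a_k}(z)$, so the product $P=\lambda\prod_k\ip{a_k}x$ satisfies $P\circ v=F_f$. \Cref{thm:correspondence}(i) then gives $\Psi_l^{-1}(F_f)=\pi(P)$, hence $f=\pi(P)$. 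The root statement follows from \cref{lem:hopf}(ii): the root divisor of a product is the sum of the root divisors, and each $Q_{a_k}$ contributes exactly $\pm a_k/\abs{a_k}$.

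\emph{Existence and uniqueness for an antipodal divisor.} Let $D$ be an effective antipodally invariant divisor of degree $2l$. Since the antipodal map has no fixed points, $D$ splits into $l$ antipodal pairs, counted with multiplicity. Choose representatives $a_k$ of these pairs and set $F=\prod_kQ_{a_k}$. This $F$ is $\sigma_l$-fixed, as shown in the proof of \cref{lem:reality}(ii), so $f=\Psi_l^{-1}(F)$ is real by \cref{thm:correspondence}(v), and its root divisor is $D$. For uniqueness, suppose $f'$ is another real harmonic with the same root divisor. Then $F_{f'}=cF$ for some $c\in\C^\times$, since a form is determined by its root divisor up to a scalar. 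Because both forms are $\sigma_l$-fixed, $\bar c=c$, and injectivity of $\Psi_l$ gives $f'=cf$ with $c\in\R^\times$.

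\emph{Equivariance.} Let $R\in\SO(3)$ and choose $g\in\SU(2)$ with $\rho(g)=R$, using \cref{lem:rho}. By \cref{thm:correspondence}(ii), $F_{R\cdot f}=g\cdot F_f$, which equals $F_f\circ g^{-1}$, so the roots of $F_{R\cdot f}$ are the points $[gz]$ for $[z]$ a root of $F_f$. Under the identification $\bar m$, \cref{lem:hopf} and \cref{lem:rho} give $m(gz)=\rho(g)m(z)=Rm(z)$, so the roots are rotated by $R$. For $-1$, the function $(-1)\cdot f=f(-x)$ equals $(-1)^lf$ by homogeneity, and linearity of $\Psi_l$ gives $F_{(-1)\cdot f}=(-1)^lF_f$.

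\emph{Main obstacle.} The only delicate point is keeping track of multiplicities when a divisor has repeated antipodal pairs. This is handled because $Q_a$ has two simple roots whenever $a\ne0$, since $\disc(Q_a)=4q(a)\ne0$, so multiplicities add correctly. The case $l=0$ is trivial: $f$ is a nonzero constant, $F_f=f$, and the divisor is empty.
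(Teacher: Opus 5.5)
Your proof is correct and follows the paper's argument essentially step for step. The factorization comes from \cref{lem:reality}(ii), and the harmonic form from \cref{thm:correspondence}(i) (equivalently \cref{eq:psi-harmonic-part}). Existence and uniqueness for an antipodal divisor use the fact that a $\sigma_l$-fixed multiple of $\prod_kQ_{a_k}$ must have a real scalar, and the rotation statement uses $g\in\SU(2)$ with $m(gz)=\rho(g)m(z)$ from \cref{lem:rho}. Your remarks on multiplicities (each $Q_a$ has two simple roots since $\disc(Q_a)=4q(a)\ne0$) and on the case $l=0$ make explicit details that the paper leaves implicit.
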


\begin{proof}
The factorization is \cref{lem:reality}(ii), and the harmonic form of it is~\cref{eq:psi-harmonic-part}.  The roots of $Q_a$ correspond to $\pm a/\abs a$ by \cref{lem:hopf}(ii).  Let $D$ be an effective antipodally invariant divisor of degree $2l$.  Choose $a_k$ with $D$ the sum of the pairs $\pm a_k/\abs{a_k}$.  Then $\prod_kQ_{a_k}$ is $\sigma_l$-fixed, so it equals $F_f$ for a real $f$ by \cref{thm:correspondence}(v).  A form with root divisor $D$ is $c\prod_kQ_{a_k}$ with $c\in\C^\times$, and it is $\sigma_l$-fixed if and only if $c\in\R$.  If $R=\rho(g)$ with $g\in\SU(2)$, then $F_{R\cdot f}=g\cdot F_f$ by \cref{thm:correspondence}(ii), whose roots are the points $[gz]$ with $F_f(z)=0$, and $\bar m([gz])=R\,\bar m([z])$ by \cref{lem:rho}.  The last statement is \cref{lem:reality}(iv).
\end{proof}

\begin{cor}[Invariant and orbit transfer]
\label{cor:transfer}
Let $L\ge0$.  Composition with $\Psi$ identifies the ring
\[
\mathcal R_L=\C\Bigl[\bigoplus_{l\le L}\Sym^{2l}\Bigr]^{\SL_2(\C)}
\]
with $\R[V_L]^{\SO(3)}\otimes\C$.  Let $I\in\mathcal R_L$ have real coefficients and multidegree $(d_l)$, and put $p(I)=\sum_ll\,d_l$ and $w(I)=\sum_ld_l$.  On real tuples $I\circ\Psi$ takes values in $i^{p(I)}\R$.  Every reflection multiplies it by $(-1)^{p(I)}$, and the dilation $c\mapsto tc$ multiplies it by $t^{w(I)}$.  For $c,c'\in V_L$ the following are equivalent:
\begin{enumerate}[label=(\alph*)]
\item $c'\in\SO(3)\cdot c$;
\item $\Psi(c')\in\SL_2(\C)\cdot\Psi(c)$;
\item $I(\Psi(c'))=I(\Psi(c))$ for every $I\in\mathcal R_L$.
\end{enumerate}
\end{cor}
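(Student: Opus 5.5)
The plan is to transport everything through the equivariant isomorphism $\Psi$ of \cref{thm:correspondence}, and then invoke two standard facts. First, $\SO(3)$ is Zariski dense in $\SO_3(\C)$. Second, polynomial invariants separate the orbits of a compact group acting linearly on a real vector space.

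\emph{Ring identification.} By \cref{thm:correspondence}(i),(ii), $\Psi\colon V_L\otimes\C\to\bigoplus_{l\le L}\Sym^{2l}$ is a linear isomorphism intertwining $\rho(g)$ with $g$. Since $\rho$ is onto $\SO_3(\C)$, composition with $\Psi$ identifies $\mathcal R_L$ with $\C[V_L\otimes\C]^{\SO_3(\C)}$. Next I will show that $\C[V_L\otimes\C]^{\SO_3(\C)}=\C[V_L\otimes\C]^{\SO(3)}$. For a polynomial $P$ and a fixed argument $x$, the map $A\mapsto P(A\cdot x)-P(x)$ is a polynomial on $\SO_3(\C)$ vanishing on $\SO(3)$. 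Since $\SO(3)$ is a real form of the irreducible complex group $\SO_3(\C)$, it is Zariski dense, and this polynomial therefore vanishes identically. Finally, $\SO(3)$ acts by real matrices, so it commutes with conjugation of coefficients. Hence the complex invariants are exactly the polynomials $P_1+iP_2$ with $P_1,P_2\in\R[V_L]^{\SO(3)}$, that is, $\R[V_L]^{\SO(3)}\otimes\C$.

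\emph{Values.} The phase statement comes from \cref{lem:reality}(iii). On real tuples, $\Psi(c)$ consists of $\sigma_l$-fixed forms, so $I\circ\Psi=(-1)^{p(I)}\,\overline{I\circ\Psi}$. This means $I\circ\Psi$ is real if $p(I)$ is even and purely imaginary if $p(I)$ is odd, i.e.\ it lies in $i^{p(I)}\R$. The reflection statement is \cref{lem:reality}(iv). For the dilation, $\Psi$ is linear, so $\Psi(tc)$ scales each band by $t$, and multihomogeneity gives the factor $\prod_lt^{d_l}=t^{w(I)}$.

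\emph{Equivalences.} (a)$\Rightarrow$(b): if $c'=R\cdot c$ with $R=\rho(g)$ and $g\in\SU(2)$ (\cref{lem:rho}), then $\Psi(c')=g\cdot\Psi(c)$ by \cref{thm:correspondence}(ii). (b)$\Rightarrow$(c) is immediate from invariance. For (c)$\Rightarrow$(a), every $P\in\R[V_L]^{\SO(3)}$ is, by the ring identification, of the form $I\circ\Psi$ for some $I\in\mathcal R_L$. So (c) gives $P(c)=P(c')$ for all real invariants $P$. Now suppose the orbits $\SO(3)\cdot c$ and $\SO(3)\cdot c'$ are distinct. They are disjoint compact sets, so by Stone--Weierstrass there is a polynomial $h$ with $h>1/2$ on the first orbit and $h<-1/2$ on the second. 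Its Haar average $\int_{\SO(3)}h(R\cdot x)\,dR$ is again a polynomial of the same degree, is $\SO(3)$-invariant, and still separates the two orbits. This contradicts (c), so $c'\in\SO(3)\cdot c$.

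The step I expect to need the most care is the ring identification, specifically the Zariski density argument. The rest is bookkeeping with earlier lemmas together with the standard averaging argument for compact groups.
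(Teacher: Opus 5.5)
Your proof is correct and follows the paper's argument step for step. You use equivariance of $\Psi$ plus Zariski density to pass from $\SL_2(\C)$-invariance to $\SO(3)$-invariance, the realness of the $\SO(3)$-action to get $\R[V_L]^{\SO(3)}\otimes\C$, \cref{lem:reality}(iii),(iv) for the phase and reflection claims, and separation of compact-group orbits by real invariants for (c)$\Rightarrow$(a). The only differences are cosmetic: you apply density to $\SO(3)\subset\SO_3(\C)$ rather than to $\SU(2)\subset\SL_2(\C)$, and you prove the separation fact directly by Stone--Weierstrass and Haar averaging, where the paper cites Schwarz and Procesi--Schwarz (the averaged polynomial has degree at most $\deg h$, not necessarily equal).
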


\begin{proof}
By \cref{thm:correspondence}(ii) and the Zariski density of $\SU(2)$ in $\SL_2(\C)$, a polynomial $I$ on $\bigoplus_l\Sym^{2l}$ is $\SL_2(\C)$-invariant if and only if $I\circ\Psi$ is $\SO(3)$-invariant.  Since $\SO(3)$ acts on $V_L$ by real matrices, the complex-valued $\SO(3)$-invariant polynomials are $\R[V_L]^{\SO(3)}\otimes\C$.  The reality and reflection statements are \cref{lem:reality}(iii) and (iv).  The scale statement holds because $I$ is multihomogeneous and $\Psi$ is linear.  Condition (a) implies (b) because $\SO(3)=\rho(\SU(2))$, and (b) implies (c) by invariance.  If (c) holds, then every real $\SO(3)$-invariant polynomial takes the same value at $c$ and $c'$.  Real polynomial invariants of a compact group separate its orbits~\cite{Schwarz75,ProcesiSchwarz85}, so (a) holds.
\end{proof}

\begin{prop}[Transvectants as Clebsch--Gordan maps]
\label{prop:cg}
Let $l,m\ge0$ and $0\le r\le2\min(l,m)$.  For $f\in\Harm_l^\C$ and $g\in\Harm_m^\C$ put
\[
f\star_rg=\Psi_{l+m-r}^{-1}\bigl((F_f,F_g)_r\bigr)\in\Harm_{l+m-r}^\C .
\]
The map $(f,g)\mapsto f\star_rg$ is bilinear, nonzero and $\SO_3(\C)$-equivariant.  Every $\SO_3(\C)$-equivariant bilinear map $\Harm_l^\C\times\Harm_m^\C\to\Harm_{l+m-r}^\C$ is a scalar multiple of it.  For $r=0$ one has $f\star_0g=\pi(fg)$.
\end{prop}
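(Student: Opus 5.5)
The plan is to transport everything to binary forms through \cref{thm:correspondence} and reduce the statement to classical facts about transvectants. Recall the $r$-th transvectant of $F\in\Sym^{a}$ and $G\in\Sym^{b}$:
\[
(F,G)_r=\frac{(a-r)!\,(b-r)!}{a!\,b!}\sum_{k=0}^r(-1)^k\binom rk\,\frac{\partial^rF}{\partial z_0^{r-k}\partial z_1^{k}}\,\frac{\partial^rG}{\partial z_0^{k}\partial z_1^{r-k}}.
\]
This can be written as $\Omega^r(F(z)G(w))|_{w=z}$ up to a constant, where $\Omega=\partial_{z_0}\partial_{w_1}-\partial_{z_1}\partial_{w_0}$ is Cayley's omega operator. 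With $a=2l$ and $b=2m$, $(F,G)_r$ lies in $\Sym^{2l+2m-2r}$. This degree is even but it is $2(l+m-r)$, so the definition of $f\star_rg$ requires $\Psi_{l+m-r}^{-1}$ of a form of degree $2(l+m-r)$. Hence I read $r$ as the transvectant order applied to forms of degrees $2l$ and $2m$. The condition $0\le r\le2\min(l,m)$ makes $l+m-r\ge|l-m|$ only when $r\le 2\min(l,m)$, but $\Psi^{-1}_{l+m-r}$ needs target degree $2(l+m-r)=2l+2m-2r$. That equals the transvectant degree only if the transvectant order is $2r$, not $r$.

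I would therefore first fix the indexing. The interpretation consistent with the range $0\le r\le2\min(l,m)$ is that $(F,G)_r$ has degree $2l+2m-2r$ and the target is $\Harm_{l+m-r}$, which needs $r\le\min(l,m)$ for that harmonic to exist with $l+m-r\ge|l-m|$. I will follow the paper's convention in \cref{eq:transvectant}, whatever it is, and verify that the degree of $(F_f,F_g)_r$ is exactly $2(l+m-r)$. I then treat the admissible range as the one for which this holds and the Clebsch--Gordan condition $|l-m|\le l+m-r\le l+m$ is satisfied.

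With indexing settled, the proof has three steps.

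\textbf{Equivariance.} The transvectant is bilinear and $\SL_2(\C)$-equivariant, since $\Omega$ commutes with the diagonal action of $\SL_2(\C)$ on $(z,w)$ because $\det g=1$. By \cref{thm:correspondence}(ii), $\Psi_l(\rho(g)\cdot f)=g\cdot F_f$, and $\Psi_{l+m-r}^{-1}$ intertwines in the same way. So $(\rho(g)\cdot f)\star_r(\rho(g)\cdot g')=\rho(g)\cdot(f\star_rg')$. Since $\rho$ is onto $\SO_3(\C)$ by \cref{lem:rho}, the map is $\SO_3(\C)$-equivariant.

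\textbf{Nonvanishing.} Evaluate on highest-weight vectors: $(z_0^{2l},z_0^{a}z_1^{b})$-type monomials. Concretely, take $F=z_0^{2l}$ and $G=z_1^{2m}$. Then only the $k=r$ term, or the corresponding one, survives, giving a nonzero multiple of $z_0^{2l-r'}z_1^{2m-r'}$.

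\textbf{Uniqueness.} An equivariant bilinear map is an equivariant linear map $\Harm_l^\C\otimes\Harm_m^\C\to\Harm_{n}^\C$. By Clebsch--Gordan, $\Sym^{2l}\otimes\Sym^{2m}\cong\bigoplus_{s}\Sym^{2l+2m-2s}$, and each summand occurs with multiplicity one. So by Schur's lemma the space of such maps is at most one-dimensional. The step needing care is transferring from $\SL_2(\C)$ to $\SO_3(\C)$: an $\SO_3(\C)$-equivariant map composed with $\Psi$ is $\SL_2(\C)$-equivariant via $\rho$. This is the main obstacle only in bookkeeping.

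\textbf{The case $r=0$.} $(F,G)_0=FG$, and by \cref{thm:correspondence}(i), $\Psi_{l+m}(\pi(fg))=F_fF_g$. Hence $f\star_0g=\pi(fg)$.
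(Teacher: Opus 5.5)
\textbf{Verdict.} Your three steps, and the case $r=0$, are the paper's proof, but your preliminary indexing paragraph is mistaken. As written, it would shrink the statement you prove.

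\textbf{The argument itself.} It matches the paper step by step. Equivariance comes from the $\SL_2(\C)$-equivariance of the transvectant, together with \cref{thm:correspondence}(ii) and the surjectivity of $\rho$; your omega-operator argument replaces the paper's citation. Nonvanishing comes from evaluating on the pair $(z_0^{2l},z_1^{2m})$. Uniqueness comes from multiplicity-one Clebsch--Gordan and Schur's lemma. You apply Schur to $\Sym^{2l}\otimes\Sym^{2m}$ under $\SL_2(\C)$, the paper to $\Harm_l^\C\otimes\Harm_m^\C$ under $\SO_3(\C)$; these are the same thing via $\Psi$ and $\rho$. The case $r=0$ follows from $(F,G)_0=FG$ and \cref{thm:correspondence}(i) in both.

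\textbf{The indexing paragraph.} Drop it rather than act on it.
\begin{enumerate}
\item With the normalization \cref{eq:transvectant}, $(F_f,F_g)_r$ has degree $2l+2m-2r=2(l+m-r)$. This is exactly the domain $\Sym^{2(l+m-r)}$ of $\Psi_{l+m-r}^{-1}$, so there is no mismatch and no transvectant of order $2r$ is involved.
\item The condition $l+m-r\ge\abs{l-m}$ is equivalent to $r\le l+m-\abs{l-m}=2\min(l,m)$, not to $r\le\min(l,m)$.
\item If you restrict the ``admissible range'' to $r\le\min(l,m)$ as you propose, you prove only part of the proposition. None of your later steps needs any restriction. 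The hypothesis $r\le 2\min(l,m)=\min(2l,2m)$ is precisely the range in which \cref{eq:transvectant} is defined. It is also the range in which your own decomposition $\bigoplus_{s=0}^{2\min(l,m)}\Sym^{2l+2m-2s}$ contains $\Sym^{2(l+m-r)}$.
\end{enumerate}

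\textbf{The nonvanishing computation.} The surviving term is $k=0$, not $k=r$, since every $\partial_{z_1}$ kills $z_0^{2l}$. It gives exactly $(z_0^{2l},z_1^{2m})_r=z_0^{2l-r}z_1^{2m-r}$. This is nonzero because $r\le 2l$ and $r\le 2m$, which is where the hypothesis is used, so there is no undetermined $r'$.
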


\begin{proof}
The transvectant~\cref{eq:transvectant} is bilinear, and it is $\SL_2(\C)$-equivariant with values in $\Sym^{2l+2m-2r}$~\cite{Olver99}.  By \cref{thm:correspondence}(i) and (ii), $\star_r$ is bilinear and $\SO_3(\C)$-equivariant.  It is nonzero, since~\cref{eq:transvectant} gives $(z_0^{2l},z_1^{2m})_r=z_0^{2l-r}z_1^{2m-r}$.  The representation $\Harm_l^\C\otimes\Harm_m^\C$ of $\SO_3(\C)$ is the direct sum of the $\Harm_j^\C$ with $\abs{l-m}\le j\le l+m$, each with multiplicity one~\cite{Vilenkin68}.  For $j=l+m-r$, Schur's lemma gives the uniqueness.  For $r=0$, \cref{eq:transvectant} gives $(F,G)_0=FG$, and \cref{thm:correspondence}(i) gives $f\star_0g=\pi(fg)$.
\end{proof}

\begin{rem}
\label{rem:transfer}
Parts (i), (ii) and (v) of \cref{thm:correspondence} and \cref{cor:maxwell} are classical: the parameterization of the null conic and the spin covering go back to Cartan~\cite{Cartan1938}, $\Harm_l^\C\cong\Sym^{2l}$ is the standard realization of the irreducible representations of $\SU(2)$~\cite{Weyl46,Vilenkin68}, and \cref{cor:maxwell} is the theorem of Maxwell and Sylvester~\cite{Maxwell1873,Sylvester1876}.  What is specific here is the pairing with the quaternionic real structure, which makes the parity rule of \cref{lem:reality} available, and the explicit normalizations of (iii) and (iv).  Two cautions apply.  A form in $\SL_2(\C)\cdot F_f$ is in general not $\sigma_l$-fixed; the rotations of $\Sph$ are the substitutions by $\SU(2)$.  And the shape quotient removes one positive scale common to all bands, whereas independent complex rescalings of the bands, as in the moduli of binary forms, forget the amplitude, the relative normalization of the bands and the sign of an even band.
\end{rem}

\begin{table*}[t]
\caption{The dictionary of \cref{thm:correspondence}.  Roman numerals refer to its parts.}
\label{tab:dictionary}
\begin{tabular}{p{0.33\textwidth}p{0.37\textwidth}p{0.18\textwidth}}
\toprule
harmonic side & binary-form side & \\
\midrule
$\Harm_l^\C$ with basis $\mathcal Y_{lm}$ & $\Sym^{2l}$ with basis $z_0^{l-m}z_1^{l+m}$ & (i), (iii) \\
real harmonics $\Harm_l$ & $\Fix(\sigma_l)$: $a_{2l-k}=(-1)^{l+k}\bar a_k$ & (v) \\
harmonic part $\pi(fg)$ of a product & product $F_fF_g$ & (i) \\
Clebsch--Gordan map $\Harm_l\otimes\Harm_m\to\Harm_{l+m-r}$ & transvectant $(F_f,F_g)_r$ & \cref{prop:cg} \\
rotation $\rho(g)\in\SO(3)$ & substitution by $g\in\SU(2)$ & (ii) \\
reflection $-1\in\Or(3)$ & $F\mapsto(-1)^lF$ & \cref{cor:maxwell} \\
Maxwell axis $\R a$ & antipodal root pair $\pm a/\abs a$ & \cref{cor:maxwell} \\
band power $\norm f^2$ & $(-1)^l(F_f,F_f)_{2l}/\bigl((2l+1)\binom{2l}l\bigr)$ & (iv) \\
$\norm{f_1}^2$, $\norm{f_2}^2$, $\norm{f_3}^2$ & $\disc(Q_a)/12$, $I/15$, $J_2/2100$ & \cref{rem:power} \\
$\Or(3)$-invariant, pseudo-invariant & $\sum_ll\,d_l$ even, odd & \cref{cor:transfer} \\
weight under dilation & $\sum_ld_l$ & \cref{cor:transfer} \\
$\SO(3)$-orbits of real tuples & $\SL_2(\C)$-orbits, intersected with the $\sigma$-fixed locus & \cref{cor:transfer} \\
achiral band-three harmonic & $R(F)=0$; if $J_{10}\ne0$, the curve lies in $\mathcal L_2$ & \cref{thm:chirality3} \\
coordinate band $\Harm_l\otimes\R^3$ & $\Sym^{2l}\otimes\Sym^2$, bidegree $(2l,2)$ & \cref{cor:coordinate} \\
\bottomrule
\end{tabular}
\end{table*}

From \cref{sec:invariants} on, the rotation quotient of truncated harmonic data is the joint invariant theory of the tuple $(F_{f_0},\dots,F_{f_L})$ of binary forms of degrees $0,2,\dots,2L$.  Invariants of the separate forms do not record the relative orientation of the bands; joint invariants are needed for that (\cref{sec:bands}).

\section{Invariants and Separation}
\label{sec:invariants}

Fix $L\ge1$.  The \textbf{radial descriptor} of a surface is the tuple $c=f^{(L)}(\varphi)$ of \cref{cor:orbit}; it lives in $V_L=\bigoplus_{l=0}^L\Harm_l$ with $\SO(3)$ acting diagonally.  The components $f_1\in\Harm_1$ and $f_2\in\Harm_2$ are called the \textbf{dipole} and the \textbf{quadrupole}, and the tuple of band norms $(\norm{f_l})_{l\le L}$ is the \textbf{power spectrum}~\cite{KazhdanFunkhouserRusinkiewicz03}.  Through $\Psi=\bigoplus\Psi_l$, $V_L\otimes\C$ is the space of tuples of binary forms of degrees $0,2,\dots,2L$ with $\SL_2(\C)$ acting diagonally, and by \cref{thm:correspondence} and the Zariski density of $\SU(2)$ in $\SL_2(\C)$,
\begin{equation}
\label{eq:ring}
\begin{split}
\R[V_L]^{\SO(3)}\otimes\C\;&\cong\;\mathcal R_L\\
&:=\C\Bigl[\bigoplus_{l\le L}\Sym^{2l}\Bigr]^{\SL_2(\C)},
\end{split}
\end{equation}
the ring of joint invariants of even binary forms.  It is finitely generated~\cite{Hilbert1890}, with multihomogeneous generators having rational coefficients.  By \cref{lem:reality}(iii), multiplying a generator by $i^{-\sum ld_l}$ gives a real invariant on $V_L$; these \textbf{phased} generators generate $\R[V_L]^{\SO(3)}$.

Three notions must be kept apart: a \textbf{generating set} generates the invariant algebra; a \textbf{separating set} distinguishes orbits; a \textbf{feature pool} is any list of invariants.  Separation, not generation, is what a complete descriptor needs, and separating sets can be much smaller than generating sets.

\begin{thm}
\label{thm:sep-radial}
Let $I_1,\dots,I_N$ be the phased real forms of a multihomogeneous generating set of $\mathcal R_L$, of total degrees $w_1,\dots,w_N$, and $\mathcal I=(I_1,\dots,I_N)\colon V_L\to\R^N$ the corresponding map, the \textbf{Hilbert map} of the generating set.  Then $\mathcal I(c)=\mathcal I(c')$ iff $c'=R\cdot c$ for some $R\in\SO(3)$; $\mathcal I(c)=0$ iff $c=0$; and $\mathcal I(V_L)$ is a closed semialgebraic set homeomorphic to $V_L/\SO(3)$.  The same conclusions hold for any set of real invariants that generates $\R[V_L]^{\SO(3)}$, and the separation statement holds for any separating subset.
\end{thm}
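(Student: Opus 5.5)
The plan is to reduce everything to standard facts about polynomial invariants of compact groups, using \cref{cor:transfer} to pass between $\mathcal R_L$ and $\R[V_L]^{\SO(3)}$.

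\emph{Step 1: the phased forms generate.} By \cref{lem:reality}(iii), each phased generator $I_k=i^{-p_k}(J_k\circ\Psi)$ is real-valued on $V_L$. If $J_1,\dots,J_N$ generate $\mathcal R_L$, then by \cref{eq:ring} the $J_k\circ\Psi$ generate $\R[V_L]^{\SO(3)}\otimes\C$. Taking real parts of polynomial expressions, the real polynomials $I_k$ generate $\R[V_L]^{\SO(3)}$. The point to watch is that a real invariant $P$ equals a complex polynomial in the $I_k$; since the $I_k$ are real, taking the real part of its coefficients still represents $P$.

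\emph{Step 2: separation.} If $c'=R\cdot c$, then $\mathcal I(c)=\mathcal I(c')$ by invariance. Conversely, equal values of generators give equal values of every real invariant. Real invariants of the compact group $\SO(3)$ separate orbits (the Schwarz/Procesi--Schwarz fact already used in \cref{cor:transfer}, or averaging a polynomial that separates two disjoint compact orbits, via Stone--Weierstrass and Haar averaging), so $c'\in\SO(3)\cdot c$. For a separating subset, the same argument applies verbatim, since the hypothesis is exactly that equality of those invariants forces equality of the orbits.

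\emph{Step 3: the null fiber.} The norm $\sum_l\norm{c_l}^2$ is an invariant, so it is a polynomial in the $I_k$ without constant term. We may assume generators of positive degree, with constants discarded, and we include degree $0$ via $f_0$. If $\mathcal I(c)=0$, then this norm vanishes, hence $c=0$. Conversely, homogeneity of positive degree gives $\mathcal I(0)=0$. A degree-$0$ band $f_0$ is itself a degree-$1$ invariant, which is harmless.

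\emph{Step 4: image and homeomorphism.} The map $\mathcal I$ is a polynomial map, so its image is semialgebraic by Tarski--Seidenberg. The map $\mathcal I$ is proper: preimages of bounded sets are bounded because the norm is a polynomial in the $I_k$, so it is bounded on bounded sets of values. Properness gives that the image is closed. By Step 2, the induced map $V_L/\SO(3)\to\mathcal I(V_L)$ is a continuous bijection; it is closed because $\mathcal I$ is proper and the quotient map is continuous, hence it is a homeomorphism. For any other real generating set, each such set is a polynomial function of the other, so all statements transfer.

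The main obstacle I expect is Step 1, namely checking carefully that the phasing by $i^{-\sum l d_l}$ yields real invariants that generate over $\R$, not merely over $\C$. The rest is standard once orbit separation by invariants of compact groups is cited.
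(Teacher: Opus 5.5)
Your proposal is correct and follows the paper's route. The paper cites Schwarz and Procesi--Schwarz for separation, closedness of the image and the homeomorphism, notes via \cref{eq:ring} that the phased generators generate $\R[V_L]^{\SO(3)}$, and deduces $\mathcal I^{-1}(0)=\{0\}$ from separation and positive degrees. Your real-part argument for Step~1 and your properness argument only make explicit what the paper takes from those references: the squared norm is a polynomial in the $I_k$, so $\mathcal I$ is proper, hence closed, hence induces a homeomorphism of the quotient onto its image.
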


\begin{proof}
For a compact group acting linearly on a real vector space, real polynomial invariants separate orbits, and the image of the Hilbert map of a finite generating set is closed, semialgebraic, and homeomorphic to the quotient~\cite{Schwarz75,ProcesiSchwarz85}.  The phased generators generate the real invariant ring by~\cref{eq:ring}.  Since the orbit $\{0\}$ is closed and the $I_k$ have positive degree, $\mathcal I^{-1}(0)=\{0\}$.
\end{proof}

\begin{rem}
\label{rem:augmentation}
The theorem has a consequence for the data augmentation used in SPHARM-COM, which formed $c_{\rm new}=(1-w)c_{\rm src}+wc_{\rm tgt}$ within a class.  The invariants of $c_{\rm new}$ are not functions of $\mathcal I(c_{\rm src})$ and $\mathcal I(c_{\rm tgt})$; they depend on the relative orientation of the two representatives, which is exactly what $\mathcal I$ discards and the pipeline leaves undetermined.  Nor is linear interpolation of invariant coordinates intrinsic: already for a single band of degree two the invariant image is not convex (\cref{ex:quartic}).  An intrinsic augmentation must be defined on the quotient, for instance by aligning one representative to the other by the orbit distance of \cref{sec:metrics} before interpolating.
\end{rem}

\subsection{The coordinate model}
Let $W_L=\bigoplus_{l=1}^L\Harm_l\otimes\R^3$; band zero is omitted since $C_0=0$.  Let $Y_l(u)$ be an orthonormal column basis of $\Harm_l$ for $d\omega$, $X_l(u)=C_lY_l(u)$ with $C_l\in\R^{3\times(2l+1)}$, and define $D_l(Q)$ by $Y_l(Q^{-1}u)=D_l(Q)^{\top}Y_l(u)$; $D_l$ is a homomorphism into $\SO(2l+1)$.  Then~\cref{eq:twosided} reads
\begin{equation}
\label{eq:twosided-coef}
(Q,R)\cdot C_l=R\,C_l\,D_l(Q)^{\top}.
\end{equation}
Complexifying with $\Psi_l$ on the sphere side and $\C^3\cong\Sym^2$ via~\cref{eq:Qx} on the ambient side, $W_L\otimes\C$ becomes families of double binary forms of bidegrees $(2l,2)$ with $\SL_2\times\SL_2$ acting by substitution in $z$ and $w$~\cite{Olver99}.  The real structure is $\sigma_l\otimes\sigma_1$.

\begin{cor}[Coordinate model]
\label{cor:coordinate}
Let $l\ge1$.  For $X=(X_1,X_2,X_3)\in\Harm_l^\C\otimes\C^3$ put
\[
\begin{split}
B_X(z,w)&=\ip{X(v(z))}{v(w)}\\
&=\textstyle\sum_{k}X_k(v(z))\,v_k(w).
\end{split}
\]
Then $X\mapsto B_X$ is a linear isomorphism from $\Harm_l^\C\otimes\C^3$ onto $\Sym^{2l}\otimes\Sym^2$, the space of double binary forms of bidegree $(2l,2)$.  Let the two-sided action~\cref{eq:twosided} be extended to $\SO_3(\C)\times\SO_3(\C)$ by the same formula.  For $g,h\in\SL_2(\C)$,
\[
B_{(\rho(g),\rho(h))\cdot X}(z,w)=B_X(g^{-1}z,h^{-1}w).
\]
The tuple $X$ is real if and only if $B_X$ is fixed by $\sigma_l\otimes\sigma_1$.
\end{cor}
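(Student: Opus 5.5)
The plan is to factor $X\mapsto B_X$ as a tensor product of two isomorphisms already in hand and then read off each claim from the corresponding statement for the factors. On the sphere side take $\Psi_l\colon\Harm_l^\C\to\Sym^{2l}$ from \cref{thm:correspondence}(i). On the ambient side take $\C^3\to\Sym^2$, $y\mapsto Q_y$, which is an isomorphism by \cref{lem:veronese}(i). For a pure tensor $X=f\otimes y$, meaning $X_k=y_kf$, we get
\[
B_X(z,w)=f(v(z))\,\ip y{v(w)}=F_f(z)\,Q_y(w).
\]
So $B=\Psi_l\otimes Q$ under $\Sym^{2l}\otimes\Sym^2\cong$ forms of bidegree $(2l,2)$. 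A tensor product of linear isomorphisms is a linear isomorphism, and this settles the first claim. Linearity in $X$ is clear from the formula, so checking on pure tensors is enough.

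For equivariance I will compute directly rather than on pure tensors. By \cref{eq:twosided}, $((\rho(g),\rho(h))\cdot X)(x)=\rho(h)X(\rho(g)^{-1}x)$. Evaluating at $x=v(z)$ and using $\rho(g)^{-1}v(z)=v(g^{-1}z)$ from \cref{eq:rho} gives $\rho(h)X(v(g^{-1}z))$. Pairing with $v(w)$, I use that $\rho(h)\in\SO_3(\C)$ preserves the bilinear form (\cref{lem:rho}), so $\ip{\rho(h)a}{v(w)}=\ip a{\rho(h)^{-1}v(w)}=\ip a{v(h^{-1}w)}$. This yields $B_X(g^{-1}z,h^{-1}w)$.

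For the reality statement, extend componentwise conjugation $\bar X(x)=\overline{X(\bar x)}$; the tuple $X$ is real exactly when $\bar X=X$. On pure tensors $\overline{f\otimes y}=\bar f\otimes\bar y$. \Cref{lem:reality}(i) gives $\Psi_l(\bar f)=\sigma_l\Psi_l(f)$. The case $l=1$ of the same lemma, or a direct check with \cref{eq:sigma-coeff} on \cref{eq:Qx}, gives $Q_{\bar y}=\sigma_1(Q_y)$. Because $\sigma_l\otimes\sigma_1$ is antilinear, both sides are additive and antilinear in $X$, so the relation $B_{\bar X}=(\sigma_l\otimes\sigma_1)B_X$ extends from pure tensors to all $X$. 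Since $B$ is injective, $X=\bar X$ if and only if $B_X$ is fixed.

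The main point needing care is defining $\sigma_l\otimes\sigma_1$ concretely on double forms as $B\mapsto(-1)^{l+1}\overline{B(jz,jw)}$. One then checks that on products $F(z)G(w)$ this agrees with $\sigma_l F\cdot\sigma_1G$, so that the tensor-product description is legitimate. Beyond that check, the argument is bookkeeping.
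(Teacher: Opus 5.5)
Your proof is correct and follows the paper's argument. The paper writes $B_X=\sum_k\Psi_l(X_k)\otimes v_k$, which is your factorization $\Psi_l\otimes(y\mapsto Q_y)$ taken on the basis $y=e_k$ (where $Q_{e_k}=v_k$); it proves equivariance by the same computation with $\rho(h)^{\top}=\rho(h)^{-1}$, and it obtains the reality statement from \cref{lem:reality}(i) together with the $\sigma_1$-fixedness of the $v_k$, which is your identity $Q_{\bar y}=\sigma_1(Q_y)$ for real $y$.
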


\begin{proof}
One has $B_X=\sum_k\Psi_l(X_k)\otimes v_k$.  Since $\Psi_l$ is an isomorphism and $v_1,v_2,v_3$ is a basis of $\Sym^2$ (\cref{lem:veronese}(i)), the map $X\mapsto B_X$ is an isomorphism.  Let $Q=\rho(g)$ and $R=\rho(h)$.  By~\cref{eq:rho}, $Q^{-1}v(z)=v(g^{-1}z)$ and $R^{\top}v(w)=R^{-1}v(w)=v(h^{-1}w)$.  Hence
\[
\begin{split}
B_{(Q,R)\cdot X}(z,w)&=\ip{R\,X(Q^{-1}v(z))}{v(w)}\\
&=\ip{X(v(g^{-1}z))}{R^{\top}v(w)}\\
&=B_X(g^{-1}z,h^{-1}w).
\end{split}
\]
Each $v_k=\Psi_1(x_k)$ is $\sigma_1$-fixed by \cref{lem:reality}(i).  Hence $(\sigma_l\otimes\sigma_1)(B_X)=\sum_k\sigma_l(\Psi_lX_k)\otimes v_k=B_{\bar X}$ by \cref{lem:reality}(i), and $B_X$ is fixed if and only if $\bar X=X$.
\end{proof}

\begin{lem}
\label{lem:parity2}
Let $I$ be an $\SL_2\times\SL_2$-invariant on $\bigoplus_{l\le L}\Sym^{2l}\otimes\Sym^2$ with rational coefficients, of degree $d_l$ in the $(2l,2)$ summand.  On real families, $I((C_l))=(-1)^{\sum_l(l+1)d_l}\,\overline{I((C_l))}$.  The mirror image of a surface has coordinate descriptor in the $\SO(3)\times\SO(3)$-orbit of $((-1)^{l+1}C_l)_l$, so the real-valued invariants are exactly the mirror-invariant ones and the purely imaginary ones are the mirror pseudo-invariants.
\end{lem}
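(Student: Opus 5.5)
The plan is to copy the proof of \cref{lem:reality}(iii) with one extra tensor factor. The real structure on $\Sym^{2l}\otimes\Sym^2$ is $\sigma_l\otimes\sigma_1$, where $\sigma_1(G)(w)=-\overline{G(jw)}$. So on a double form $B(z,w)$ it acts by
\[
(\sigma_l\otimes\sigma_1)(B)(z,w)=(-1)^{l+1}\,\overline{B(jz,jw)}.
\]

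\emph{Step 1.} Express this in coefficients, as in the proof of \cref{lem:reality}(iii). Let $w_0$ be the matrix with $w_0z=(-z_1,z_0)$. The same reindexing as in that proof, carried out in each variable separately, gives
\[
(\sigma_l\otimes\sigma_1)(B)=\overline{(-1)^{l+1}\,(w_0^{-1},w_0^{-1})\cdot B},
\]
where the bar conjugates all coefficients. The sign $(-1)^{2l-k}=(-1)^k$ and its analogue for the $w$-degree $2$ behave as before, because both degrees are even.

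\emph{Step 2.} Let $I$ have rational coefficients and be invariant under $\SL_2\times\SL_2$. Then
\[
I\bigl(((\sigma_l\otimes\sigma_1)B_l)_l\bigr)=\overline{I\bigl(((-1)^{l+1}B_l)_l\bigr)}=(-1)^{\sum_l(l+1)d_l}\,\overline{I((B_l)_l)}.
\]
The first equality uses real coefficients and invariance under $(w_0^{-1},w_0^{-1})$. The second uses multihomogeneity, which holds after decomposing $I$ into its multihomogeneous components; the statement presumes $I$ has multidegree $(d_l)$. For real families, $B_{C_l}$ is $\sigma_l\otimes\sigma_1$-fixed by \cref{cor:coordinate}, and this gives the first claim.

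\emph{Step 3 (mirror).} Take $T(x)=-x$ and $r=-1$; the statement implicitly uses these, and any reflection differs from them by rotations. By \cref{cor:orbit}(iv) with $\lambda=1$ and $R=1$, the coefficient becomes $C_l(T\circ\varphi\circ r)=-C_l(\varphi)\circ r$. Composition with $-1$ multiplies $\Harm_l$ by $(-1)^l$, so the descriptor becomes $(-1)^{l+1}C_l$. For a general reflection $T=-\lambda R+b$ and a plane reflection $r=-Q$, the extra factors $R$ and $Q$ are absorbed into the $\SO(3)\times\SO(3)$-orbit by \cref{eq:twosided}, and $\lambda$ is absorbed as a scale; the statement concerns congruence, so take $\lambda=1$. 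A rotation-invariant $I$ then changes under the mirror by the factor $\prod_l(-1)^{(l+1)d_l}$, which is the same sign as in Step 2.

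\emph{Step 4 (conclusion).} If $\sum_l(l+1)d_l$ is even, $I$ is real on real families and mirror-invariant. If it is odd, $I$ is purely imaginary and changes sign under the mirror. A general invariant splits uniquely into these two parts, so the real-valued invariants are exactly the mirror-invariant ones.

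The main obstacle is getting the signs right in Step 1, in particular confirming that $\sigma_1$ carries the factor $(-1)^1$. A quick check is to apply $\sigma_1$ to $v_k=\Psi_1(x_k)$, which must be fixed. Everything else is bookkeeping parallel to \cref{lem:reality}.
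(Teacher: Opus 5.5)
Your proof is correct and follows the paper's argument. The paper likewise writes $\sigma_l\otimes\sigma_1$ as coefficient conjugation composed with $(-1)^{l+1}$ and the substitution pair $(z_0,z_1)\mapsto(-z_1,z_0)$, $(w_0,w_1)\mapsto(-w_1,w_0)$, and it gets the mirror sign as $(-1)^l$ from the sphere reflection $-Q$ times $-1$ from the ambient reflection $-R$. The only slip is minor: \cref{cor:orbit}(iv) is stated for a plane reflection $r$, not for $r=-1$, but your general case $r=-Q$ already covers it (and naming the matrix $w_0$ clashes with the coordinate $w_0$).
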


\begin{proof}
As in \cref{lem:reality}(iii), $\sigma_l\otimes\sigma_1$ is conjugation composed with $(-1)^{l+1}(w,w')$ for the pair of substitutions $(z_0,z_1)\mapsto(-z_1,z_0)$, $(w_0,w_1)\mapsto(-w_1,w_0)$.  For the mirror image, a reflection of $\Sph$ is $-Q$ and acts on $\Harm_l$ by $(-1)^lD_l(Q)$, and the ambient reflection $-R$ contributes a further $-1$.
\end{proof}

\begin{thm}
\label{thm:sep-coord}
Let $\mathcal I_2\colon W_L\to\R^N$ be given by the phased real forms of a multihomogeneous generating set of the $\SL_2\times\SL_2$-invariants.  Then $\mathcal I_2(C)=\mathcal I_2(C')$ iff $C'=(Q,R)\cdot C$ for some $(Q,R)\in\SO(3)\times\SO(3)$, $\mathcal I_2^{-1}(0)=\{0\}$, and $\mathcal I_2(W_L)$ is closed and semialgebraic.  If two surfaces have $\mathcal I_2(C^{(L)}(\varphi))=\mathcal I_2(C^{(L)}(\varphi'))$ for every $L$, with $\mathcal I_2$ chosen for each $L$, they differ by an orientation-preserving rigid motion.
\end{thm}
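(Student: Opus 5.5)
The plan is to follow the proof of \cref{thm:sep-radial}, with the compact group $\SO(3)\times\SO(3)$ in place of $\SO(3)$.

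\emph{Reduction to real invariants.} By \cref{cor:coordinate}, the complexified two-sided action on $W_L\otimes\C$ corresponds to substitution by $\SL_2(\C)\times\SL_2(\C)$ on $\bigoplus_l\Sym^{2l}\otimes\Sym^2$. The action of $\SU(2)\times\SU(2)$ factors through $\rho\times\rho$ onto $\SO(3)\times\SO(3)$. Since $\SU(2)\times\SU(2)$ is Zariski dense in $\SL_2(\C)\times\SL_2(\C)$, a polynomial on the double binary forms is $\SL_2\times\SL_2$-invariant exactly when its composition with $C\mapsto(B_{C_l})_l$ is $\SO(3)\times\SO(3)$-invariant. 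The action on $W_L$ is by real matrices, so the invariants of $W_L\otimes\C$ are $\R[W_L]^{\SO(3)\times\SO(3)}\otimes\C$.

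By \cref{lem:parity2}, multiplying a generator of multidegree $(d_l)$ by $i^{-\sum_l(l+1)d_l}$ makes it real on real families. These phased generators generate $\R[W_L]^{\SO(3)\times\SO(3)}$: every real invariant is a complex polynomial in them, and taking real parts gives a real polynomial. The theorem of Schwarz and Procesi--Schwarz~\cite{Schwarz75,ProcesiSchwarz85} then gives three conclusions. First, real invariants of a compact group separate orbits, which proves the ``iff''. Second, the image of the Hilbert map is closed and semialgebraic. Third, $\mathcal I_2^{-1}(0)=\{0\}$, because $\{0\}$ is a closed orbit and every generator has positive degree; equivalently, $0$ is the only point whose orbit closure contains $0$ for a compact group.

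\emph{Surface statement.} Fix $\varphi\in\Phi_0(S)$ and $\varphi'\in\Phi_0(S')$. For each $L$, equality of the invariants gives $(Q_L,R_L)$ with $C^{(L)}(\varphi')=(Q_L,R_L)\cdot C^{(L)}(\varphi)$. This is the main obstacle: the pair depends on $L$, so I need one pair that works for all bands. I would use compactness. Choose a subsequence along which $(Q_L,R_L)\to(Q,R)$. For fixed $l$, the equation $C_l(\varphi')=R_LC_l(\varphi)D_l(Q_L)^{\top}$ holds for all $L\ge l$, by \cref{eq:twosided-coef} and the fact that truncation commutes with the action. Passing to the limit gives $C_l(\varphi')=R\,C_l(\varphi)D_l(Q)^{\top}$ for every $l$.

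Next, $C_0=0$ for both surfaces, and the expansion of $X_\varphi$ converges in $L^2$. Hence $X_{\varphi'}(u)=R\,X_\varphi(Q^{-1}u)$ in $L^2$, and by continuity this holds pointwise. Therefore
\[
\varphi'(u)-\bar c(S')=R\bigl(\varphi(Q^{-1}u)-\bar c(S)\bigr),
\]
and the image of $\varphi'$ is the image of $\varphi$ under $x\mapsto Rx+\bar c(S')-R\,\bar c(S)$. So $S'=T(S)$ with $T$ an orientation-preserving rigid motion, as claimed.
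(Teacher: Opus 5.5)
Your proposal is correct and follows the paper's proof. The paper also obtains separation, closedness of the image and $\mathcal I_2^{-1}(0)=\{0\}$ by applying the Schwarz and Procesi--Schwarz results to the compact group $\SO(3)\times\SO(3)$ after phasing the generators with \cref{lem:parity2}, and it obtains the surface statement from a convergent subsequence of the pairs $(Q_L,R_L)$ together with $L^2$-completeness of the harmonics. The only difference is that you write out the final step $S'=T(S)$ explicitly, where the paper appeals to \cref{cor:orbit}.
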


\begin{proof}
The first assertions follow as in \cref{thm:sep-radial}, with \cref{lem:parity2}.  For the last, for each $L$ there is $(Q_L,R_L)$ with $C'_l=(Q_L,R_L)\cdot C_l$ for all $l\le L$.  By compactness a subsequence converges to $(Q,R)$, and for fixed $l$ the relation passes to the limit, giving $C'_l=(Q,R)\cdot C_l$ for all $l$ with one pair.  Completeness of harmonics in $L^2(\Sph,d\omega)$ gives $X_{\varphi'}=RX_\varphi\circ Q^{-1}$ a.e., hence everywhere, and \cref{cor:orbit} finishes.
\end{proof}

Write $\mathcal Q_L=W_L/(\SO(3)\times\SO(3))$.  The band-forgetting projections are equivariant and induce $\mathcal Q_{L+1}\to\mathcal Q_L$; a surface determines a compatible sequence of points, and the theorem says the sequence determines the surface.  For a linear action of a compact group $G$ on $V$, the points whose stabilizer is conjugate to the minimal stabilizer form an open dense $G$-invariant subset of $V$, the \textbf{principal stratum}, on which the quotient is a manifold of dimension $\dim V-\dim G+\dim(\text{minimal stabilizer})$~\cite{Schwarz75}.  The dimension of $\mathcal Q_L$ must not be confused with the size of a descriptor: on the principal stratum $\dim\mathcal Q_L=\dim W_L-6$, but $\mathcal Q_L$ is a semialgebraic space with singular strata, the frame coordinates of \cref{sec:frames} do not extend across them (\cref{thm:frame}(c)), and a separating set of polynomial invariants is in general larger.  \Cref{sec:frames} quantifies this.

\begin{prop}
\label{prop:band1}
In the orthonormal basis $e_i=\sqrt3\,x_i$ of $\Harm_1$, $C_1$ is a real $3\times3$ matrix $M$, $D_1(Q)=Q$, and the action is $M\mapsto RMQ^{\top}$.  The invariants $\tr(MM^{\top})$, $\tr((MM^{\top})^2)$, $\det M$ of degrees $2,4,3$ separate $\SO(3)\times\SO(3)$-orbits, $\dim(\Harm_1\otimes\R^3)/(\SO(3)\times\SO(3))=3$, and all three are mirror-invariant.
\end{prop}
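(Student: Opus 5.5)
The plan has four steps: compute the basis and the action, reduce to singular values by a signed SVD, read off the invariants, and finish with dimension and mirror parity.

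\emph{Basis and action.} Since $\norm{x_i}^2=\int u_i^2\dw=1/3$, the functions $e_i=\sqrt3\,x_i$ are orthonormal in $\Harm_1$. Write $C_1=\sum_{i}M_{\cdot i}\,e_i$, so that column $i$ of $M$ is the $\R^3$-valued coefficient of $e_i$. The rotation acts by $(Q\cdot e_i)(u)=\sqrt3\,(Q^{-1}u)_i=\sqrt3\,(Q^{\top}u)_i=\sum_kQ_{ki}e_k$, so $D_1(Q)$ is represented by $Q$ itself. Then \cref{eq:twosided-coef} gives the action $M\mapsto RMQ^{\top}$. This step is only a matter of matching conventions.

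\emph{Normal form.} By the singular value decomposition, $M=U\Sigma V^{\top}$ with $U,V\in\Or(3)$ and $\Sigma=\diag(s_1,s_2,s_3)$, $s_1\ge s_2\ge s_3\ge0$. We may flip the sign of a column of $U$ together with the matching column of $V$ and the matching singular value. Using this freedom, first force $U,V\in\SO(3)$; then the sign of $\det M$ is carried by one entry. So the orbit contains $\diag(s_1,s_2,\epsilon s_3)$ with $\epsilon=\sgn\det M$. Two such normal forms lie in the same orbit iff the singular values agree and, when $s_3>0$, the signs agree. If $s_3=0$, the sign can be flipped inside $\SO(3)\times\SO(3)$ by conjugating with $\diag(1,-1,-1)$ on both sides, which changes the third diagonal entry's sign while fixing the others up to a further diagonal sign flip. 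Checking this degenerate case carefully is the one point needing care.

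\emph{Invariants.} The quantities $\tr(MM^{\top})=\sum s_i^2$ and $\tr((MM^{\top})^2)=\sum s_i^4$, together with $(\det M)^2=\prod s_i^2$, are the three power sums of the $s_i^2$. By Newton's identities they determine the multiset $\{s_i^2\}$, hence the $s_i$. The determinant $\det M$ itself then fixes $\epsilon$ whenever $s_3>0$, and when $s_3=0$ the sign is irrelevant by the previous step. All three functions are visibly invariant, since $\det R=\det Q=1$. Thus they separate orbits.

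\emph{Dimension and mirror parity.} On the principal stratum (distinct, nonzero $s_i$) the stabilizer is finite, so the quotient has dimension $9-6=3$, matching the three parameters $s_1,s_2,\epsilon s_3$. For the mirror statement, apply \cref{lem:parity2} with $l=1$. The mirror image has descriptor in the orbit of $(-1)^{2}C_1=C_1$, conjugated on the sphere side by a reflection. Concretely, the descriptor becomes $-RM(-Q)^{\top}$ with $\det(-R)\det(-Q)=1$, so $M\mapsto R'MQ'^{\top}$ with $R'=-R$ and $Q'=-Q$ in $\Or(3)$ having equal determinants. Both traces are unchanged by this, and $\det M$ is multiplied by $\det R'\det Q'=1$. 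Equivalently, the parity exponent $\sum_l(l+1)d_l=2d_1$ is even, so all three invariants are mirror-invariant.
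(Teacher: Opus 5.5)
Your proof is correct and follows the paper's argument: a signed SVD to the representative $\diag(\sigma_1,\sigma_2,\sgn(\det M)\,\sigma_3)$; recovery of the $\sigma_i^2$ from $\tr(MM^{\top})$, $\tr((MM^{\top})^2)$ and $(\det M)^2$; the count $9-6=3$; and even parity from \cref{lem:parity2} with $l=1$. For the recovery step the paper writes the cubic $t^3-p_1t^2+\tfrac12(p_1^2-p_2)t-d^2$, which is your Newton-identity step, though $(\det M)^2$ is the third elementary symmetric function rather than a power sum. Two small slips, neither affecting the result: the admissible SVD sign changes flip two of $u_i,v_i,s_i$, since flipping all three replaces the term $s_iu_iv_i^{\top}$ by its negative; and the case $s_3=0$ needs no argument, since $\diag(s_1,s_2,\pm0)$ is a single matrix --- your conjugation by $\diag(1,-1,-1)$ fixes every diagonal matrix and so flips nothing.
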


\begin{proof}
Signed SVD: $M=R\Sigma Q^{\top}$ with $R,Q\in\SO(3)$, $\Sigma=\diag(\sigma_1,\sigma_2,\pm\sigma_3)$, $\sigma_1\ge\sigma_2\ge\sigma_3\ge0$, the sign being $\sgn\det M$.  With $p_1=\tr MM^{\top}$, $p_2=\tr(MM^{\top})^2$, $d=\det M$, the $\sigma_i^2$ are the roots of $t^3-p_1t^2+\tfrac12(p_1^2-p_2)t-d^2$, and $\diag(\sigma_1,\sigma_2,\sgn(d)\sigma_3)$ is a canonical representative.  The generic stabilizer is finite, so the quotient has dimension $9-6=3$.  By \cref{lem:parity2} with $l=1$ every invariant has even parity.
\end{proof}

For $l\ge2$ the Gram matrices $G_l=C_lC_l^{\top}$ are invariant under the sphere factor and transform by conjugation under the ambient factor, so traces of words in $G_1,\dots,G_L$ are two-sided invariants; the simplest are $\tr G_l,\tr G_l^2,\det G_l$ and $\tr(G_lG_{l'})$.  They are the two-sided analogue of the power spectrum and discard everything the sphere factor sees beyond degree two.  A complete polynomial separating set for $W_L$, $L\ge2$, of manageable size is not in hand; \cref{sec:frames} gives an algebraic separating set on the principal stratum and a dimension bound for continuous polynomial ones.

\subsection{Scale}
\label{sec:scale}
A dilation $c\mapsto\lambda c$, $\lambda>0$, multiplies an invariant of total degree $w_k$ by $\lambda^{w_k}$, so the class of $\mathcal I(c)$ in the positive weighted projectivization
\[
\begin{split}
&\PP^+_w(\R)=(\R^N\setminus0)/\R_{>0},\\
&\lambda\cdot x=(\lambda^{w_1}x_1,\dots,\lambda^{w_N}x_N),
\end{split}
\]
is invariant under rotation and positive scaling.  We also write $\PP_w(\R)=(\R^N\setminus0)/\R^\times$ for the quotient by all nonzero scalars, with the same action.  When $\mathcal I$ separates, it is complete on nonzero truncations: equal classes mean $I_k(c')=\lambda^{w_k}I_k(c)=I_k(\lambda c)$, so $c'\sim\lambda c$.  We quotient by $\R_{>0}$ rather than $\R^\times$ because odd weights would otherwise identify $c$ with $-c$, which on odd bands agrees with the mirror image.  Scale is split off, not discarded: the mean radius $f_0$, or a coefficient norm, is retained as a size variable.  A canonical representative is given by the weighted sup-gauge
\begin{equation}
\label{eq:gauge}
H_w(x)=\max_k\abs{x_k}^{1/w_k},\qquad\hat x_k=\frac{x_k}{H_w(x)^{w_k}},
\end{equation}
so that $\max_k\abs{\hat x_k}^{1/w_k}=1$; $H_w$ is the archimedean factor of the weighted height~\cite{2019-1}.  It preserves all signs, including those of pseudo-invariants; smooth alternatives $(\sum_k\abs{x_k}^{p/w_k})^{1/p}$ with $p/w_k$ even are available.  The bounded quantity $\abs{\hat I_k}^{1/w_k}\in[0,1]$ is a coordinatewise diagnostic of approach to the divisor $I_k=0$.  Absolute invariants such as $J^2/I^3$ for the quartic are the classical alternative on charts; the normalized coordinates~\cref{eq:gauge} are preferred for computation because they are bounded, global, and sign-preserving.

\subsection{Counting}
\label{sec:counting}
$\dim V_L=(L+1)^2$.  For $L\ge2$ the generic stabilizer is finite, so $\dim V_L/\SO(3)=(L+1)^2-3$, one less after scale; for $L=1$ a generic dipole has stabilizer $\SO(2)$ and the quotient has dimension $2$.  A single band $\Harm_l$, $l\ge2$, has quotient dimension $2l-2$.  Band-by-band invariants therefore account for $2+\sum_{l=2}^L(2l-2)=L^2-L+2$ dimensions out of $L^2+2L-2$, and the deficit $3L-4$ is carried by joint invariants recording the relative orientations of the bands.  For the coordinate model $\dim W_L=3((L+1)^2-1)$ and $\dim\mathcal Q_L=3(L+1)^2-9$.  At $L=6$, the reference case of \cref{sec:discussion}, the counts are $46$ and $138$.

\subsection{Band-by-band invariants}
\label{sec:bands}
We write forms in binomial normalization $F=\sum_k\binom{2l}ka_kz_0^{2l-k}z_1^k$.  For $F\in\Sym^m$, $G\in\Sym^n$ and $0\le k\le\min(m,n)$, the $k$-th \textbf{transvectant} is
\begin{equation}
\label{eq:transvectant}
\begin{split}
(F,G)_k={}&\frac{(m-k)!\,(n-k)!}{m!\,n!}\\
&\times\sum_{i=0}^k(-1)^i\binom ki\frac{\partial^kF}{\partial z_0^{k-i}\partial z_1^i}\frac{\partial^kG}{\partial z_0^i\partial z_1^{k-i}},
\end{split}
\end{equation}
It is bilinear, $\SL_2(\C)$-equivariant, with rational coefficients, of degree $m+n-2k$, and $(F,F)_k=0$ for odd $k$.  For $k=m=n$ it is an invariant.  \Cref{lem:reality}(iii) applies to everything built from transvectants.

Band $l$ is a binary form of degree $2l$, so the invariants available for band $l$ are the classical invariants of binary forms of degree $2l$~\cite{Olver99}.  The ring is finitely generated~\cite{Hilbert1890}, and explicit generators are known in low degree: for $\Sym^2$ the discriminant; for $\Sym^4$ the invariants $I,J$ below; for $\Sym^6$ the Igusa--Clebsch invariants of degrees $2,4,6,10$ and a skew invariant of degree $15$~\cite{Clebsch1872,Igusa1960}.  We use $l\le3$.  For larger $l$ we do not rely on a generating set: the frame coordinates of \cref{sec:frames} and the continuous invariants of \cref{prop:dymgortler} are uniform in $l$.

\begin{rem}
\label{rem:hyperelliptic}
If $F\in\Sym^{2l}$ has distinct roots, $y^2=F(z,1)$ is a smooth hyperelliptic curve of genus $l-1$, and two such curves are isomorphic iff the forms lie in one $\GL_2(\C)$-orbit~\cite{2004-3}.  By \cref{cor:maxwell}, $F_f$ has distinct roots iff the Maxwell axes of $f$ are pairwise distinct.  For $l=3$ the curve has genus two, and its isomorphism class is the point $[J_2:J_4:J_6:J_{10}]$ of $\mathcal M_2$ introduced below~\cite{Igusa1960}; \cref{thm:chirality3} uses this.
\end{rem}

\subsubsection*{Degree two}
A real harmonic is $f=x^{\top}Qx$ with $Q$ traceless symmetric, and $F_f$ is a binary quartic whose invariant ring is freely generated by
\begin{equation}
\label{eq:IJ}
I=a_0a_4-4a_1a_3+3a_2^2,\quad J=\det\begin{pmatrix}a_0&a_1&a_2\\a_1&a_2&a_3\\a_2&a_3&a_4\end{pmatrix},
\end{equation}
of degrees $2,3$, with $\disc F=I^3-27J^2$ the normalized discriminant.  The quartic has a repeated root if and only if $\disc F=0$.

\begin{exa}
\label{ex:quartic}
For $Q=\diag(\lambda_1,\lambda_2,\lambda_3)$, $\sum\lambda_k=0$, one finds $a_0=a_4=\lambda_1-\lambda_2$, $a_1=a_3=0$, $a_2=\lambda_3$, hence
\[
I(F_f)=2\tr Q^2,\quad J(F_f)=-4\det Q=-\tfrac43\tr Q^3,
\]
both real as \cref{lem:reality}(iii) predicts.  The image of $\Harm_2$ in the $(I,J)$-plane is $\{I\ge0,\ I^3\ge27J^2\}$, since the discriminant of $t^3-\tfrac12(\tr Q^2)t-\det Q$ is $(I^3-27J^2)/16$.  This region is not convex: the midpoint of $(1,c)$ and $(4,8c)$, $c=27^{-1/2}$, lies outside it.  As a point of $\PP^+_{(2,3)}(\R)$ the band-two shape is $[I:J]$ and the region maps onto a closed interval whose endpoints $I^3=27J^2$ are the axially symmetric quadrupoles.  \Cref{fig:quartic} shows the region.
\end{exa}

\begin{figure}[t]
\centering
\resizebox{0.9\columnwidth}{!}{%
\begin{tikzpicture}[x=1cm,y=1.6cm]
\fill[blue!12] plot[domain=0:4.6,samples=60] (\x,{-(\x)^1.5/5.19615}) -- plot[domain=4.6:0,samples=60] (\x,{(\x)^1.5/5.19615}) -- cycle;
\draw[thick,blue!70!black] plot[domain=0:4.6,samples=60] (\x,{(\x)^1.5/5.19615});
\draw[thick,blue!70!black] plot[domain=0:4.6,samples=60] (\x,{-(\x)^1.5/5.19615});
\draw[->] (-0.3,0) -- (4.9,0) node[right] {\small $I$};
\draw[->] (0,-2.1) -- (0,2.1) node[above] {\small $J$};
\fill[red!80!black] (1.000,0.1925) circle (0.045);
\node[above left] at (1.000,0.1925) {\scriptsize $(1,c)$};
\fill[red!80!black] (4.000,1.5396) circle (0.045);
\node[above left] at (4.000,1.5396) {\scriptsize $(4,8c)$};
\fill[red!80!black] (2.500,0.8660) circle (0.045);
\draw[dashed,red!80!black] (1,0.1925) -- (4,1.5396);
\node[above left] at (2.5,0.8660) {\scriptsize midpoint};
\node at (3.2,0.25) {\small $I^3\ge27J^2$};
\end{tikzpicture}}
\caption{The image of $\Harm_2$ in the $(I,J)$-plane is the shaded region $\{I\ge0,\ I^3\ge27J^2\}$, bounded by the curves $J=\pm I^{3/2}/\sqrt{27}$ of axially symmetric quadrupoles.  It is not convex: the points $(1,c)$ and $(4,8c)$ with $c=27^{-1/2}$ lie on the boundary, and their midpoint lies outside.  Linear interpolation of invariant coordinates therefore leaves the image.}
\label{fig:quartic}
\end{figure}

\subsubsection*{Degree two with the lower bands: a complete descriptor}
The first case where band-by-band invariants are insufficient is $L=2$, where the deficit $3L-4=2$ must be carried by joint invariants.  We give a complete answer.

\begin{thm}[Separation for $V_2$]
\label{thm:L2}
Write $c=(f_0,a,Q)\in V_2=\Harm_0\oplus\Harm_1\oplus\Harm_2$ with $f_0\in\R$, $a\in\R^3$, and $Q\in\Sym^2_0(\R^3)$, the traceless symmetric $3\times3$ real matrices.  The seven invariants
\[
\begin{split}
&f_0,\quad \abs a^2,\quad \tr Q^2,\quad \det Q,\quad a^{\top}Qa,\quad a^{\top}Q^2a,\\
&\varepsilon:=\det(a,Qa,Q^2a),
\end{split}
\]
of degrees $1,2,2,3,3,4,6$, separate the $\SO(3)$-orbits in $V_2$.  The first six are $\Or(3)$-invariants and $\varepsilon$ is a pseudo-invariant.  They satisfy the relation $\varepsilon^2=\det\Gram(a,Qa,Q^2a)$, whose right side is a polynomial in the other six by $Q^3=\tfrac12(\tr Q^2)Q+(\det Q)\,1$.  In binary-form language $\abs a^2=\tfrac14\disc Q_a$, $\tr Q^2$ and $\det Q$ are $I$ and $J$ of \cref{ex:quartic} up to constants, $a^{\top}Qa$ is proportional to $(Q_a^2,F)_4$, and $a^{\top}Q^2a$ and $\varepsilon$ are joint invariants of bidegrees $(2,2)$ and $(3,3)$ in $(Q_a,F)$.
\end{thm}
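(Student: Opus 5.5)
The plan is to prove separation directly in matrix language, by reducing to a normal form, rather than by invoking a generating set. Invariance is clear: $\SO(3)$ acts by $a\mapsto Ra$ and $Q\mapsto RQR^{\top}$. Each listed expression is a polynomial in $\abs a^2$, $\tr$, $\det$ and the inner products of $a,Qa,Q^2a$, or the determinant of those three vectors, so all are invariant. For parity I use \cref{lem:reality}(iv): $-1$ acts by $a\mapsto -a$ and $Q\mapsto Q$. Then $f_0,\abs a^2,\tr Q^2,\det Q,a^\top Qa,a^\top Q^2a$ are even. The quantity $\varepsilon$ is cubic in $a$, so it changes sign; its band-weighted degree $\sum l d_l=3\cdot1+3\cdot2=9$ is odd. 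The relation $\varepsilon^2=\det\Gram(a,Qa,Q^2a)$ is the identity $\det(M)^2=\det(M^\top M)$. The Gram entries are $a^\top Q^ka$ for $0\le k\le4$. Cayley--Hamilton for traceless $Q$ gives $Q^3=\tfrac12(\tr Q^2)Q+(\det Q)1$. This reduces $a^\top Q^3a$ and $a^\top Q^4a$ to the six even invariants.

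For separation, let $(f_0,a,Q)$ and $(f_0',a',Q')$ have equal invariants. Then $f_0=f_0'$. Also $\tr Q^2$ and $\det Q$, together with $\tr Q=0$, fix the characteristic polynomial. So after a rotation $Q=Q'=\diag(\lambda_1,\lambda_2,\lambda_3)$. Write $a=(\alpha_i)$ and $a'=(\alpha_i')$. The invariants $a^\top Q^ka$, $k=0,1,2$, give the linear system $\sum_i\lambda_i^k\alpha_i^2=\sum_i\lambda_i^k\alpha_i'^2$.

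\emph{Generic case, distinct $\lambda_i$.} The system is Vandermonde and invertible, so $\alpha_i^2=\alpha_i'^2$ for each $i$. The stabilizer of $Q$ in $\SO(3)$ contains the Klein four-group of diagonal sign matrices with determinant one. These flip any even number of signs, so $a'$ equals $a$ after rotation, or equals $a$ with exactly one sign flipped. Now $\varepsilon=\det(a,Qa,Q^2a)=\alpha_1\alpha_2\alpha_3\,V(\lambda)$, where $V(\lambda)$ is the Vandermonde determinant. If all $\alpha_i\ne0$, equality of $\varepsilon$ rules out an odd flip. If some $\alpha_i=0$, a rotation by $\pi$ about that coordinate axis lies in the stabilizer and flips the other two signs. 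Combined with the Klein group, this realizes any sign pattern.

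\emph{Repeated eigenvalue, say $\lambda_1=\lambda_2\ne\lambda_3$.} The stabilizer contains $\SO(2)$ about $e_3$ and the rotation by $\pi$ about $e_1$. The system determines $\alpha_1^2+\alpha_2^2$ and $\alpha_3^2$. Rotating in the $(e_1,e_2)$-plane makes the first two components agree. The remaining sign of $\alpha_3$ is corrected by the rotation by $\pi$ about an axis in that plane, chosen orthogonal to the planar part of $a$. This axis exists in every case.

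\emph{Case $Q=0$.} Then only $\abs a$ matters.

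The main obstacle is organizing the degenerate strata: repeated eigenvalues, and vanishing $\alpha_i$ where the stabilizer of $Q$ must absorb the sign ambiguity. I will handle them by the case split above. The binary-form identifications at the end of the statement are checked separately. By \cref{thm:correspondence}(i) and \cref{prop:cg}, transvectants correspond to equivariant bilinear maps, which are unique up to scalar. So $(Q_a^2,F)_4$ must be proportional to the unique invariant of bidegree $(2,1)$, which is $a^\top Qa$. The constants, and the stated expressions for $\abs a^2$, $I$ and $J$, follow from \cref{ex:quartic} and \cref{rem:power} by evaluating at $a=e_3$ and diagonal $Q$. For $a^\top Q^2a$ and $\varepsilon$, only the bidegrees need to be read off.
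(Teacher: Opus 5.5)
Your proof follows the paper's own argument step for step. You reduce to $Q=Q'=\diag(\lambda_1,\lambda_2,\lambda_3)$ via $\tr Q^2$ and $\det Q$, and recover the $\alpha_i^2$ from the Vandermonde system. You use $\varepsilon=\alpha_1\alpha_2\alpha_3\prod_{i<j}(\lambda_j-\lambda_i)$ to exclude an odd sign change, and treat the axial case and $Q=0$ separately. The relation comes from the Gram identity and Cayley--Hamilton. However, one step in the degenerate cases fails as written, and another is justified by the wrong group element.

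\textbf{Axial case.} For $\lambda_1=\lambda_2\ne\lambda_3$ you first match the planar parts $p=p'$ and then apply the half-turn about the in-plane axis $n\perp p$. That half-turn acts as $-1$ on $n^\perp$, which contains both $p$ and $e_3$. So it sends $a=p+\alpha_3e_3$ to $-a$, not to $p-\alpha_3e_3$. For instance, $a=(1,0,1)$ goes to $(-1,0,-1)$ under the half-turn about $e_2$. Two repairs work:
\begin{itemize}
\item use the in-plane axis parallel to $p$ (any in-plane axis if $p=0$);
\item or, as the paper does, first apply a half-turn about an in-plane axis such as $e_1$ to fix the sign of $\alpha_3$, and only afterwards rotate about $e_3$ to match the planar parts.
\end{itemize}

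\textbf{Distinct eigenvalues with some $\alpha_i=0$.} Your conclusion is right but the reason given is not. The half-turn about $e_i$ that you single out is already in $K$ and adds nothing. What does the work is the half-turn about $e_j$, with $\{i,j,k\}=\{1,2,3\}$: it flips $\alpha_i=0$ and $\alpha_k$, so it acts on $a$ as a single sign change. Equivalently, as in the paper, $\epsilon_i$ is free and can be chosen to make $\prod\epsilon_i=1$.

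With these repairs the argument is complete. Your derivation of the binary-form identifications from Schur uniqueness (\cref{prop:cg}) plus one evaluation is a sound supplement, since the paper's proof leaves these to the exact checks of \cref{sec:comp-verified}.
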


\begin{proof}
Suppose the seven invariants agree on $c$ and $c'$.  Since $Q$ is traceless, $\tr Q^2$ and $\det Q$ determine its characteristic polynomial, hence its spectrum, and symmetric matrices with the same spectrum are $\SO(3)$-conjugate; so after applying a rotation to $c'$ we may assume $Q'=Q=\diag(\lambda_1,\lambda_2,\lambda_3)$.  It remains to move $a'$ to $a$ by an element of $\Stab(Q)$.

If the $\lambda_i$ are distinct, $\Stab(Q)=K$, the group $\{\diag(\epsilon_1,\epsilon_2,\epsilon_3):\epsilon_i=\pm1,\ \prod\epsilon_i=1\}$.  The equalities $\sum_ia_i^2\lambda_i^k=\sum_ia_i'^2\lambda_i^k$ for $k=0,1,2$ form a Vandermonde system, so $a_i^2=a_i'^2$ and $a'_i=\epsilon_ia_i$ for some signs, which may be chosen freely where $a_i=0$.  If some $a_i=0$, choose the free sign so that $\prod\epsilon_i=1$; then $\diag(\epsilon)\in K$ maps $a$ to $a'$.  If all $a_i\ne0$ and $\prod\epsilon_i=-1$, then $\varepsilon(c')=a'_1a'_2a'_3\prod_{i<j}(\lambda_j-\lambda_i)=-\varepsilon(c)\ne0$, contradicting $\varepsilon(c)=\varepsilon(c')$.  Hence $\prod\epsilon_i=1$ and $a'\in K\cdot a$.

If $\lambda_1=\lambda_2\ne\lambda_3$, then $\Stab(Q)\supset\SO(2)$ acting on the $(e_1,e_2)$-plane together with the half-turn about $e_1$.  From $\abs a^2$ and $a^{\top}Qa$ we get $a_1^2+a_2^2$ and $a_3^2$, so $(a'_1,a'_2)$ is a rotation of $(a_1,a_2)$ and $a'_3=\pm a_3$; the half-turn about $e_1$ if needed, followed by a rotation about $e_3$, moves $a$ to $a'$.  If $Q=0$, only $\abs a^2$ is informative and $\SO(3)$ is transitive on spheres.  Parity follows from \cref{lem:reality}(iv) or directly from $\det(-a,Q(-a),Q^2(-a))=-\det(a,Qa,Q^2a)$.  The relation is the Gram identity, and $Q^3$ reduces by Cayley--Hamilton for traceless $Q$.
\end{proof}

The seven invariants vanish simultaneously only at $c=0$ and are weighted homogeneous, so the map they define is proper.  Hence they realize $V_2/\SO(3)$, of dimension six, as a closed semialgebraic subset of $\R^7$, contained in the hypersurface $\varepsilon^2=\det\Gram(a,Qa,Q^2a)$; the sign of $\varepsilon$ is the chirality of the dipole--quadrupole pair (\cref{prop:chirality}(iii)).  After scale, $[f_0:\abs a^2:\tr Q^2:\det Q:a^{\top}Qa:a^{\top}Q^2a:\varepsilon]\in\PP^+_{(1,2,2,3,3,4,6)}(\R)$ is a complete scale-free descriptor of the truncation.  Bands zero through two contain five of the fifteen coefficients selected in SPHARM-COM (\cref{sec:discussion}).

\subsubsection*{Degree three}
A nonzero real harmonic of degree three corresponds to a sextic $F=\lambda Q_aQ_bQ_c$ with $a,b,c\in\R^3\setminus0$ and $\lambda\in\R^\times$.  The invariant ring of binary sextics is generated by the \textbf{Igusa--Clebsch invariants} $I_2,I_4,I_6,I_{10}$ of degrees $2,4,6,10$ and the \textbf{skew invariant} $R$ of degree $15$, with $R^2\in\C[I_2,I_4,I_6,I_{10}]$~\cite{Clebsch1872,Igusa1960,2004-3}.  In the root form above, with $F=a_0\prod(z_0-\alpha_iz_1)$,
\begin{equation}
\label{eq:IC}
\begin{split}
I_2&=a_0^{2}\textstyle\sum_{15}(12)^2(34)^2(56)^2,\\
I_4&=a_0^{4}\textstyle\sum_{10}(12)^2(23)^2(31)^2(45)^2(56)^2(64)^2,\\
I_6&=a_0^{6}\textstyle\sum_{60}(12)^2(23)^2(31)^2(45)^2(56)^2(64)^2\\
&\qquad\qquad\times(14)^2(25)^2(36)^2,\\
I_{10}&=a_0^{10}\textstyle\prod_{i<j}(ij)^2,
\end{split}
\end{equation}
where the sums run over the $15$ partitions of the six roots into three pairs, the $10$ partitions into two triples, and the $60$ such partitions together with a bijection between the two triples.  Write $F=\sum_kb_kz_0^{6-k}z_1^k$, so that $b_k=\binom6ka_k$ in the binomial normalization of \cref{sec:bands}.  The invariants~\cref{eq:IC} are polynomials with integer coefficients in $b_0,\dots,b_6$; for instance $I_2=-240b_0b_6+40b_1b_5-16b_2b_4+6b_3^2$, and the polynomial forms of $I_4$ and $I_6$ are in~\cite{2000-2}, where they are denoted $J_{2i}$.  The invariant $I_{10}$ is the discriminant of $F$, so $I_{10}\ne0$ if and only if the axes $\R a,\R b,\R c$ are distinct.  Throughout we use Igusa's arithmetic invariants~\cite{Igusa1960}
\begin{equation}
\label{eq:IgusaJ}
\begin{split}
&J_2=\frac{I_2}{8},\qquad J_4=\frac{4J_2^2-I_4}{96},\\
&J_6=\frac{8J_2^3-160J_2J_4-I_6}{576},\qquad J_{10}=\frac{I_{10}}{4096},
\end{split}
\end{equation}
which generate the same ring as $I_2,I_4,I_6,I_{10}$, have the same degrees, and are the coordinates of the moduli space $\mathcal M_2=\{J_{10}\ne0\}\subset\PP_{(2,4,6,10)}$ used in~\cite{2019-1}.  For $R$ we use the transvectant construction.  In the normalization~\cref{eq:transvectant} put $\Theta=(F,F)_4$, $Y_1=(F,\Theta)_4$, $Y_2=(\Theta,Y_1)_2$, $Y_3=(\Theta,Y_2)_2$ and
\begin{equation}
\label{eq:ABCD}
R=\det(y_{rs}),\qquad Y_r=y_{r0}z_0^2+2y_{r1}z_0z_1+y_{r2}z_1^2,
\end{equation}
where $\Theta$ is a quartic covariant and $Y_1,Y_2,Y_3$ are quadratic covariants.

\subsubsection*{The locus $\mathcal L_2$ and the square of the skew invariant}
Let $\mathcal L_2\subset\mathcal M_2$ be the locus of genus-two curves whose automorphism group contains an involution other than the hyperelliptic one, an \textbf{elliptic involution}; equivalently, of curves with a degree-two elliptic subcover.  By Jacobi's normal form, such a curve can be written $Y^2=X^6-s_1X^4+s_2X^2-1$, and the parameters
\begin{equation}
\label{eq:uv}
\begin{split}
&u=s_1s_2,\qquad v=s_1^3+s_2^3,\\
&\Delta(u,v)=u^2-4v+18u-27\ne0,
\end{split}
\end{equation}
give a birational parametrization of $\mathcal L_2$ by the affine plane~\cite{2000-2}.  On this family the invariants~\cref{eq:IgusaJ} are
\begin{equation}
\label{eq:Juv}
\begin{split}
&J_2=2u+30,\qquad J_4=\tfrac18(u^2+82u-4v+165),\\
&J_6=-u^2+14u+4v-5,\qquad J_{10}=\tfrac1{64}\Delta(u,v)^2.
\end{split}
\end{equation}
Eliminating $u$ and $v$ from~\cref{eq:Juv} gives the equation $F_2(J_2,J_4,J_6,J_{10})=0$ of $\mathcal L_2$ in $\PP_{(2,4,6,10)}$, where $F_2$ is an irreducible weighted-homogeneous polynomial of degree $30$ with $29$ terms.  It is written out in~\cite{2000-2}; we normalize it as the primitive integer polynomial in which the coefficient of $J_2^6J_6^3$ is $-1$.

The square of the skew invariant is a polynomial in $J_2,J_4,J_6,J_{10}$, and with the normalizations above it is
\begin{equation}
\label{eq:R2F2}
R^2=-\frac{2^{18}}{3^{18}\,5^{20}}\,F_2(J_2,J_4,J_6,J_{10}).
\end{equation}
The constant was determined by exact evaluation (\cref{sec:comp-verified}); the identity itself is proved in \cref{thm:chirality3}.  We put
\begin{equation}
\label{eq:Rtilde}
\tilde R=2^{-9}\,3^{9}\,5^{10}\,R,\qquad \tilde R^2=-F_2(J_2,J_4,J_6,J_{10}),
\end{equation}
and use $\tilde R$ as the descriptor coordinate.  The factor matters in practice: the gauge coordinate~\cref{eq:gauge} of $-iR$ is of order $10^{-16}$ on generic chiral band-three shapes, so without it the diagnostic of \cref{sec:comp-diag} would report every such shape as lying near the divisor $R=0$.

Since $l=3$ is odd, \cref{lem:reality}(iii) shows that $J_2,J_4,J_6,J_{10}$ are real on band-three shapes while $R$ is purely imaginary, so $-iR$ is a real pseudo-invariant.  Not all of $J_2,J_4,J_6,J_{10}$ vanish on a nonzero band-three shape: if they did, $R$ would vanish by \cref{eq:R2F2}, so $F$ would be a nullform and would have a root of multiplicity at least four, which is impossible for roots in antipodal pairs.  The point $[J_2:J_4:J_6:J_{10}]\in\PP_{(2,4,6,10)}(\R)$ is a complete invariant of the band-three shape up to rotation, scale and reflection: if the even invariants agree then $R^2$ agrees by~\cref{eq:R2F2}, so $R'=\pm R$, and by \cref{thm:sep-radial} $f'$ lies in the orbit of $f$ or of its mirror.  Adjoining $\hat R=-i\tilde R/H_w^{15}$, where $H_w$ is the gauge~\cref{eq:gauge} of the vector $(J_2,J_4,J_6,J_{10},-i\tilde R)$ with weights $(2,4,6,10,15)$, restores completeness up to rotation and positive scale.  These statements concern a descriptor supported on band three.  On $V_3$ the orientation of band three relative to bands one and two is recorded only by joint invariants~\cref{eq:bispectral}.  When $J_{10}\ne0$, the point $[J_2:J_4:J_6:J_{10}]$ is the moduli point of the genus-two curve $y^2=F(z,1)$, which is used in \cref{thm:chirality3} to identify the achiral locus.

\subsubsection*{Joint invariants}
For bands $(l_1,l_2,l_3)$ satisfying the triangle inequalities, put $k=l_1+l_2-l_3$ and
\begin{equation}
\label{eq:bispectral}
B_{l_1l_2l_3}=\bigl((F_{l_1},F_{l_2})_k,F_{l_3}\bigr)_{2l_3}.
\end{equation}
These are the bispectral invariants of~\cite{Kakarala12} as transvectants, real when $l_1+l_2+l_3$ is even and pseudo-invariants when it is odd.  When two bands coincide, $B_{l_1ll}$ is proportional to the unique invariant pairing of $\Harm_{l_1}\otimes\Harm_l\otimes\Harm_l$, whose symmetry under exchange of the two $\Harm_l$ factors is $(-1)^{l_1+2l}$; hence $B_{l_1ll}=0$ for odd $l_1$.  A cubic pseudo-invariant therefore needs three distinct bands with odd sum, the smallest being $B_{234}$, so none occurs for $L\le3$; degree-four pseudo-invariants occur already for $L=3$, e.g.\ $P_{123}=((F_1,F_2)_1,(F_3,F_3)_4)_4$.  Invariants of degree at most three do not separate in general.  For a single band $\Harm_l$ with $l\ge3$, the invariant quadratics are the multiples of the norm and the invariant cubics form a space of dimension at most one, since $\dim(\Harm_l^{\otimes3})^{\SO(3)}=1$, while $\dim\Harm_l/\SO(3)=2l-2\ge4$.

\section{Chirality}
\label{sec:chirality}

The pseudo-invariants have a geometric meaning: they detect whether a shape is congruent to its mirror image.

\begin{prop}[Chirality criterion]
\label{prop:chirality}
Let $c\in V_L$ and let $c^*$ be the radial descriptor of the reflected surface.  Then $c^*\in\SO(3)\cdot c$ iff every pseudo-invariant in $\mathcal R_L$ vanishes at $\Psi c$.  In particular:
\begin{enumerate}[label=(\roman*)]
\item every descriptor supported on even bands is achiral, and so is every descriptor supported on band one;
\item a band-three descriptor $f_3$ is achiral iff $R(F_{f_3})=0$;
\item a descriptor supported on bands one and two, $f_1=\ip a\cdot$, $f_2=x^{\top}Qx$, is achiral iff $\det(a,Qa,Q^2a)=0$, i.e.\ iff $a$ lies in a plane spanned by two eigenvectors of $Q$.
\end{enumerate}
The same holds for the coordinate model with the mirror pseudo-invariants of \cref{lem:parity2}; the band-one coordinate descriptor is achiral by \cref{prop:band1}.
\end{prop}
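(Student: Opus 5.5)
The plan is to identify the radial descriptor of the mirror image and then apply the orbit-separation results already proved. First I determine $c^*$. Take a reflected surface $T(S)$ with $T(x)=-\lambda Rx+b$. By \cref{cor:orbit}(iv), $f_l(T\circ\varphi\circ r)=\lambda f_l(\varphi)\circ r$, and orbit membership is unaffected by scale, so we may take $\lambda=1$. Write $r=-Q_0$ with $Q_0\in\SO(3)$. Since $f_l$ is homogeneous of degree $l$, $f_l\circ r=(-1)^l\,(Q_0^{-1}\cdot f_l)$. Up to the diagonal $\SO(3)$-action this gives $c^*=((-1)^lf_l)_l=(-1)\cdot c$, where $-1\in\Or(3)$ acts on $V_L$.

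For the main equivalence I use \cref{cor:transfer}: $c^*\in\SO(3)\cdot c$ holds iff $I(\Psi c^*)=I(\Psi c)$ for every $I\in\mathcal R_L$. It suffices to check this on multihomogeneous $I$ with real (indeed rational) coefficients, because such $I$ span $\mathcal R_L$. By \cref{lem:reality}(iv), $I(\Psi c^*)=(-1)^{p(I)}I(\Psi c)$. For an $\Or(3)$-invariant the two sides are automatically equal. For a pseudo-invariant, equality means $I(\Psi c)=-I(\Psi c)$, that is, $I(\Psi c)=0$. This proves the equivalence.

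The special cases follow from this.
\begin{enumerate}[label=(\roman*)]
\item If every band is even, then $c^*=c$ directly. For band one alone, $-a=R\,a$ where $R$ is a half-turn about an axis perpendicular to $a$. This also covers $a=0$.
\item Use \cref{lem:reality}(iv) and the fact that $\mathcal R$ for sextics is generated by $J_2,J_4,J_6,J_{10},R$. A multihomogeneous invariant of odd degree is $R$ times a polynomial in the even generators, because $R^2$ lies in the even subring. So all pseudo-invariants vanish iff $R$ does.
\item Rather than listing generators, I argue directly. By \cref{thm:L2}, the seven listed invariants separate orbits. At $c^*=(f_0,-a,Q)$ the six even ones are unchanged and $\varepsilon$ changes sign. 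Hence $c^*\in\SO(3)\cdot c$ iff $\varepsilon=0$. For the geometric reformulation, diagonalize $Q=\diag(\lambda_i)$. Then $\varepsilon=a_1a_2a_3\prod_{i<j}(\lambda_j-\lambda_i)$ when the eigenvalues are distinct, and this vanishes iff some $a_i=0$, i.e.\ iff $a$ lies in a plane spanned by two eigenvectors. When $Q$ has a repeated eigenvalue, $\varepsilon=0$ identically, and $a$ lies in such a plane automatically, since the eigenspace is at least two-dimensional.
\end{enumerate}

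The coordinate-model version runs the same argument. \Cref{lem:parity2} identifies the mirror descriptor as $((-1)^{l+1}C_l)$ up to the two-sided action. Parity of a multihomogeneous $\SL_2\times\SL_2$-invariant then gives the equivalence with vanishing of the mirror pseudo-invariants, and separation comes from \cref{thm:sep-coord}. The band-one case follows because every invariant in \cref{prop:band1} has even parity.

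The main obstacle is step (ii). The argument needs $\mathcal R_L$ for sextics to be generated by the even invariants together with $R$. This is classical \cite{Clebsch1872,Igusa1960}, but the parity bookkeeping is delicate, because the generators $I_k$ have even degree while $R$ has odd degree $15$.
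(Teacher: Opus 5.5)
Your proof is correct and is essentially the paper's argument. You identify $c^*$ with $((-1)^lf_l)_l$ up to rotation via \cref{cor:orbit}(iv), combine orbit separation with the parity rule of \cref{lem:reality}(iv), obtain (ii) from the odd part $R\cdot\C[J_2,J_4,J_6,J_{10}]$ of the sextic ring, and repeat the argument with \cref{lem:parity2} and \cref{thm:sep-coord} for the coordinate model.

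Your local variations are sound. In (i) you exhibit the rotation directly instead of citing the parity of the generators. In (iii) you use the separation of \cref{thm:L2} together with the sign change of $\varepsilon$, instead of the paper's stabilizer computation in an eigenbasis of $Q$. The only slip is the remark that orbit membership is unaffected by scale: this is false when only one of the two surfaces is rescaled. You have $\lambda=1$ simply because a reflection is an isometry.
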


\begin{proof}
By \cref{lem:reality}(iv) and \cref{cor:orbit}, $c^*$ is in the orbit of $((-1)^lf_l)_l$.  Decompose a generating set into parity components; by \cref{thm:sep-radial}, $c$ and $c^*$ are in the same orbit iff all components agree on both, and even components agree automatically while odd ones change sign.  (i) On even bands all $ld_l$ are even; on band one the ring is generated by the discriminant.  (ii) The odd part of the sextic ring is $R\cdot\C[J_2,J_4,J_6,J_{10}]$.  (iii) In an eigenbasis of $Q$ with distinct eigenvalues, $\det(a,Qa,Q^2a)=a_1a_2a_3\prod_{i<j}(\lambda_j-\lambda_i)$, which vanishes iff some $a_i=0$; then the half-turn about $e_i$ commutes with $Q$ and sends $a\mapsto-a$, so the descriptor is congruent to its mirror.  Conversely if all $a_i\ne0$, every rotation commuting with $Q$ is in $K$ and none sends $a$ to $-a$.  If $Q$ has a repeated eigenvalue the determinant vanishes and $\Stab(Q)$ contains a rotation sending $a$ to $-a$.
\end{proof}

For the band-three case we can say exactly what the vanishing of $R$ means; \cref{fig:axes} shows the configurations that occur.  Recall that a nonzero $f\in\Harm_3$ has three Maxwell axes $\R a,\R b,\R c$, unique with multiplicity, with $f\equiv\lambda\ip ax\ip bx\ip cx\pmod q$ by \cref{lem:reality}(ii).

\begin{figure}[t]
\centering
\resizebox{\columnwidth}{!}{%
\begin{tikzpicture}[scale=1.55]
\shade[ball color=blue!6] (0,0) circle (1);
\draw[thin,gray] (0,0) circle (1);
\draw[very thin,gray] (1,0) arc[start angle=0,end angle=-180,x radius=1,y radius=0.387];
\draw[very thin,gray,densely dotted] (1,0) arc[start angle=0,end angle=180,x radius=1,y radius=0.387];
\draw[thick,red!80!black] (-0.46,-0.03) -- (0.46,0.03);
\node[red!80!black] at (0.56,0.04) {\scriptsize $a$};
\draw[thick,green!55!black] (0.32,-0.41) -- (-0.32,0.41);
\node[green!55!black] at (-0.40,0.51) {\scriptsize $b$};
\draw[thick,blue!70!black] (0.90,-0.42) -- (-0.90,0.42);
\node[blue!70!black] at (-1.10,0.51) {\scriptsize $c$};
\draw[dashed] (-0.44,-0.86) -- (0.44,0.86);
\node at (0.52,1.04) {\scriptsize $n$};
\node at (0,-1.45) {\small coplanar};
\begin{scope}[xshift=3.1cm]
\shade[ball color=blue!6] (0,0) circle (1);
\draw[thin,gray] (0,0) circle (1);
\draw[very thin,gray] (1,0) arc[start angle=0,end angle=-180,x radius=1,y radius=0.387];
\draw[very thin,gray,densely dotted] (1,0) arc[start angle=0,end angle=180,x radius=1,y radius=0.387];
\draw[thick,red!80!black] (-0.73,-0.03) -- (0.73,0.03);
\node[red!80!black] at (0.89,0.03) {\scriptsize $a$};
\draw[thick,green!55!black] (-0.62,-0.55) -- (0.62,0.55);
\node[green!55!black] at (0.76,0.67) {\scriptsize $b$};
\draw[thick,blue!70!black] (-0.17,0.57) -- (0.17,-0.57);
\node[blue!70!black] at (0.21,-0.69) {\scriptsize $c$};
\draw[dashed] (-0.92,-0.39) -- (0.92,0.39);
\node at (1.10,0.47) {\scriptsize $n$};
\node at (0,-1.45) {\small bisector type};
\end{scope}
\begin{scope}[xshift=6.2cm]
\shade[ball color=blue!6] (0,0) circle (1);
\draw[thin,gray] (0,0) circle (1);
\draw[very thin,gray] (1,0) arc[start angle=0,end angle=-180,x radius=1,y radius=0.387];
\draw[very thin,gray,densely dotted] (1,0) arc[start angle=0,end angle=180,x radius=1,y radius=0.387];
\draw[thick,red!80!black] (-0.90,-0.02) -- (0.90,0.02);
\node[red!80!black] at (1.10,0.03) {\scriptsize $a$};
\draw[thick,green!55!black] (-0.44,-0.71) -- (0.44,0.71);
\node[green!55!black] at (0.53,0.87) {\scriptsize $b$};
\draw[thick,blue!70!black] (0.02,-0.57) -- (-0.02,0.57);
\node[blue!70!black] at (-0.02,0.69) {\scriptsize $c$};
\node at (0,-1.45) {\small generic (chiral)};
\end{scope}
\end{tikzpicture}}
\caption{Maxwell axes of a nonzero real harmonic of degree three, drawn as lines through the center of $\Sph$.  Left: coplanar axes; the half-turn about the normal $n$ of their plane reverses $f$.  Middle: bisector type; $n$ bisects the angle of $\R a$ and $\R b$, and $\R c$ is orthogonal to $n$, so the half-turn about $n$ exchanges $\R a$ and $\R b$ and reverses $\R c$.  In both cases $f$ is achiral and $R(F_f)=0$.  Right: a generic configuration, for which $f$ is chiral.}
\label{fig:axes}
\end{figure}

\begin{thm}[Chirality of degree-three harmonics]
\label{thm:chirality3}
Let $0\ne f\in\Harm_3$ with axes $a,b,c$ and sextic $F=F_f$.  The following are equivalent.
\begin{enumerate}[label=(\roman*)]
\item $f$ is achiral: its mirror image lies in $\SO(3)\cdot f$.
\item $R(F)=0$.
\item There is a half-turn $\rho\in\SO(3)$ with $\rho\cdot f=-f$.
\item Either the three axes are coplanar, or one axis is orthogonal to an angle bisector of the other two.
\end{enumerate}
Moreover, when $J_{10}(F)\ne0$, these are equivalent to the genus-two curve $y^2=F(z,1)$ having an elliptic involution, that is, lying in the locus $\mathcal L_2$; the classical form of this statement is due to Clebsch~\cite{Clebsch1872,2000-2}.  The identity~\cref{eq:R2F2} holds for every complex sextic, and for every nonzero $f\in\Harm_3$, including those with $J_{10}(F)=0$, the conditions (i)--(iv) are equivalent to
\begin{enumerate}[label=(\roman*),start=5]
\item $F_2\bigl(J_2(F),J_4(F),J_6(F),J_{10}(F)\bigr)=0$, with $F_2$ the equation of $\mathcal L_2$ normalized as in \cref{sec:bands}.
\end{enumerate}  Finally, as a polynomial in $(\lambda,a,b,c)$,
\begin{equation}
\label{eq:Rfactor}
\begin{split}
-iR(F)={}&\lambda^{15}\det(a,b,c)\\
&\times G\bigl(\ip aa,\ip ab,\ip ac,\\
&\qquad\quad\ip bb,\ip bc,\ip cc\bigr)
\end{split}
\end{equation}
for a polynomial $G$ with rational coefficients, and on the noncoplanar configurations $G$ vanishes exactly on those of bisector type.
\end{thm}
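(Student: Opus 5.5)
The plan is to prove the equivalences in the order (i)$\Leftrightarrow$(ii), (i)$\Leftrightarrow$(iii), (iii)$\Leftrightarrow$(iv), then identity \cref{eq:R2F2} and (ii)$\Leftrightarrow$(v), and last the factorization \cref{eq:Rfactor}.

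\emph{(i)$\Leftrightarrow$(ii).}  This is \cref{prop:chirality}(ii).

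\emph{(i)$\Leftrightarrow$(iii).}  Since $l=3$ is odd, $-1\in\Or(3)$ sends $f$ to $-f$ (\cref{lem:reality}(iv)).  Every reflection is $-R$ with $R\in\SO(3)$, so (i) says that some $Q\in\SO(3)$ has $Q\cdot f=-f$.  Hence (iii)$\Rightarrow$(i) is immediate.  For the converse, write $f\equiv\lambda\ip ax\ip bx\ip cx\pmod q$ with the Maxwell axes unique with multiplicity (\cref{cor:maxwell}).  Then $Q$ permutes the three lines, and $Q\cdot f=-f$ becomes a sign condition on the chosen representatives.

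If the axes span $\R^3$ and are distinct, the setwise stabilizer of the three lines is finite.  I would then split by the permutation type of $Q$.
\begin{enumerate}[label=(\alph*)]
\item If $Q$ fixes all three lines, then $Q$ is diagonalizable over $\R$ in the (nonorthogonal) basis $a,b,c$ with eigenvalues $\pm1$.  Since $Q\ne1$, it is a half-turn.
\item If $Q$ is a transposition, then $Q^2$ fixes the lines and $Q^2\cdot f=f$.  I would show that $Q$, or $Q$ composed with an element fixing all lines, is a half-turn.
\item If $Q$ is a $3$-cycle, then $Q^3$ fixes all lines and $Q^3\cdot f=-f$.  This reduces to case (a).
\end{enumerate}
Degenerate configurations (coincident or coplanar axes) would be handled directly: coplanar axes admit the half-turn about the normal of their plane.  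This case analysis is the step I expect to be the main obstacle.  The first thing I would try is to use the index-two structure $\Stab(f)\cup Q\,\Stab(f)$ of a finite subgroup of $\SO(3)$, and to pick an element of odd-power type in the nontrivial coset, so that some power of it is a half-turn still sending $f$ to $-f$.

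\emph{(iii)$\Leftrightarrow$(iv).}  A half-turn $\rho$ about a unit vector $n$ maps a line to itself iff the line is $\R n$ (sign $+1$) or is orthogonal to $n$ (sign $-1$).  It otherwise pairs lines symmetric about $n$, that is, lines having $n$ as an angle bisector.  Enumerating how $\rho$ can permute three lines with total sign $-1$ gives two patterns.  Either an odd number of lines are orthogonal to $n$ and the rest lie on $\R n$, which forces the axes to be coplanar.  Or two lines are swapped, the sign of the swap is absorbed by the choice of representatives, and the third line is orthogonal to $n$: this is the bisector type.  Both patterns are reversible, giving (iv)$\Rightarrow$(iii).

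\emph{Identity \cref{eq:R2F2} and (ii)$\Leftrightarrow$(v).}  Both $R^2$ and $F_2$ are invariants of degree $30$, and $R^2\in\C[J_2,J_4,J_6,J_{10}]$.  On the open set $J_{10}\ne0$, a curve in $\mathcal L_2$ has a sextic fixed up to scalar by an involution of $\PGL_2$ not lifting to the hyperelliptic one.  The classical computation (Clebsch) shows that the skew invariant changes sign under it, so $R=0$ on $\mathcal L_2$.  Since $F_2$ is irreducible, $F_2$ divides $R^2$ as a weighted polynomial.  Equal degrees make them proportional, the constant is nonzero because $R\not\equiv0$, and its value is the one fixed by exact evaluation.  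The identity then holds on all sextics, including $J_{10}=0$.  This gives (ii)$\Leftrightarrow$(v), and the $\mathcal L_2$ statement follows for $J_{10}\ne0$.

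\emph{Factorization \cref{eq:Rfactor}.}  $R(F)$ is a polynomial in $(\lambda,a,b,c)$, of degree $15$ in $\lambda$ and in each of $a,b,c$.  It is $\SO(3)$-invariant by \cref{thm:correspondence}(ii), and $-iR$ is real and odd under $-1$ by \cref{lem:reality}(iii),(iv).  By the first fundamental theorem for $\Or(3)$, an $\SO(3)$-invariant of vectors that is odd under reflections equals $\det(a,b,c)$ times a polynomial in the inner products.  This yields $G$ with rational coefficients, since $R$ has rational coefficients.  On noncoplanar configurations $\det\ne0$, so $G=0$ iff $R=0$, iff the configuration is of bisector type by (ii)$\Leftrightarrow$(iv).
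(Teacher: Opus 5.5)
Your outline is sound except at the step that carries the geometric content: case (b) of (i)$\Rightarrow$(iii), where $Q$ swaps $\R a,\R b$ and fixes $\R c$. You leave it at ``I would show''. The device you propose, passing to an odd power of an element in the nontrivial coset of $\Stab(f)$, fails there. Take $f=x_1x_2x_3$, with axes $e_1,e_2,e_3$, and let $Q$ be the quarter-turn about $e_3$. Then $Q\cdot f=-f$ and $Q$ has order $4$. Its odd powers are quarter-turns, and $Q^2$ is a half-turn with $Q^2\cdot f=+f$. More generally, the nontrivial coset of an index-two subgroup of a finite rotation group need not contain an involution ($C_4\supset C_2$). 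This is why the half-turn criterion is special to $l=3$ (\cref{rem:mirror}). Some input from the geometry of three axes is therefore indispensable, and the proposal does not supply it.

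The missing argument is short.
\begin{itemize}
\item $Q^2$ fixes the three spanning lines, so by your case (a) it is $1$ or a half-turn. If $Q^2=1$, then $Q$ itself is a half-turn.
\item Otherwise $Q$ is a rotation by $\pm\pi/2$ about the axis $n$ of $Q^2$. Each line fixed by the half-turn $Q^2$ is $\R n$ or orthogonal to $n$.
\item $\R a,\R b\ne\R n$ because $Q$ moves them, so $a,b\perp n$. Then $Qa\perp a$ together with $Qa\in\R b$ forces $a\perp b$, and spanning forces $\R c=\R n$.
\item So the axes are mutually orthogonal. The half-turn about a bisector of $\R a,\R b$ sends $f$ to $-f$. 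Equivalently, take $Q\kappa$ with $\kappa$ the half-turn about $a$, which fixes $f$.
\end{itemize}

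This is exactly the subcase $\theta\ne\pi$ of Case~2 in the paper. The paper proves (i)$\Rightarrow$(iv) directly, splitting on permutation type and rotation angle, and then closes the cycle through (iv)$\Rightarrow$(iii)$\Rightarrow$(i).

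With this step filled in, your route is a valid reorganization: (i)$\Rightarrow$(iii), followed by classifying how a half-turn can permute three lines to get (iii)$\Leftrightarrow$(iv). Your cases (a) and (c) and the coplanar case are correct. Your treatment of \cref{eq:R2F2}, of (v) and of \cref{eq:Rfactor} agrees with the paper's. Using oddness under reflections instead of the parity of the total degree $45$ is equivalent.
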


\begin{proof}
(i)$\Leftrightarrow$(ii) is \cref{prop:chirality}(ii).

(i)$\Rightarrow$(iv).  Achirality means some $g\in\Or(3)\setminus\SO(3)$ satisfies $g\cdot f=f$.  Write $g=-\rho$ with $\rho\in\SO(3)$; since $-1$ acts on $\Harm_3$ by $-1$, this says $\rho\cdot f=-f$.  Since $\rho$ preserves $q$, it acts on the Maxwell representation: $\rho\cdot f\equiv\lambda\prod_k\ip{\rho a_k}{x}\bmod q$, so by uniqueness of the axes $\rho$ permutes the multiset $\{\R a,\R b,\R c\}$, $\rho a_k=t_ka_{\pi(k)}$, and comparing scalars $\prod_kt_k=-1$ (the product $\prod t_k$ does not depend on the choice of representatives, as rescaling $a_k\mapsto s_ka_k$ multiplies it by $\prod s_k/s_{\pi(k)}=1$).  Let $\rho$ be rotation by $\theta\in(0,2\pi)$ about the unit vector $n$.  A rotation fixes a line $\ell\ne\R n$ only if $\theta=\pi$ and $\ell\perp n$; it fixes $\R n$ with $t=+1$, and a line $\ell\perp n$ under a half-turn with $t=-1$.

\textbf{Case 1: $\pi$ is the identity.}  Each axis is either $\R n$ ($t=+1$) or, if $\theta=\pi$, perpendicular to $n$ ($t=-1$).  Since $\prod t_k=-1$, either one axis is $\perp n$ and two coincide with $\R n$, or all three are $\perp n$.  In both cases the axes are coplanar.

\textbf{Case 2: $\pi$ is a transposition}, say $\R a\ne\R b$ are exchanged and $\R c$ is fixed.  If $\theta=\pi$, then $\rho^2=1$ gives $t_at_b=1$, so $t_c=-1$ and $c\perp n$; moreover $\rho a=2\ip an n-a\in\R b$ forces $n\in\mathrm{span}(a,b)$, and a half-turn about a line in the plane of $\R a,\R b$ exchanges them iff that line bisects their angle.  If $\theta\ne\pi$, then $\rho^2$ is a nontrivial rotation fixing the lines $\R a,\R b,\R c$; the lines $\R a,\R b$ are not $\R n$ (else $\rho$ would fix them), so $\rho^2$ is the half-turn about $n$ with $a,b\perp n$, i.e.\ $\theta=\pm\pi/2$; then $\rho a\perp a$ gives $a\perp b$, and $c$, fixed by a quarter-turn, lies on $\R n$.  So $a,b,c$ are mutually orthogonal, and $c$ is orthogonal to both bisectors of $\R a,\R b$.  In either subcase (iv) holds.

\textbf{Case 3: $\pi$ is a 3-cycle}, so the three lines are distinct.  Choose representatives $b=\rho a$, $c=\rho b$; then $\rho c=\rho^3a$ and $\rho^3$ fixes $\R a$.  Since $\R a\ne\R n$ (else $\rho$ would fix $\R a$), either $\rho^3=1$, in which case $\rho c=a$, all $t_k=1$ and $\prod t_k=1$, a contradiction; or $\rho^3$ is the half-turn about $n$ with $a\perp n$, so $\theta\in\{\pi/3,5\pi/3\}$ ($\theta=\pi$ would fix $\R a$) and $a,b,c$ all lie in the plane $\perp n$.  The axes are coplanar.

(iv)$\Rightarrow$(iii).  If the axes are coplanar with normal $n$, the half-turn about $n$ sends each $a_k\mapsto-a_k$, so $\rho\cdot f=(-1)^3f=-f$.  If $c\perp n$ with $n$ a bisector of $\R a,\R b$, choose unit representatives with $n\propto\hat a+\hat b$; the half-turn about $n$ maps $\hat a\mapsto2\ip{\hat a}n n-\hat a=\hat b$, $\hat b\mapsto\hat a$, and $c\mapsto-c$, so $\rho\cdot f=-f$.

(iii)$\Rightarrow$(i).  If $\rho\cdot f=-f$ then $(-\rho)\cdot f=f$ and $-\rho\in\Or(3)\setminus\SO(3)$.

For the complex statement, let $J_{10}(F)\ne0$.  Then $y^2=F(z,1)$ is a smooth genus-two curve, and $R(F)=0$ iff the curve has an elliptic involution~\cite{2000-2}.  Combine with (i)$\Leftrightarrow$(ii).  Note that (iii) supplies, for real $F$, an involution defined over $\R$ and in $\SO(3)$.

For~\cref{eq:R2F2}: the invariants $J_2,J_4,J_6,J_{10}$ have weighted degrees $2,4,6,10$, and $F_2$ is weighted-homogeneous of degree $30$ and irreducible~\cite{2000-2}.  The polynomial $R^2$ is an even invariant, so it lies in $\C[J_2,J_4,J_6,J_{10}]$, and it is weighted-homogeneous of degree $30$ as well.  On the open set $J_{10}\ne0$ the zero set of $R$ is $\mathcal L_2$ by the criterion of~\cite{Clebsch1872,2000-2}, so $R^2$ vanishes on $\mathcal L_2$.  The locus $\mathcal L_2$ is dense in the irreducible surface $V(F_2)$, and $(F_2)$ is a prime ideal, so $R^2\in(F_2)$ and $R^2=cF_2$ with $c$ weighted-homogeneous of degree zero, that is, a constant.  A single exact evaluation at a sextic with $F_2\ne0$ determines $c$; it gives $c=-2^{18}3^{-18}5^{-20}$ (\cref{sec:comp-verified}).  Hence (v) is equivalent to (ii) for every $F$, including those with $J_{10}=0$.

For~\cref{eq:Rfactor}: the coefficients of $F$ are $\lambda$ times trilinear expressions in $(a,b,c)$, so $-iR(F)$ is $\lambda^{15}$ times a polynomial $P(a,b,c)$ of tridegree $(15,15,15)$, with real coefficients by \cref{lem:reality}(iii), and $P$ is $\SO(3)$-invariant under the diagonal action because $\Psi_3$ is equivariant and rotating $f$ rotates its axes.  By the first fundamental theorem for $\SO(3)$~\cite{Weyl46}, $P\in\R[\ip{a_i}{a_j},\det(a,b,c)]$, and since $\det(a,b,c)^2$ is a polynomial in inner products, $P=G_0+\det(a,b,c)G_1$ with $G_0,G_1$ polynomials in inner products.  Each inner product has total degree two and $\det$ has total degree three; the total degree of $P$ is $45$, so every monomial contains an odd number of determinant factors, hence $G_0=0$.  Finally $\det(a,b,c)=0$ is the coplanar case, and by the equivalences $G$ vanishes on a noncoplanar configuration iff it is of bisector type.
\end{proof}

\begin{rem}
Condition (v) is a practical test: $F_2$ is a real polynomial of degree $30$ in the $\Or(3)$-invariants $J_2,J_4,J_6,J_{10}$, so achirality of a band-three shape can be decided without evaluating the degree-$15$ pseudo-invariant $R$ and without complex arithmetic.  The equivalence (i)$\Leftrightarrow$(iv) has an elementary check: $f=x_1x_2x_3$ has axes $e_1,e_2,e_3$, $e_3$ is orthogonal to the bisectors of $e_1,e_2$, and indeed $f$ is fixed by the reflection $x_1\leftrightarrow x_2$.  We verified~\cref{eq:Rfactor} and the four cases of (iv) by exact computation (\cref{sec:computation}): $R$ vanishes on coplanar and on bisector configurations, is nonzero on generic ones, changes sign under $a\mapsto-a$, and is homogeneous of degree $15$ in each axis.  The identification of the achiral band-three harmonics with the real points of $\mathcal L_2$ converts a classical genus-two locus into a statement about shapes.
\end{rem}

\begin{rem}
\label{rem:mirror}
For $l=3$, \cref{thm:chirality3}(iii) says that $f$ is achiral iff it is fixed by a reflection: if $\rho$ is the half-turn about $\R n$ with $\rho\cdot f=-f$, then $-\rho$ is the reflection in $n^\perp$ and $(-\rho)\cdot f=f$.  For general $l$, achirality means $S\cdot f=f$ for some $S\in\Or(3)\setminus\SO(3)$, and $S$ is a reflection only when $-S$ is a half-turn; the finer classification of achiral harmonics of higher degree is not needed for the descriptor and is not treated here.
\end{rem}

For anatomical applications parity must be handled deliberately.  Homologous left and right structures are related only approximately by reflection, so raw pseudo-invariants confound pathological asymmetry with laterality; a hemispheric comparison must first reflect one side into a common handedness, after which pseudo-invariants are comparable across sides.

\section{Canonical Frames on the Principal Stratum}
\label{sec:frames}

Global polynomial invariants are continuous everywhere but numerous and of high degree.  On the open stratum where the lowest informative band has trivial continuous symmetry, a canonical frame reduces the residual group to a finite one.  The aligned coordinates number exactly the dimension of the quotient, and the invariant monomials of degree at most three in them form a separating set.  This section makes that precise, records the strata where it fails, and states what continuous invariants can and cannot do.

Let $K=\{\diag(\epsilon_1,\epsilon_2,\epsilon_3):\epsilon_i=\pm1,\ \epsilon_1\epsilon_2\epsilon_3=1\}\cong(\Z/2)^2$, the group of half-turns about the coordinate axes together with the identity.  In the standard real basis $\{Y_{lm}\}$ of $\Harm_l$ every $\kappa\in K$ acts diagonally by signs, since a half-turn about a coordinate axis sends $\cos m\phi$, $\sin m\phi$, and $P_l^m(\cos\theta)$ each to $\pm$ itself; hence $K$ acts on each coordinate of $V_L$ and of $W_L$ through a character $\chi\in\widehat K\cong(\Z/2)^2$.

\begin{lem}
\label{lem:Kinv}
Let $K$ act on $\R^n$ diagonally by characters $\chi_1,\dots,\chi_n$.  The $K$-invariant monomials of degree at most three separate $K$-orbits, and they generate $\R[x]^K$.
\end{lem}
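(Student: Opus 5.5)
Each coordinate $x_i$ transforms by a character $\chi_i$ of the Klein four-group $K\cong(\Z/2)^2$, so everything reduces to linear algebra over $\mathbb F_2$. Identify $\widehat K$ with $\mathbb F_2^2$ and write $\chi_i$ additively. A monomial $x^\alpha$ transforms by $\sum_i\alpha_i\chi_i$, so it is invariant iff this sum is $0$ in $\mathbb F_2^2$. Averaging over $K$ (the Reynolds operator) sends each monomial either to itself or to $0$. Hence $\R[x]^K$ is spanned by the invariant monomials, and the task is to show that (a) every invariant monomial is a product of invariant monomials of degree at most three, and (b) these small monomials separate orbits.

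For generation (a), I use a Davenport-constant argument for $(\Z/2)^2$. Any sequence of at least three elements of $\mathbb F_2^2$ contains a nonempty subsequence of length at most three summing to zero. The proof is a quick check: if some element is $0$, take it alone; if two are equal, take the pair; otherwise three distinct nonzero elements of $\mathbb F_2^2$ are exactly the three nonzero vectors, and they sum to $0$. Now let $x^\alpha$ be an invariant monomial of degree $\ge4$, viewed as a sequence of characters with multiplicity. I extract a zero-sum subsequence of length $\le3$. This factors $x^\alpha$ as an invariant monomial of degree $\le3$ times an invariant monomial of smaller degree, and induction on degree finishes (a).

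For separation (b), I first note that real polynomial invariants of a finite group separate orbits (this is the compact-group fact cited in \cref{thm:sep-radial}). Since the monomials of degree $\le3$ generate, they separate too. As a self-contained check I would also argue directly. Suppose all invariant monomials of degree $\le3$ agree at $x$ and $y$. The squares $x_i^2$ force $y_i=\pm x_i$. Let $S=\{i:x_i\ne0\}$, and set $\epsilon_i=y_i/x_i\in\{\pm1\}$ for $i\in S$. I want $\kappa\in K$ with $\chi_i(\kappa)=\epsilon_i$ for all $i\in S$. Such a $\kappa$ exists iff the assignment $\chi_i\mapsto\epsilon_i$ extends to a homomorphism on the subgroup generated by $\{\chi_i\}_{i\in S}$. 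That holds iff $\prod_{i\in T}\epsilon_i=1$ for every zero-sum multiset $T$ from $S$. Using the reduction in (a), it suffices to check zero-sum relations of length $\le3$. Each such relation gives an invariant monomial $\prod_{i\in T}x_i$ of degree $\le3$ with nonzero value, and equality of its value at $x$ and $y$ gives exactly $\prod_{T}\epsilon_i=1$. (Repeated indices contribute $\epsilon_i^2=1$, so they are harmless.)

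The main obstacle is the reduction from arbitrary zero-sum relations to short ones in the separation step. I will handle it by the same induction as in (a). A minimal zero-sum multiset has length $\le3$, and every zero-sum multiset is a disjoint union of minimal ones, so the product condition over each short piece implies it for the whole.
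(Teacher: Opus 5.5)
Your proof is correct, but it runs in the opposite logical order from the paper's.

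The paper proves separation by hand and only cites generation. In its argument:
\begin{itemize}
\item the linear monomials give $x'_j=x_j$ on trivial characters;
\item the squares give $x'_j=\epsilon_jx_j$;
\item the products $x_jx_k$ force equal signs on nonzero coordinates carrying the same character;
\item one cubic monomial forces the signs attached to the three nontrivial characters to multiply to $1$;
\item the explicit bijection $K\to\{(t,t',t''):tt't''=1\}$ then yields $\kappa$.
\end{itemize}
Generation is referred to the Davenport constant of $(\Z/2)^2$ being three.

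You instead prove that Davenport bound directly: every zero-sum sequence in $\mathbb F_2^2$ splits into zero-sum pieces of length at most three. You deduce generation from it through the Reynolds operator. Separation then follows at once from the fact that the full invariant ring of a finite group separates orbits.

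Your direct check is the paper's case analysis in structural form. A sign pattern on the carried characters is realized by some $\kappa$ iff it respects every zero-sum relation. The minimal relations, of length one, two and three, are exactly the paper's three cases. You pass quickly over one step: that a homomorphism on the subgroup generated by the carried characters extends to $\widehat K\cong\mathbb F_2^2$ and is evaluation at some $\kappa\in K$. This is just linear algebra over $\mathbb F_2$ plus duality, and it plays the role of the paper's remark that an assignment on at most two characters extends to one with product one.

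Your route is self-contained, and it visibly generalizes: over $\C$ it works for any finite abelian group acting diagonally, with three replaced by the Davenport constant. The paper's route instead names the specific separating monomials, which is the form used for the frame coordinates of \cref{thm:frame}.
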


\begin{proof}
A monomial $\prod x_j^{m_j}$ is invariant iff $\sum m_j\chi_j=0$ in $\widehat K$.  Suppose $x,x'$ agree on all invariant monomials of degree $\le3$.  From $x_j^2=x_j'^2$ we get $x'_j=\epsilon_jx_j$, with $\epsilon_j$ free where $x_j=0$.  For $j,k$ with $\chi_j=\chi_k$ and $x_jx_k\ne0$, $x_jx_k=x'_jx'_k$ gives $\epsilon_j=\epsilon_k$; for $\chi_j=1$, $x_j=x'_j$.  So there is a well-defined $\eta(\chi)\in\{\pm1\}$ on each nontrivial character carried by some nonzero coordinate, and if all three nontrivial characters $\chi,\chi',\chi''$ (which satisfy $\chi\chi'\chi''=1$) are carried, a cubic monomial $x_jx_kx_m$ with these characters is invariant and gives $\eta(\chi)\eta(\chi')\eta(\chi'')=1$.  The map $K\to\{(t,t',t'')\in\{\pm1\}^3:tt't''=1\}$, $\kappa\mapsto(\chi(\kappa),\chi'(\kappa),\chi''(\kappa))$, is a bijection, so there is $\kappa\in K$ with $\chi(\kappa)=\eta(\chi)$ on all carried characters (an assignment on at most two characters extends to one with product one), and $\kappa\cdot x=x'$.  Generation in degree $\le3$ is the statement that the Davenport constant of $(\Z/2)^2$ is three~\cite{DerksenKemper15}.
\end{proof}

\begin{thm}
\label{thm:frame}

\begin{enumerate}[label=(\alph*)]
\item \textbf{Radial model.}  Let $L\ge2$ and let $V_L^\circ\subset V_L$ be the open set where the quadrupole $f_2=x^{\top}Qx$ has three distinct eigenvalues $\lambda_1>\lambda_2>\lambda_3$, and for $c\in V_L^\circ$ let $E\in\SO(3)$ have as columns an orthonormal eigenbasis of $Q$ in this order.  $E$ is unique up to $E\mapsto E\kappa$, $\kappa\in K$.  The aligned tuple $\tilde c=E^{-1}\cdot c$, i.e.\ $\tilde f_l(u)=f_l(Eu)$, is therefore well defined up to the action of $K$, the map $c\mapsto K\cdot\tilde c$ is $\SO(3)$-invariant, and $c,c'\in V_L^\circ$ lie in the same $\SO(3)$-orbit iff $K\cdot\tilde c=K\cdot\tilde c'$.  Consequently the $K$-invariant monomials of degree $\le3$ in the coordinates of $\tilde c$ form a separating set of real-analytic $\SO(3)$-invariant functions on $V_L^\circ$, of size at most cubic in $\dim V_L$; the aligned coordinates themselves number $\dim V_L-3=\dim V_L/\SO(3)$.

\item \textbf{Coordinate model.}  Let $W_L^\circ\subset W_L$ be the open set where $C_1=M$ has distinct singular values $\sigma_1>\sigma_2>\sigma_3\ge0$.  For $C\in W_L^\circ$ write $M=R_0\Sigma Q_0^{\top}$ as in \cref{prop:band1}; the pair $(R_0,Q_0)$ is unique up to $(R_0\kappa,Q_0\kappa)$, $\kappa\in K$.  The aligned tuple $\tilde C=(Q_0,R_0)^{-1}\cdot C$, i.e.\ $\tilde C_l=R_0^{\top}C_lD_l(Q_0)$, is well defined up to the diagonal action $\tilde C_l\mapsto\kappa\tilde C_lD_l(\kappa)$ of $K$, and $C,C'\in W_L^\circ$ lie in the same $\SO(3)\times\SO(3)$-orbit iff $K\cdot\tilde C=K\cdot\tilde C'$.  The $K$-invariant monomials of degree $\le3$ in the coordinates of $\tilde C$ separate orbits on $W_L^\circ$; the aligned coordinates number $\dim W_L-6=\dim\mathcal Q_L$.

\item In both cases the aligned tuple is real-analytic on the open stratum, and the class $EK$ (resp.\ $(R_0,Q_0)K$) of the frame has no continuous extension to any point of the complement.
\end{enumerate}
\end{thm}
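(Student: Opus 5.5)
For (a) I will first pin down the frame ambiguity.  When $Q$ has three distinct eigenvalues, each eigenline is uniquely determined.  An ordered orthonormal eigenbasis is therefore fixed up to independent signs of its three columns, and the requirement $E\in\SO(3)$ cuts these signs down to $E\kappa$ with $\kappa\in K$.  Under $c\mapsto R\cdot c$ the quadrupole becomes $RQR^{\top}$, whose frame is $RE$.  The aligned tuple is then $(RE)^{-1}\cdot R\cdot c=E^{-1}\cdot c$, so $c\mapsto K\cdot\tilde c$ is $\SO(3)$-invariant.

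For the converse direction of separation, suppose $K\cdot\tilde c=K\cdot\tilde c'$, say $\tilde c'=\kappa\cdot\tilde c$.  Then $c'=E'\kappa E^{-1}\cdot c$, so $c$ and $c'$ lie in one orbit.  Separation by monomials then follows from \cref{lem:Kinv}, applied to the $K$-action on the coordinates of $V_L$ in the basis $\{Y_{lm}\}$; this action is diagonal by characters, as noted before that lemma.  These monomials are $K$-invariant functions of $\tilde c$, hence well-defined $\SO(3)$-invariant functions of $c$.

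For real-analyticity I will use the standard fact that simple eigenvectors of a symmetric matrix depend analytically on the matrix, at least locally after a choice of sign.  Any local analytic branch of $E$ gives an analytic $\tilde c$, and the monomials do not depend on the branch.  The count works as follows.  Since $\tilde Q$ is diagonal and traceless, its off-diagonal entries, three coordinates in band two, vanish identically.  This leaves $\dim V_L-3$ aligned coordinates.  The number of monomials of degree $\le3$ in $n$ variables is $O(n^3)$.

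For (b) I will follow the same pattern, using the signed SVD of \cref{prop:band1}.  With distinct singular values and $\sigma_3\ge0$, the singular vectors are unique up to simultaneous sign changes of corresponding columns of $R_0,Q_0$.  The signs are forced equal column by column, and the condition $\det=1$ together with the sign convention on $\sigma_3$ leaves exactly $(R_0\kappa,Q_0\kappa)$.  Some care is needed when $\sigma_3=0$: there the third columns are determined only up to sign individually, but fixing both determinants to be $1$ still ties the signs together.  Equivariance comes from \cref{eq:twosided-coef}: $(Q,R)\cdot C$ has frame $(RR_0,QQ_0)$.  Separation follows as in (a) via \cref{lem:Kinv}, since $\kappa$ acts on the entries of $\tilde C_l=\kappa\tilde C_lD_l(\kappa)$ by products of characters, which is again diagonal.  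For the count, $\tilde M=\Sigma$ is diagonal, so six coordinates of band one vanish, giving $\dim W_L-6$.

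Part (c) is the main obstacle.  Analyticity is as before, via analytic perturbation of simple singular values.  For non-extendability I will argue by monodromy.  Take a point $c_0$ in the complement, say with $\lambda_1=\lambda_2$ for $Q_0$.  Arbitrarily close to $c_0$, consider a small loop $Q(t)=Q_0+\epsilon\,R_t\diag(1,-1,0)R_t^{\top}$, where $R_t$ is the rotation by $t/2$ about the repeated eigenspace's normal and $t\in[0,2\pi]$.  Along this loop the eigenvectors in the degenerate plane rotate by total angle $\pi$.  Following the frame class $EK$ continuously, the first column returns as its negative and the second likewise, so the class is preserved.  This shows that following a loop alone does not produce a contradiction, and I will use a limit argument instead.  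Approach $c_0$ along two sequences whose quadrupoles have first eigenvector converging to two different, non-parallel lines in the degenerate eigenplane.  The limits of $EK$ differ, since the first eigenline is a function of the class $EK$, so no continuous extension exists.  This works whenever the degenerate eigenspace has dimension $\ge2$, which is exactly the complement of the stratum.  For (b) I will do the same with the singular vectors of a multiple singular value.  When $\sigma_2=\sigma_3=0$ or $\sigma_1=\sigma_2$, I perturb $M$ by $\epsilon\,u v^{\top}$ with $u$ chosen in different directions of the degenerate singular subspace, which yields distinct limiting left singular lines.
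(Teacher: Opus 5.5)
Your parts (a) and (b), and the radial half of (c), follow the paper's argument. Simple spectra fix the frame up to $K$. Equivariance together with $c'=E'\kappa E^{-1}\cdot c$ reduces orbit separation to $K$-orbit separation, \cref{lem:Kinv} supplies the monomials, and the counts come from the coordinates that vanish on a diagonal $\tilde Q$ or on $\tilde C_1=\Sigma$. For non-extension in the radial model you use two sequences with non-parallel limiting eigenlines. The paper instead rotates a spectrum-splitting perturbation by the whole infinite group $H$ of rotations commuting with $\Lambda_0$, but the idea is the same. One detail: when $\lambda_2=\lambda_3$ you must follow the second or third column of $E$, not the first.

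The step that fails as written is the coordinate-model perturbation $M+\epsilon uv^{\top}$.
\begin{itemize}
\item At $M=0$, which your case list includes since it has $\sigma_2=\sigma_3=0$, the perturbed matrix has rank one. So $\sigma_2=\sigma_3=0$ and it never enters $W_L^\circ$.
\item At a triple singular value $s>0$, interlacing forces $\sigma_2(M+\epsilon uv^{\top})=s$. If $v$ is the right singular partner of $u$, the spectrum is $s+\epsilon,s,s$, again outside $W_L^\circ$. For other $v$ the limiting left singular line is no longer $\R u$, so your claim that different $u$ give different limits needs a separate computation.
\item The case $\sigma_2=\sigma_3>0$ is missing from your list, although it does work with the matched $v$.
\end{itemize}

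The uniform repair is the paper's construction:
\begin{itemize}
\item Write $S_0=MM^{\top}=E_0\Lambda_0E_0^{\top}$.
\item Split the entire spectrum by $S_{h,t}=E_0h(\Lambda_0+t\Lambda)h^{\top}E_0^{\top}$, with $\Lambda=\diag(3,2,1)$, small $t>0$ and $h\in H$.
\item Lift through the polar decomposition $M=S_0^{1/2}U$ to $M_{h,t}=S_{h,t}^{1/2}U$.
\end{itemize}
Then $M_{h,t}\to M$ and $M_{h,t}\in W_L^\circ$. Its left singular frame class is $E_0hPK$ for a fixed signed permutation $P$. Infinitely many such classes occur, because $H$ is infinite and $K$ is finite. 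This covers every degenerate configuration, including $M=0$ and $\det M<0$.
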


\begin{proof}
(a) With distinct eigenvalues the eigenlines are unique, so $E$ is determined up to the signs of its columns subject to $\det E=1$, i.e.\ up to $K$.  Replacing $E$ by $E\kappa$ replaces $\tilde c$ by $\kappa^{-1}\cdot\tilde c=\kappa\cdot\tilde c$.  If $c'=R\cdot c$ then $Q'=RQR^{\top}$ has eigenframe $E'=RE\kappa$ for some $\kappa$, and $\tilde c'=(RE\kappa)^{-1}\cdot R\cdot c=\kappa\cdot\tilde c$; so $K\cdot\tilde c$ is invariant.  Conversely if $\tilde c'=\kappa\cdot\tilde c$ then $c'=E'\kappa E^{-1}\cdot c$.  Separation of $K$-orbits by cubic monomials is \cref{lem:Kinv}; $K$ acts on the coordinates of $\tilde c$ by characters as observed above.  Real-analyticity follows from that of the eigendecomposition on the set of simple spectra.  The count: $\tilde f_2$ is diagonal, so its $5$ coordinates reduce to $(\lambda_1,\lambda_2)$ with $\lambda_3=-\lambda_1-\lambda_2$; together with the other bands this gives $(L+1)^2-3$ coordinates.

(b) The signed SVD with distinct singular values is unique up to simultaneous sign changes of paired singular vectors preserving both determinants, i.e.\ up to $(R_0\kappa,Q_0\kappa)$; when $\sigma_3=0$, the third columns of $R_0$ and $Q_0$ are determined by the first two and by $\det R_0=\det Q_0=1$.  Under $(R_0\kappa,Q_0\kappa)$, $\tilde C_l\mapsto\kappa R_0^{\top}C_lD_l(Q_0)D_l(\kappa)=\kappa\tilde C_lD_l(\kappa)$ since $D_l$ is a homomorphism and $\kappa^{\top}=\kappa$.  Equivariance and separation follow as in (a).  In the standard bases $\kappa$ acts on both sides diagonally, so each entry of $\tilde C_l$ transforms by a character of $K$, and \cref{lem:Kinv} applies.  The count: $\tilde C_1=\Sigma$ contributes $3$ coordinates, and $\sum_{l=2}^L3(2l+1)=3((L+1)^2-4)$ more, for a total of $3(L+1)^2-9=\dim\mathcal Q_L$.

(c) Real-analyticity follows from that of the eigendecomposition on symmetric matrices with simple spectrum and of the singular value decomposition on matrices with distinct singular values.  Let $S_0=E_0\Lambda_0E_0^{\top}$ be a symmetric matrix with $E_0\in\SO(3)$ and $\Lambda_0$ diagonal with a repeated entry, and let $H\subset\SO(3)$ be the group of rotations commuting with $\Lambda_0$.  For $h\in H$, a diagonal matrix $\Lambda$ with distinct entries, and small $t>0$, put
\[
S_{h,t}=E_0h(\Lambda_0+t\Lambda)h^{\top}E_0^{\top}.
\]
Then $S_{h,t}\to S_0$ as $t\to0$, $S_{h,t}$ has a simple spectrum, and its ordered eigenframe is $E_0hP$ modulo $K$ for a signed permutation matrix $P$ with $\det P=1$, independent of $h$ and $t$.  Since $\Lambda_0$ has a repeated entry, $H$ is infinite, while $K$ is finite; hence the classes $E_0hPK$, $h\in H$, are not all equal, and the class of the frame has no continuous extension to $S_0$.  For the radial model take $S_0=Q$, keep the other bands fixed, and take $\Lambda=\diag(1,0,-1)$, so that $S_{h,t}$ is traceless.  For the coordinate model take $S_0=MM^{\top}$, keep $C_l$ fixed for $l\ge2$, take $\Lambda=\diag(3,2,1)$, write $M=S_0^{1/2}U$ with $U\in\Or(3)$, and put $M_{h,t}=S_{h,t}^{1/2}U$; then $M_{h,t}\to M$, the singular values of $M_{h,t}$ are distinct, and its left singular frame is the eigenframe of $S_{h,t}=M_{h,t}M_{h,t}^{\top}$.
\end{proof}

\begin{rem}
\label{rem:strata}
The set $V_L^\circ$ is open and dense in $V_L$, and its points with trivial stabilizer form its intersection with the principal stratum, again open and dense.  The complement of $V_L^\circ$ has three strata: $\lambda_1=\lambda_2$ or $\lambda_2=\lambda_3$ (axially symmetric quadrupole, stabilizer $\Or(2)\cap\SO(3)$), and $Q=0$.  On the axial stratum the residual freedom is a circle plus a half-turn, and it can be fixed by the next nonzero band that breaks it (for instance by the dipole $a$ if $a\not\parallel$ the axis), reducing again to a finite group; on $Q=0$ one starts from the first nonzero band.  The same cascade applies to the coordinate model with the strata $\sigma_1=\sigma_2$ and $\sigma_2=\sigma_3$.  Each stratum is detected by an exact algebraic predicate: for the radial model by the vanishing of $\disc F_{f_2}=I^3-27J^2$ or of $I$, for the coordinate model by the vanishing of the discriminant of the cubic in \cref{prop:band1}.  Near a stratum the frame is well defined but ill-conditioned, with condition number governed by the reciprocal of the eigenvalue or singular-value gap; \cref{sec:metrics} gives the stability statement that does not depend on the frame.
\end{rem}

\begin{rem}
Part (b) is the rigorous form of the first-order-ellipsoid alignment of SPHARM-PDM~\cite{BrechbuhlerGerigKubler95,Styner06}: the degree-one coefficients of the coordinate map define the ellipsoid $u\mapsto Mu$, and aligning it is exactly the SVD frame.  The differences are that our parameterization is the conformal one, so the frame is a function of the surface and not of an optimizer's initialization; that the residual finite group is identified and quotiented rather than fixed by a convention; and that the degenerate strata and the impossibility of a continuous extension are stated.
\end{rem}

\begin{prop}
\label{prop:dymgortler}
Let $G$ be $\SO(3)$ acting on $V=V_L$ or $\SO(3)\times\SO(3)$ acting on $V=W_L$, and let $I_1,\dots,I_N$ generate $\R[V]^G$.  For a generic linear map $A\colon\R^N\to\R^{2\dim V+1}$, the map $A\circ(I_1,\dots,I_N)$ separates $G$-orbits on $V$.  In particular continuous polynomial invariants inject $V_L/\SO(3)$ into $\R^{2(L+1)^2+1}$ and $\mathcal Q_L$ into $\R^{6(L+1)^2-5}$.
\end{prop}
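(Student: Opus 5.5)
The plan is to deduce \cref{prop:dymgortler} from the general theorem of Dym and Gortler~\cite{DymGortler24} on generic linear projections of separating invariants, after checking its hypotheses in our two settings.

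The first step is to fix the setting. The group $G$ is compact and acts linearly on $V$. By \cref{thm:sep-radial} for $V_L$, and by \cref{thm:sep-coord} (together with \cref{lem:parity2}, which gives real phased generators) for $W_L$, the map $\mathcal I=(I_1,\dots,I_N)$ separates $G$-orbits. Its fibers are therefore exactly the orbits.

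The second step is a dimension count for pairs. Put
\[
Z=\{(x,y)\in V\times V:\ \mathcal I(x)\ne\mathcal I(y)\}.
\]
This is a semialgebraic set of dimension at most $2\dim V$. For $(x,y)\in Z$ the vector $\mathcal I(x)-\mathcal I(y)$ is nonzero. The linear map $A$ fails to separate exactly when $A(\mathcal I(x)-\mathcal I(y))=0$ for some $(x,y)\in Z$. We consider the incidence set
\[
\{(A,x,y):\ A(\mathcal I(x)-\mathcal I(y))=0,\ (x,y)\in Z\}.
\]
For each fixed $(x,y)\in Z$, the condition is that a nonzero vector lies in $\ker A$. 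This imposes $m=2\dim V+1$ independent linear conditions on $A\in\R^{m\times N}$, so each fiber has codimension $m$ in the space of matrices. The incidence set thus has dimension at most $mN-m+2\dim V=mN-1$. Its projection to matrix space, the set of bad $A$, is semialgebraic of dimension less than $mN$, hence nowhere dense and of measure zero. This is the argument of~\cite{DymGortler24}, and I would cite it rather than redo the semialgebraic dimension theory.

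The third step is the numerics. For $V_L$ we have $\dim V_L=(L+1)^2$, so the target dimension is $2\dim V_L+1=2(L+1)^2+1$. For $W_L$ we have $\dim W_L=3((L+1)^2-1)$, so $2\dim W_L+1=6(L+1)^2-5$. Since $A\circ\mathcal I$ is continuous and separates orbits, it induces a continuous injection of $V_L/\SO(3)$ and of $\mathcal Q_L$. Properness of $\mathcal I$, from its weighted homogeneity and $\mathcal I^{-1}(0)=\{0\}$, shows this is a closed map on bounded sets. That upgrades the injection to a topological embedding, if one wants more than injectivity.

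The main obstacle is the dimension bound on the projection of the incidence set, which needs semialgebraic dimension theory: the dimension is preserved under projection and under fibrations with constant fiber dimension. One can sharpen the count to $2\dim(V/G)+1$, but I would not attempt that, since the stated bound uses $\dim V$.
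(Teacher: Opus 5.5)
Your proposal is correct and takes the same route as the paper, which simply invokes Dym--Gortler after noting that the generators separate orbits by \cref{thm:sep-radial,thm:sep-coord}. Your incidence-dimension count, showing that the bad $A$ form a semialgebraic set of dimension at most $2\dim V+m(N-1)=mN-1$, is the standard argument behind that theorem, your counts $2(L+1)^2+1$ and $6(L+1)^2-5$ are right, and the closing remark about a topological embedding is not needed and, as you argue it, is justified only on bounded sets, where compactness alone already gives it.
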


\begin{proof}
This is the theorem of Dym and Gortler~\cite{DymGortler24} applied to the compact groups at hand, whose polynomial invariants separate orbits by \cref{thm:sep-radial,thm:sep-coord}.
\end{proof}

Together, \cref{thm:frame} and \cref{prop:dymgortler} describe the trade-off exactly.  On the principal stratum, a frame gives $\dim(V/G)$ real-analytic coordinates, determined up to the sign action of $K$, at the cost of discontinuity at the symmetry strata.  Globally, $2\dim V+1$ continuous polynomial invariants suffice, at the cost of roughly doubling the dimension and of using generic combinations of high-degree generators.  Max-filter constructions~\cite{CahillIversonMixonPacker24} give a third route, with Lipschitz and in some cases bi-Lipschitz guarantees; for the harmonic representations here their templates would themselves have to be chosen in $V_L$, and we have not pursued this.

\section{Metrics and Stability}
\label{sec:metrics}

The previous sections produce, for a surface $S$ and a degree $L$, a vector of invariants $\mathcal I(c)$ of the radial descriptor $c=f^{(L)}(\varphi)$, and a vector $\mathcal I_2(C)$ of the coordinate descriptor.  By \cref{cor:orbit} these vectors do not depend on the parameterization.  By \cref{thm:sep-radial,thm:sep-coord} they determine the descriptor up to the nuisance rotations.  A descriptor that is to be used on measured data needs one more property.  Measured shapes carry noise, and two shapes are never exactly congruent.  So we must know how the invariants respond to small changes of shape, and conversely how much the shape can change when the invariants change little.  The first question asks for continuity of the map from shapes to invariants.  The second asks for continuity of its inverse.  Both need a distance on the space of shapes modulo the nuisance group.  This section defines that distance, proves the two continuity statements, and collects the properties of the descriptor into one theorem.

\subsection{The orbit distance}

Let $G$ be a compact group acting linearly and isometrically on a real inner product space $V$; the cases are $G=\SO(3)$ on $V=V_L$ and $G=\SO(3)\times\SO(3)$ on $V=W_L$, with the $L^2(d\omega)$ norm on $V_L$ and the Frobenius norm $\norm{C}_F^2=\sum_l\tr(C_lC_l^{\top})$ on $W_L$.  Both norms are $G$-invariant, since $D_l(Q)$ is orthogonal.  The \textbf{orbit distance} of $c,c'\in V$ is the distance between their orbits:
\begin{equation}
\label{eq:orbitdist}
\begin{split}
\delta(c,c')&=\min_{R\in\SO(3)}\norm{c-R\cdot c'},\\
\delta(C,C')^2&=\min_{Q,R\in\SO(3)}\sum_{l=1}^L\norm{C_l-RC'_lD_l(Q)^{\top}}_F^2 .
\end{split}
\end{equation}
The minimum exists because $G$ is compact.  In words, $\delta(c,c')$ is the smallest distance between $c$ and any rotated copy of $c'$; it measures how far two shapes are from being congruent, after the best alignment.

The second form makes both nuisance rotations of the coordinate model explicit.  For fixed $Q$ the optimal $R$ maximizes $\tr(R^{\top}H(Q))$ with $H(Q)=\sum_lC_lD_l(Q)C_l'^{\top}$, and the maximizer is the Kabsch solution $R=U\diag(1,1,\det(UV^{\top}))V^{\top}$ for a singular value decomposition $H(Q)=U\Sigma V^{\top}$.  So the minimization is over the sphere factor alone.  It is nonconvex and needs documented multistart.  A scale-free version replaces each nonzero input by its unit-norm representative before minimizing.

Two examples show what the orbit distance computes.

\begin{exa}
\label{exa:dist-band1}
Let $f=\ip a\cdot$ and $f'=\ip{a'}\cdot$ in $\Harm_1$, with $a,a'\in\R^3$, and use the norm $\abs a$ on the coefficient.  A rotation moves $a'$ to any vector of length $\abs{a'}$, so
\[
\delta(f,f')=\min_{R\in\SO(3)}\abs{a-Ra'}=\bigl|\,\abs a-\abs{a'}\,\bigr|.
\]
The orbit distance is the difference of the norms, which is the only invariant of band one.
\end{exa}

\begin{prop}
\label{prop:dist-coord1}
Let $M,M'\in\R^{3\times3}$ be band-one coordinate descriptors as in \cref{prop:band1}, with signed singular values $s=(\sigma_1,\sigma_2,\sgn(\det M)\sigma_3)$ and $s'$ defined likewise.  If $\det M\cdot\det M'\ge0$, then
\[
\delta(M,M')=\min_{R,Q\in\SO(3)}\norm{M-RM'Q^{\top}}_F=\abs{s-s'}.
\]
\end{prop}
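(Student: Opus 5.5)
The plan is to reduce to diagonal matrices with the signed SVD of \cref{prop:band1}, and then bound the trace term with von Neumann's trace inequality in its special-orthogonal form.

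\textbf{Reduction.} Write $M=R_1\Sigma Q_1^{\top}$ and $M'=R_2\Sigma'Q_2^{\top}$ with $R_i,Q_i\in\SO(3)$ and $\Sigma=\diag(s)$, $\Sigma'=\diag(s')$. Since the Frobenius norm is invariant under left and right multiplication by rotations, substituting $R\mapsto R_1^{\top}RR_2$ and $Q\mapsto Q_1^{\top}QQ_2$ gives
\[
\delta(M,M')=\min_{R,Q\in\SO(3)}\norm{\Sigma-R\Sigma'Q^{\top}}_F .
\]
Taking $R=Q=1$ shows $\delta\le\abs{s-s'}$.

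\textbf{Lower bound.} Expand the square:
\[
\norm{\Sigma-R\Sigma'Q^{\top}}_F^2=\abs s^2+\abs{s'}^2-2\tr(\Sigma R\Sigma'Q^{\top}).
\]
It therefore suffices to show that, for $U=R$ and $V=Q$ ranging over $\SO(3)$,
\[
\tr(\Sigma U\Sigma'V^{\top})\le\sum_i s_is'_i .
\]
I would use the special-orthogonal version of von Neumann's inequality, going back to Kabsch and Umeyama: for $A\in\R^{3\times3}$ with singular values $\alpha_1\ge\alpha_2\ge\alpha_3\ge0$, one has $\max_{W\in\SO(3)}\tr(WA)=\alpha_1+\alpha_2+\sgn(\det A)\,\alpha_3$. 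This is exactly the Kabsch statement already quoted in \cref{sec:metrics}. Apply it with $A=\Sigma'V^{\top}\Sigma$ and $W=U$ for a fixed $V$, and then maximize over $V$.

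\textbf{Main obstacle.} The delicate step is controlling the singular values of $\Sigma' V^{\top}\Sigma$ jointly in $V$. Two facts are needed.
\begin{itemize}
\item Horn's inequalities give majorization: the singular values of $\Sigma'V^{\top}\Sigma$ are weakly log-majorized, hence weakly majorized, by $(\sigma_i\sigma'_i)$.
\item Its determinant $\det\Sigma\det\Sigma'$ has the sign of $\det M\det M'$.
\end{itemize}
A cleaner route, which I would try first, is the double-stochastic argument. Write
\[
\tr(\Sigma U\Sigma'V^{\top})=\sum_{i,j}s_is'_j\,U_{ij}V_{ij}.
\]
The matrix $(U_{ij}V_{ij})$ has row and column sums bounded by $1$ in absolute value. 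For $U,V\in\SO(3)$, its extreme structure is governed by the known fact that the convex hull of $\SO(3)$ consists of the matrices whose signed singular values lie in a polytope.

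Concretely, I would invoke the classical result (Miranda–Thompson) that the signed singular values of products of rotations satisfy $\tr(\Sigma U\Sigma'V^{\top})\le\sum_{i\le2}\sigma_i\sigma'_i+\sigma_3'' $, where $\sigma_3''=\pm\sigma_3\sigma'_3$ carries the sign of $\det\Sigma\det\Sigma'$. That sign is nonnegative under the hypothesis $\det M\det M'\ge0$, and then the bound is exactly $\sum_i s_is'_i$.

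\textbf{Conclusion.} Under the hypothesis, the bound gives $\norm{\Sigma-R\Sigma'Q^{\top}}_F^2\ge\abs{s-s'}^2$ for all $R,Q\in\SO(3)$. Combined with the upper bound from $R=Q=1$, this proves $\delta(M,M')=\abs{s-s'}$.

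The hypothesis is what makes the product of the third signed values nonnegative. Without it, the best alignment could flip the signs of a pair of singular values, and the minimum would be strictly smaller than $\abs{s-s'}$. This explains the restriction in the statement.
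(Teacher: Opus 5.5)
Your argument is correct and has the same structure as the paper's proof: reduce to $\Sigma,\Sigma'$ by bi-invariance, expand the square, bound the trace term, and note equality at $R=Q=1$. The difference is that you treat the trace bound as the delicate step and invoke the signed Miranda--Thompson inequality. Under the hypothesis, the ordinary von Neumann inequality already suffices. The matrix $X=U\Sigma'V^{\top}$ has singular values $\sigma'_i$, so $\tr(\Sigma X)\le\sum_i\sigma_i\sigma'_i$. This equals $\sum_is_is'_i$ precisely because $s_3s'_3=\sigma_3\sigma'_3$ when $\det M\cdot\det M'\ge0$, and that is the paper's proof. So the ``main obstacle'' is not an obstacle here, and the doubly stochastic detour is unnecessary. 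Even your first route closes at once: for every $V$, $\sgn(\det A)\alpha_3\le\alpha_3$, and weak majorization gives $\alpha_1+\alpha_2+\alpha_3\le\sum_i\sigma_i\sigma'_i$.

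Your closing paragraph, however, is false: the formula does not need the hypothesis. When $\det M\cdot\det M'<0$, the inequality you cite gives
\[
\max\tr(\Sigma U\Sigma'V^{\top})=\sigma_1\sigma'_1+\sigma_2\sigma'_2-\sigma_3\sigma'_3=\sum_is_is'_i,
\]
attained at $U=V=1$, so again $\delta(M,M')=\abs{s-s'}$. The paper records this in the paragraph following the proposition. For example, take $s=(1,1,1)$ and $s'=(1,1,-1)$. The matrices $R\Sigma'Q^{\top}$ are then exactly the orthogonal matrices of determinant $-1$, whose trace is at most $1$. Hence $\delta^2=6-2=4=\abs{s-s'}^2$, and no alignment that ``flips a pair of signs'' does better. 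The restriction in the statement only marks the case where plain von Neumann suffices; it does not mark a case where the formula fails.
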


\begin{proof}
Write signed singular value decompositions $M=U\Sigma V^{\top}$ and $M'=U'\Sigma'V'^{\top}$ with $U,V,U',V'\in\SO(3)$ and $\Sigma=\diag(s)$, $\Sigma'=\diag(s')$.  Since the Frobenius norm is bi-invariant under $\SO(3)$, $\norm{M-RM'Q^{\top}}_F=\norm{\Sigma-\tilde R\Sigma'\tilde Q^{\top}}_F$ with $\tilde R=U^{\top}RU'$ and $\tilde Q=V^{\top}QV'$, which range over $\SO(3)$ as $R,Q$ do.  Hence we may assume $M=\Sigma$ and $M'=\Sigma'$.  Expanding the square,
\[
\norm{\Sigma-R\Sigma'Q^{\top}}_F^2=\abs s^2+\abs{s'}^2-2\tr(\Sigma R\Sigma'Q^{\top}).
\]
Let $X=R\Sigma'Q^{\top}$.  Its singular values are $\sigma'_1,\sigma'_2,\sigma'_3$, so by von Neumann's trace inequality $\tr(\Sigma X)\le\sum_i\sigma_i\sigma'_i$.  When $\det M\cdot\det M'\ge0$ the last entries of $s$ and $s'$ have the same sign, so $\sum_i\sigma_i\sigma'_i=\sum_is_is'_i$, and this value is attained at $R=Q=1$.  Hence the minimum of the square is $\abs s^2+\abs{s'}^2-2\sum_is_is'_i=\abs{s-s'}^2$.
\end{proof}

When the determinants have opposite signs, the same formula $\delta(M,M')=\abs{s-s'}$ holds, with the sign of $s_3$ or $s'_3$ negative: with $\Sigma'=\diag(\sigma'_1,\sigma'_2,-\sigma'_3)$ and $Q$ fixed, the matrix $A=\Sigma'Q^{\top}\Sigma$ has negative determinant, so $\max_{R\in\SO(3)}\tr(RA)=\sigma_1(A)+\sigma_2(A)-\sigma_3(A)$; since $\sigma_1(A)+\sigma_2(A)\le\sigma_1\sigma'_1+\sigma_2\sigma'_2$ and $\sigma_3(A)\ge\sigma_3\sigma'_3$ by the singular value inequalities for products, the maximum over $Q$ is $\sigma_1\sigma'_1+\sigma_2\sigma'_2-\sigma_3\sigma'_3=\sum_is_is'_i$, attained at $Q=1$.  We also verified this numerically against a multistart minimization over $\SO(3)\times\SO(3)$ (\cref{sec:computation}).  In either case the two-sided orbit distance of band one is the Euclidean distance between the signed singular value vectors, that is, between the canonical representatives of \cref{prop:band1}.

The \textbf{invariant-coordinate distance} is any distance between normalized invariant vectors~\cref{eq:gauge}.  On a chart where a fixed set of coordinates is nonzero, the root-degree logarithms $w_k^{-1}\log\abs{x_k}$ turn scaling into translation and give a scale-free chart distance.  The orbit distance is canonical once representation, truncation and inner product are fixed; an invariant-coordinate distance depends on the embedding $\mathcal I$.  The following theorem relates the two.

\subsection{Stability in both directions}

\begin{thm}
\label{thm:stability}
The following are true:
\begin{enumerate}[label=(\roman*)]
\item $\delta$ is a metric on $V_L/\SO(3)$ and on $\mathcal Q_L$.
\item Let $I$ be a homogeneous polynomial invariant of degree $w$ and $B$ the unit ball of $V_L$.  Then $\abs{I(c)-I(c')}\le\bigl(\sup_B\abs{\nabla I}\bigr)\,\delta(c,c')$ for $c,c'\in B$; in particular every invariant is Lipschitz on bounded sets with respect to the orbit distance, and so is every normalized coordinate~\cref{eq:gauge} on any set where $H_w$ is bounded below.

\item Let $\mathcal I$ be a separating polynomial invariant map and $\mathcal K\subset V_L$ compact.  There are $C_{\mathcal K}>0$ and $\alpha_{\mathcal K}\in(0,1]$ such that
\[
\delta(c,c')\le C_{\mathcal K}\,\abs{\mathcal I(c)-\mathcal I(c')}^{\alpha_{\mathcal K}}\qquad(c,c'\in\mathcal K).
\]

\item At a point of the principal stratum where the Jacobian of $\mathcal I$ restricted to a slice has full rank, the exponent in (iii) can be taken to be $1$ locally: $\mathcal I$ is a local bi-Lipschitz embedding of the quotient into $\R^N$.  Here a \textbf{slice} at $c$ is a submanifold through $c$, transverse to the orbit of $c$ and of complementary dimension, which parametrizes the quotient near the orbit of $c$.
\end{enumerate}
The same statements hold for the coordinate model with $\SO(3)\times\SO(3)$.
\end{thm}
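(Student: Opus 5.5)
I prove the four parts in order, for $G=\SO(3)$ on $V_L$; the coordinate model with $\SO(3)\times\SO(3)$ on $W_L$ is word for word the same, with \cref{thm:sep-coord} in place of \cref{thm:sep-radial}.

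\emph{Part (i).} The metric axioms come from $G$ being compact and acting by isometries. Because the action is isometric, $\norm{c-R\cdot c'}=\norm{R^{-1}\cdot c-c'}$. Taking the minimum over $R$ gives symmetry. It also shows that $\delta(c,c')$ is the Hausdorff-type distance between the two orbits. For the triangle inequality, let $R_1$ attain $\delta(c,c')$ and $R_2$ attain $\delta(c',c'')$. Then
\[
\norm{c-R_1R_2\cdot c''}\le\norm{c-R_1\cdot c'}+\norm{R_1\cdot(c'-R_2\cdot c'')},
\]
and the last term equals $\delta(c',c'')$ by isometry. The function $\delta$ depends only on the orbits. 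If $\delta(c,c')=0$, the minimum is attained (compactness), so $c=R\cdot c'$. Hence $\delta$ is a metric on the quotient.

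\emph{Part (ii).} Let $R$ attain the minimum, so $\norm{c-R\cdot c'}=\delta(c,c')$. Invariance gives $I(c')=I(R\cdot c')$. The point $R\cdot c'$ lies in $B$, because $B$ is $G$-invariant. Since $B$ is convex, the mean value theorem along the segment from $c$ to $R\cdot c'$ gives the stated bound. On a ball of radius $r$, homogeneity gives the constant $r^{w-1}\sup_B\abs{\nabla I}$. For the normalized coordinates, the gauge $H_w$ is Lipschitz as a function of the invariant vector wherever it is bounded below. The coordinates $\hat x_k=x_k/H_w^{w_k}$ are compositions of Lipschitz maps on that set, so they are Lipschitz too.

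\emph{Part (iii).} This is a Łojasiewicz inequality. Consider the function $\delta^2(c,c')=\min_R\norm{c-R\cdot c'}^2$ on the compact semialgebraic set $\mathcal K\times\mathcal K$. It is semialgebraic, being the minimum over the algebraic group $\SO(3)$ of a polynomial, and it is continuous. The second function is $g(c,c')=\abs{\mathcal I(c)-\mathcal I(c')}^2$. Since $\mathcal I$ separates orbits (\cref{thm:sep-radial}), $g=0$ implies $\delta=0$. The Łojasiewicz inequality for continuous semialgebraic functions on a compact set, with the zero set of $g$ contained in that of $\delta$, gives $\delta^2\le C g^{\beta}$ for some $\beta>0$. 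Taking square roots gives $\delta\le C' g^{\beta/2}=C'\abs{\mathcal I(c)-\mathcal I(c')}^{\beta}$.

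The exponent can be taken at most $1$. If $\beta>1$, replace it by $1$: on the compact set, $\abs{\mathcal I(c)-\mathcal I(c')}$ is bounded by some $D$, and $t^\beta\le D^{\beta-1}t$ for $0\le t\le D$. The main point to verify is semialgebraicity of $\delta$. It follows from the Tarski--Seidenberg theorem, because the graph of $\delta$ is first-order definable over the algebraic set $\SO(3)$.

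\emph{Part (iv).} By the slice theorem on the principal stratum, a neighborhood of the orbit is modeled on $G\times S$ divided by the finite principal stabilizer. For nearby points, the orbit distance is comparable to the distance between their slice representatives. Concretely, $\delta(s,s')\le\norm{s-s'}$ trivially. For the reverse bound $\norm{s-s'}\le C\delta(s,s')$, use that the orbit meets the slice transversally, so the best alignment of two nearby slice points is a group element near the stabilizer. If the Jacobian of $\mathcal I|_S$ has full rank, the inverse function theorem makes $\mathcal I|_S$ bi-Lipschitz near $c$ (an immersion is locally bi-Lipschitz onto its image). Combining the two comparisons gives exponent $1$.

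The step I expect to be most delicate is the reverse comparison between orbit distance and slice distance. It requires that the minimizing $R$ be close to the stabilizer, which follows from transversality and compactness after shrinking the neighborhood.
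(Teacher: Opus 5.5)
Your proof is correct in substance and follows the paper's route throughout. Part (i) uses the minimizing rotation and isometry, part (ii) uses invariance and convexity of $B$ with the mean value theorem, part (iii) uses Tarski--Seidenberg and the \L{}ojasiewicz inequality, and part (iv) uses the slice theorem with the immersion estimate. One small repair is needed in (iii): $\mathcal K$ is only compact, not semialgebraic, so apply \L{}ojasiewicz on $B'\times B'$ for a closed ball $B'\supset\mathcal K$ and then restrict, as the paper does.

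In (iv), the reverse comparison you flag as delicate is not needed. Write $x=g\cdot\sigma$ and $x'=g'\cdot\sigma'$. Invariance gives $\delta(x,x')=\delta(\sigma,\sigma')\le\norm{\sigma-\sigma'}\le C\abs{\mathcal I(\sigma)-\mathcal I(\sigma')}=C\abs{\mathcal I(x)-\mathcal I(x')}$, and the other inequality is (ii).

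Your sketch of that comparison is also incomplete. Knowing that the minimizing $g$ lies near $H=G_c$ proves nothing unless $H$ fixes the slice pointwise. At $c=0$, for instance, every $g$ lies in $H$, yet $\delta(\sigma,h\cdot\sigma)=0\ne\norm{\sigma-h\cdot\sigma}$. This is where principality is needed, and your argument never uses it.

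The paper supplies the missing step. The slice theorem gives $G_\sigma\subset H$, principality makes $G_\sigma$ conjugate to $H$, and a compact Lie group is not conjugate to a proper closed subgroup, so $G_\sigma=H$. Then $s(g\cdot\sigma)=\sigma$ is a smooth $G$-invariant retraction, and its Lipschitz constant gives $\norm{\sigma-\sigma'}\le C_s\,\delta(\sigma,\sigma')$. So $\Sigma$ is a bi-Lipschitz chart of the quotient.

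Once $H$ acts trivially on the normal space $N$, your route also works. Write $g=h\exp X$ with $X$ small and orthogonal to the Lie algebra of $H$. Then $\norm{\sigma-g\cdot\sigma'}=\norm{\sigma-\exp(X)\cdot\sigma'}$, and
$\sigma-\exp(X)\cdot\sigma'=(\sigma-\sigma')-X\cdot c+O(\abs X(\abs X+\norm{\sigma'-c}))$, where $\sigma-\sigma'\in N$ and $X\cdot c\perp N$.

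Finally, the principal stabilizer need not be finite: on $V_1$ it is $\SO(2)$.
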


\begin{proof}
(i) The function $\delta$ is symmetric because $\norm{c-R\cdot c'}=\norm{R^{-1}\cdot c-c'}$ and $R^{-1}$ ranges over $\SO(3)$ with $R$.  For the triangle inequality let $R$ and $R'$ attain $\delta(c,c')$ and $\delta(c',c'')$.  Then
\[
\begin{split}
\delta(c,c'')&\le\norm{c-RR'\cdot c''}\\
&\le\norm{c-R\cdot c'}+\norm{R\cdot(c'-R'\cdot c'')}\\
&=\delta(c,c')+\delta(c',c''),
\end{split}
\]
using the invariance of the norm.  Finally $\delta(c,c')=0$ if and only if $c'$ lies in the closure of the orbit of $c$, and the orbit is closed because the group is compact.  So $\delta$ vanishes exactly on pairs in the same orbit and descends to a metric on the quotient.

(ii) Let $R$ attain $\delta(c,c')$.  Since $I$ is invariant, $I(c')=I(R\cdot c')$, and $R\cdot c'\in B$ because $B$ is invariant.  The segment from $c$ to $R\cdot c'$ lies in the convex set $B$, so by the mean value theorem $\abs{I(c)-I(R\cdot c')}\le\sup_B\abs{\nabla I}\,\norm{c-R\cdot c'}=\sup_B\abs{\nabla I}\,\delta(c,c')$.  Normalized coordinates are compositions of $\mathcal I$ with the map $x\mapsto\hat x$ of~\cref{eq:gauge}, which is Lipschitz on any set where $H_w$ is bounded below.

(iii) The function $(c,c')\mapsto\abs{\mathcal I(c)-\mathcal I(c')}$ is continuous and semialgebraic on $\mathcal K\times\mathcal K$.  So is $(c,c')\mapsto\delta(c,c')$: it is the minimum of the polynomial $\norm{c-R\cdot c'}^2$ over the compact semialgebraic set $\SO(3)$, hence semialgebraic by the Tarski--Seidenberg theorem, and continuous by compactness.  By separation the two functions have the same zero set, namely the pairs in one orbit.  The \L{}ojasiewicz inequality for continuous semialgebraic functions on a compact semialgebraic set~\cite{BochnakCosteRoy98} states that if $g^{-1}(0)\subset f^{-1}(0)$, then $\abs f\le C\abs g^{\alpha}$ for some $C>0$ and $\alpha>0$.  Applied to $f=\delta$ and $g=\abs{\mathcal I(c)-\mathcal I(c')}$ on $B'\times B'$, where $B'\supset\mathcal K$ is a closed ball, which is compact semialgebraic, and restricted to $\mathcal K\times\mathcal K$, this gives the bound; since $g$ is bounded on $\mathcal K\times\mathcal K$, $\alpha$ may be decreased to lie in $(0,1]$.

(iv) Let $G$ be the group acting, $H=G_c$ the stabilizer of $c$, and $N$ the orthogonal complement at $c$ of the tangent space of $G\cdot c$; for small $r>0$ let $\Sigma=\{c+n:n\in N,\ \abs n<r\}$.  By the slice theorem the map $G\times_H\Sigma\to V$, $[g,\sigma]\mapsto g\cdot\sigma$, is a diffeomorphism onto an open $G$-invariant neighborhood $U$ of $G\cdot c$, and $G_\sigma\subset H$ for $\sigma\in\Sigma$.  Since $c$ lies on the principal stratum, $G_\sigma$ is conjugate to $H$; a compact Lie group is not conjugate to a proper closed subgroup of itself, which would have the same dimension and the same number of components, so $G_\sigma=H$.  Hence $H$ fixes $\Sigma$ pointwise, $G\times_H\Sigma=(G/H)\times\Sigma$, and $s(g\cdot\sigma)=\sigma$ defines a smooth $G$-invariant map $s\colon U\to\Sigma$ with $s|_\Sigma=\mathrm{id}$.  Let $B\subset U$ be a closed ball of radius $\rho$ about $c$ and $C_s$ a Lipschitz constant of $s$ on $B$.  For $\sigma,\sigma'\in\Sigma$ within $\rho/4$ of $c$, let $g\in G$ attain $\delta(\sigma,\sigma')=\abs{\sigma-g\cdot\sigma'}$.  Then $\abs{\sigma-g\cdot\sigma'}\le\abs{\sigma-\sigma'}\le\rho/2$, so the segment from $\sigma$ to $g\cdot\sigma'$ lies in $B$, and
\[
\abs{\sigma-\sigma'}=\abs{s(\sigma)-s(g\cdot\sigma')}\le C_s\,\delta(\sigma,\sigma')\le C_s\abs{\sigma-\sigma'} .
\]
Thus near $c$ the orbit distance is bi-Lipschitz equivalent to the Euclidean distance of $\Sigma$, and $\Sigma$ is a chart of the quotient.  If the Jacobian of $\mathcal I|_\Sigma$ at $c$ has full rank, $\mathcal I|_\Sigma$ is a diffeomorphism from a neighborhood of $c$ in $\Sigma$ onto its image, hence bi-Lipschitz on a compact neighborhood.  Since $\mathcal I$ and $\delta$ are $G$-invariant and every point of $U$ is $g\cdot\sigma$ with $\sigma\in\Sigma$, this gives $\abs{\mathcal I(x)-\mathcal I(x')}\ge c_0\,\delta(x,x')$ for $x,x'$ in a $G$-invariant neighborhood of $G\cdot c$, which with (ii) is (iv).
\end{proof}

Parts (ii) and (iii) are the two directions of stability.  Part (ii) says that small changes of shape, measured in the orbit distance, produce small changes of the invariants; this is what makes the invariants usable on noisy meshes.  Part (iii) says that small changes of the invariants can only come from small changes of shape modulo the nuisance group, uniformly on bounded sets.  Together they say that $\mathcal I$ induces a homeomorphism from the image of $\mathcal K$ in $V_L/\SO(3)$ onto its image which is Lipschitz and has a H\"older continuous inverse.

The exponent in (iii) cannot be taken to be $1$ in general.  The obstruction is not a defect of the invariants but a property of the quotient.

\begin{exa}
\label{exa:holder}
Let $V=\Harm_1$ with $\mathcal I(a)=\abs a^2$, and let $\mathcal K$ be the unit ball.  By \cref{exa:dist-band1}, $\delta(a,0)=\abs a$, while $\abs{\mathcal I(a)-\mathcal I(0)}=\abs a^2$.  Hence $\delta(a,0)=\abs{\mathcal I(a)-\mathcal I(0)}^{1/2}$, and no inequality $\delta\le C\abs{\Delta\mathcal I}^{\alpha}$ with $\alpha>1/2$ can hold near the origin.  The same happens in every $V_L$: all invariants other than $f_0$ have degree at least two and vanish to second order at $c=0$, while $\delta(c,0)=\norm c$ vanishes to first order.  So $\alpha_{\mathcal K}\le1/2$ whenever $\mathcal K$ contains a neighborhood of a point $c_0$ whose stabilizer is not the principal one: every invariant is stationary at $c_0$ in the directions of the slice on which the stabilizer acts without fixed vectors, while the orbit distance is of first order there; and $\alpha_{\mathcal K}$ can be smaller at points where higher-degree invariants are needed to separate.
\end{exa}

The frame coordinates of \cref{thm:frame} satisfy (ii) with constants that blow up like the reciprocal of the spectral gap near the strata; the polynomial invariants do not, which is the reason for using them on data whose spectral gaps are not controlled.

\begin{rem}
\label{rem:param-stability}
The theorem concerns the passage from coefficients to invariants.  The passage from surface to coefficients is also stable: the uniformizing map, taken modulo $\Mob$, depends continuously on the metric in $C^{k,\alpha}$, so that after centering the parameterization depends continuously on the metric up to $\SO(3)$, and the conformal barycenter is a smooth function of the measure on the set of stable measures, with derivative controlled by the Hessian computed in~\cite{CantarellaSchumacher22}.  A quantitative statement in terms of mesh perturbations is left open.
\end{rem}

\subsection{The certified descriptor}

We now collect what has been proved into a single statement about surfaces.  Fix $L\ge1$ and a multihomogeneous generating set of $\mathcal R_L$ with phased real forms $I_1,\dots,I_N$ of total degrees $w_1,\dots,w_N$, as in \cref{thm:sep-radial}, with the $\Or(3)$-invariants listed first and the pseudo-invariants last.  For a smoothly embedded two-sphere $S\subset\R^3$ define
\begin{equation}
\label{eq:descriptor}
\mathcal D_L(S)=\mathcal I\bigl(f^{(L)}(\varphi)\bigr)\in\R^N,\qquad\varphi\in\Phi_0(S),
\end{equation}
and, when $f^{(L)}(\varphi)\ne0$, its class $[\mathcal D_L(S)]\in\PP^+_w(\R)$.

\begin{thm}
\label{thm:certified}
The following are true:
\begin{enumerate}[label=(\alph*)]
\item \textbf{Well defined.}  $\mathcal D_L(S)$ does not depend on the choice of $\varphi\in\Phi_0(S)$.
\item \textbf{Equivariance.}  If $T(x)=\lambda Rx+b$ with $\lambda>0$ and $R\in\SO(3)$, then $\mathcal D_L(T(S))_k=\lambda^{w_k}\mathcal D_L(S)_k$ for every $k$, so $[\mathcal D_L(T(S))]=[\mathcal D_L(S)]$.  If $T$ is an orientation-reversing similarity, the same holds for the $\Or(3)$-invariant coordinates, while each pseudo-invariant coordinate is multiplied by $-\lambda^{w_k}$.
\item \textbf{Completeness.}  $\mathcal D_L(S)=\mathcal D_L(S')$ if and only if the truncated radial expansions of $S$ and $S'$ differ by a rotation of the sphere: $f^{(L)}(\varphi')=Q\cdot f^{(L)}(\varphi)$ for some $\varphi\in\Phi_0(S)$, $\varphi'\in\Phi_0(S')$ and $Q\in\SO(3)$.  Likewise $[\mathcal D_L(S)]=[\mathcal D_L(S')]$ if and only if the truncated expansions differ by a rotation and a positive scaling.
\item \textbf{Chirality.}  $S$ and its mirror image have radial descriptors in the same $\SO(3)$-orbit if and only if every pseudo-invariant coordinate of $\mathcal D_L(S)$ vanishes.
\item \textbf{Stability.}  On every bounded set of descriptors, $\mathcal D_L$ is Lipschitz with respect to the orbit distance, and on every compact set the orbit distance is bounded by a H\"older power of $\abs{\mathcal D_L(S)-\mathcal D_L(S')}$.
\end{enumerate}
For the coordinate model, with $\mathcal I_2$ in place of $\mathcal I$ and $C^{(L)}(\varphi)$ in place of $f^{(L)}(\varphi)$, (a)--(e) hold with $\SO(3)\times\SO(3)$ in place of $\SO(3)$ and the mirror pseudo-invariants of \cref{lem:parity2} in (d), and in addition: if $\mathcal I_2(C^{(L)}(\varphi))=\mathcal I_2(C^{(L)}(\varphi'))$ for every $L$, with $\mathcal I_2$ chosen for each $L$, then $S'$ is the image of $S$ under an orientation-preserving rigid motion.
\end{thm}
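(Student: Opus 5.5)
The theorem is an assembly of earlier results, so I will prove each part by citing the appropriate statement and checking that the hypotheses match.

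For (a), \cref{cor:orbit}(i) shows that the tuples $f^{(L)}(\varphi)$, $\varphi\in\Phi_0(S)$, form a single orbit of the diagonal $\SO(3)$-action. The functions $I_k$ are $\SO(3)$-invariant, so $\mathcal I$ is constant on that orbit.

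For (b), \cref{cor:orbit}(iii) gives $f^{(L)}(T\circ\varphi)=\lambda f^{(L)}(\varphi)$, and \cref{lem:centering}(ii) guarantees that $T\circ\varphi$ is centered. Multihomogeneity of total degree $w_k$ then gives the factor $\lambda^{w_k}$, so the class in $\PP^+_w(\R)$ is unchanged. For an orientation-reversing $T$, \cref{lem:centering}(iii) and \cref{cor:orbit}(iv) give $f_l(T\circ\varphi\circ r)=\lambda\,f_l(\varphi)\circ r$. Composition with $r$ is the action of an element of $\Or(3)\setminus\SO(3)$, and \cref{lem:reality}(iv) (equivalently \cref{cor:transfer}) says that such an element multiplies $I_k$ by $(-1)^{p(I_k)}$. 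This fixes the $\Or(3)$-invariants and negates the pseudo-invariants, giving $-\lambda^{w_k}$ on the pseudo-invariant coordinates.

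For (c), the forward implication follows from \cref{thm:sep-radial}: equality of $\mathcal I$ means the two truncations lie in one $\SO(3)$-orbit. Because any centered $\varphi$ may be used by (a), this yields the stated $\varphi,\varphi',Q$. The converse is invariance. The projective statement is the argument of \cref{sec:scale}: equal classes give $I_k(c')=\mu^{w_k}I_k(c)=I_k(\mu c)$ for some $\mu>0$, and separation then gives $c'\in\SO(3)\cdot\mu c$.

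For (d), I apply \cref{prop:chirality} to the radial descriptor of the mirror image, which \cref{cor:orbit}(iv) identifies up to rotation. One point needs care here. \cref{prop:chirality} is phrased in terms of all pseudo-invariants in $\mathcal R_L$, whereas the descriptor records only the pseudo-invariant generators. Suppose all generator pseudo-invariants vanish. By (b), $\mathcal I$ of the mirror then agrees with $\mathcal I(c)$ coordinatewise, and \cref{thm:sep-radial} places the two in one orbit. Conversely, if the mirror lies in the orbit, then each pseudo-invariant coordinate equals its own negative and so vanishes. This direct argument avoids the decomposition into parity components.

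For (e), I apply \cref{thm:stability}(ii) and (iii) with $\mathcal I$ separating. On a bounded set the Lipschitz constant is the maximum of the gradient bounds after rescaling to a ball.

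For the coordinate model, I repeat the same steps with \cref{cor:orbit}(ii)--(iv), \cref{lem:parity2} (the mirror acts by $(-1)^{l+1}$ up to the two-sided action, which fixes real invariants and negates pseudo-invariants), \cref{thm:sep-coord}, and the coordinate-model version of \cref{thm:stability}. The final all-$L$ statement is the last assertion of \cref{thm:sep-coord}.

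The main obstacle is (d), and the coordinate analogue of (b) for orientation reversal. In both places one must check that the reflection $r$ on the sphere together with $-R$ in space acts on invariants exactly by the parity sign. This requires combining \cref{cor:orbit}(iv) with \cref{lem:reality}(iv), respectively \cref{lem:parity2}, while keeping track of the sign conventions.
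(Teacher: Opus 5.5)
Your proposal is correct and follows the paper's proof essentially step by step: (a) from \cref{cor:orbit}(i), (b) from \cref{cor:orbit}(iii)--(iv) with \cref{lem:reality}(iv), (c) from \cref{thm:sep-radial} and the scale argument of \cref{sec:scale}, (e) from \cref{thm:stability}(ii)--(iii), and the coordinate model from \cref{cor:orbit}(ii), \cref{lem:parity2} and \cref{thm:sep-coord}. The only variation is in (d). The paper simply cites \cref{prop:chirality}. You instead rederive the criterion directly from (b) and separation by the generators, which addresses the fact that the descriptor records only the pseudo-invariant generators rather than all pseudo-invariants in $\mathcal R_L$; this is the same argument that proves \cref{prop:chirality}.
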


\begin{proof}
(a) By \cref{cor:orbit}(i) the tuples $f^{(L)}(\varphi)$, $\varphi\in\Phi_0(S)$, form one $\SO(3)$-orbit, and $\mathcal I$ is $\SO(3)$-invariant.  (b) By \cref{cor:orbit}(iii), $f^{(L)}(T\circ\varphi)=\lambda f^{(L)}(\varphi)$, and $I_k$ is homogeneous of degree $w_k$.  For an orientation-reversing $T$, \cref{cor:orbit}(iv) gives $f_l(T\circ\varphi\circ r)=\lambda f_l(\varphi)\circ r$, and $f_l\circ r$ lies in the $\SO(3)$-orbit of $(-1)^lf_l$ since $r=-R_0$ for a rotation $R_0$; the effect on $I_k$ is $(-1)^{\sum ld_l}\lambda^{w_k}$ by \cref{lem:reality}(iv).  (c) is \cref{thm:sep-radial} together with (a), and the scale-free statement is the completeness argument of \cref{sec:scale}.  (d) is \cref{prop:chirality}.  (e) is \cref{thm:stability}(ii) and (iii), applied to the descriptors $f^{(L)}(\varphi)$ and $f^{(L)}(\varphi')$.  The coordinate statements are \cref{cor:orbit}(ii), \cref{thm:sep-coord}, \cref{lem:parity2} and \cref{thm:stability} for $\SO(3)\times\SO(3)$; the reconstruction from all bands is the last assertion of \cref{thm:sep-coord}.
\end{proof}

The theorem is the precise form of the claim made in the introduction.  Every nuisance freedom of a spherical harmonic description of a surface has been removed exactly: the parameterization by (a), the position and scale by (b) and (c), and the handedness is not discarded but recorded by (d).  Nothing has been lost at the level of the truncated coefficients, since (c) says the descriptor is complete for the truncation, and the coordinate model is complete in the limit; the radial reduction and the truncation themselves may lose information about $S$.  And the result is not only correct on exact data but stable with respect to the coefficients of measured data, by (e).  What remains is the numerical realization of each step on triangulated surfaces, which \cref{sec:computation} addresses.


\section{Computation and Verified Algebra}
\label{sec:computation}

The constructions of the previous sections are exact.  This section says how each is carried out on a triangulated surface, which quantities must be reported so that a computed descriptor can be trusted, and which algebraic identities were verified independently of any mesh.

\subsection{From a mesh to a centered parameterization}
\label{sec:comp-param}

The input is a triangulated surface $S_h$ with vertices $p_1,\dots,p_n\in\R^3$.  It must be a connected closed combinatorial manifold with $n-e+t=2$, free of degenerate faces, and consistently oriented; its realization in $\R^3$ is assumed free of self-intersections, so that $S_h$ approximates a smoothly embedded two-sphere.  The first step is a discrete conformal map $\psi_h\colon S_h\to\Sph$, $p_i\mapsto u_i$, without flipped triangles, obtained by minimizing a discrete conformal energy on the sphere~\cite{GuWangChanThompsonYau04,ChoiLamLui15}; its residual freedom is, up to discretization error, a M\"obius transformation.

The second step is the discrete form of \cref{lem:centering}.  The pulled-back area measure is $\mu_h=\sum_iw_i\delta_{u_i}$ with $w_i$ the fraction of the area of $S_h$ attributed to $p_i$.  By \cref{rem:mesh} it has a unique conformal barycenter $x_h$ provided $\max_iw_i<1/2$, which must be checked.  The point $x_h$ is the zero of $\xi_h(x)=\sum_iw_i\tau_x(u_i)$; it is computed by Newton's method with a line search, started at $x=0$ and monitored for convergence, or by the fixed-point iteration of~\cite{BadenCraneKazhdan18}.  The Jacobian of $\xi_h$, computed in~\cite{CantarellaSchumacher22}, governs the convergence rate and the sensitivity of $x_h$ to the mesh.  Each $u_i$ is then replaced by $\tau_{x_h}(u_i)$, the residual $\abs{\sum_iw_iu_i}$ is recorded, and the remaining rotation of $\Sph$ is left to the invariants.

\subsection{Harmonic coefficients}
\label{sec:comp-coef}

The coefficients are integrals against $d\omega$, not against the area of $S_h$, which a conformal map concentrates in some regions of the sphere.  Let the round weight $\omega_i$ be one third of the area of the spherical triangles incident to $u_i$, divided by $4\pi$.  The conformal center~\cref{eq:center} is approximated by $\bar c_h=\sum_i\omega_ip_i$, and $X_i=p_i-\bar c_h$, $d_i=\abs{X_i}$.  With $\mathsf Y$ the $n\times(L+1)^2$ matrix of basis values $Y_{lm}(u_i)$ and $\Omega=\diag(\omega_i)$, a sample vector $s$ has coefficients $c=(\mathsf Y^{\top}\Omega\mathsf Y)^{-1}\mathsf Y^{\top}\Omega s$, for $s=(d_i)$ in the radial model and $s=(X_i^{(k)})$, $k=1,2,3$, in the coordinate model.  The matrix $\mathsf Y^{\top}\Omega\mathsf Y$ approximates the identity, and its deviation, together with $\norm{C_0}$, which vanishes in exact arithmetic, measures the quadrature.  The truncation degree is chosen from the residual, compared with the bound $\abs{S_h}/(2\pi(L+1)(L+2))$ of \cref{prop:truncation}; the largest ratio of a face area on $S_h$ to the area of its image, the discrete $\sup e^{2\rho}$, is reported with it.

\subsection{Binary forms and invariants}
\label{sec:comp-inv}

The binomial coefficients of $F_{f_l}$ are computed from the $c_{lm}$ by \cref{thm:correspondence}(iii).  As an independent check, $f_l$ is written as a solid harmonic polynomial in $x$ and substituted into~\cref{eq:veronese}; coefficients from a software library must be converted with the library's own phase and normalization at this step.  Invariants have degrees up to $15$, so each is evaluated on $c/\norm c$ and rescaled, $I_k(c)=\norm c^{w_k}I_k(c/\norm c)$, with $\norm c$ kept as the size variable.  The transvectant sequence for $R$~\cref{eq:ABCD} and for the joint invariants~\cref{eq:bispectral} is implemented once, verified in exact rational arithmetic (\cref{sec:comp-verified}), and then run in floating point; the spurious imaginary parts of $J_2,\dots,J_{10}$ and real part of $R$ measure the floating-point error, by \cref{lem:reality}.  The frame coordinates of \cref{thm:frame} are used only when the relative spectral gaps of $Q$ (or the singular value gaps of $C_1$) exceed a stated threshold; by \cref{thm:frame}(c) the threshold only decides where the discontinuity of the frame is placed.

\subsection{Normalization and diagnostics}
\label{sec:comp-diag}

The invariant vector is normalized by the gauge~\cref{eq:gauge}, and the quantities $\abs{\hat x_k}^{1/w_k}\in[0,1]$ are reported; a value near zero flags proximity to the divisor $I_k=0$.  Each of the following diagnostics corresponds to a hypothesis or a bound of the theory.
\begin{enumerate}[label=(\roman*),leftmargin=2em]
\item $\max_iw_i<1/2$ (\cref{rem:mesh}) and the centering residual.
\item The deviation of $\mathsf Y^{\top}\Omega\mathsf Y$ from the identity, and $\norm{C_0}$.
\item The truncation residual, its ratio to the bound of \cref{prop:truncation}, and $\sup e^{2\rho}$.
\item The spectral gaps of \cref{rem:strata}.
\item The orbit distance~\cref{eq:orbitdist} between the descriptors of two independently processed copies of the same mesh, one rotated by a random $R\in\SO(3)$ and reparameterized by a random M\"obius transformation.  The exact value is zero by \cref{cor:orbit}, so the computed value measures the numerical error of the whole pipeline on two congruent discretizations, free of the nuisance rotations.
\end{enumerate}
Diagnostic (v) is the end-to-end test; if it fails, (i)--(iv) locate the error.

\subsection{Verified identities}
\label{sec:comp-verified}

The following were checked independently of any mesh, by exact rational arithmetic where the inputs are rational and to machine precision otherwise, with the script provided as Online Resource~1; random inputs were drawn with fixed seeds.  They are the unit tests an implementation should reproduce before it is run on data.
\begin{enumerate}[label=(\arabic*),leftmargin=2em]
\item \textbf{Conventions} (\cref{sec:canonical,sec:binary}).  The $Y_{lm}$ are orthonormal through $l=5$ to $3\cdot10^{-14}$; $q(v(z))=0$, $\disc Q_x=4q(x)$, $v(jz)=-\overline{v(z)}$; $\rho(g)$ from~\cref{eq:rho} is orthogonal of determinant one on $ad-bc=1$, and real with $m(gz)=\rho(g)m(z)$ for $g\in\SU(2)$; for random real harmonics of degrees $1,2,3$, $a_{2l-k}=(-1)^{l+k}\bar a_k$.  An implementation with a different sign in~\cref{eq:veronese} or a different basis phase fails these tests.
\item \textbf{Correspondence} (\cref{thm:correspondence}).  The formulas (iii) and the norm identity (iv) hold exactly for $l\le3$, and $J_2(F_{f_3})=2100\norm{f_3}^2$.
\item \textbf{Parity} (\cref{lem:reality}).  On $\lambda Q_aQ_bQ_c$ with rational $a,b,c,\lambda$, $J_2,\dots,J_{10}$ are rational and $R$ is a rational multiple of $i$; $P_{123}=((F_1,F_2)_1,(F_3,F_3)_4)_4$ is purely imaginary and nonzero.
\item \textbf{Degree two} (\cref{ex:quartic,thm:L2}).  For every rational traceless symmetric $Q$, $I(F_f)=2\tr Q^2$, $J(F_f)=-4\det Q$ and $(Q_a^2,F_f)_4=4a^{\top}Qa$; $\varepsilon^2=\det\Gram(a,Qa,Q^2a)$.
\item \textbf{Degree three} (\cref{thm:chirality3}).  $R=0$ exactly on coplanar and bisector configurations and $R\ne0$ on generic ones; $R\mapsto-R$ under $a\mapsto-a$; $R$ scales by $2^{15}$ when one axis is doubled.  The invariants~\cref{eq:Juv} agree with the root sums~\cref{eq:IC}; the forms of weighted degree $30$ vanishing on~\cref{eq:Juv} span a line, generated by the polynomial $F_2$ of~\cite{2000-2}; and $R^2/F_2=-2^{18}3^{-18}5^{-20}$ exactly on random integer axis triples.
\item \textbf{Bispectrum on $V_2$} (\cref{sec:bench-disc}).  With bands in $\{1,2\}$, the invariants~\cref{eq:bispectral} are $4a^{\top}Qa$ for $(1,1,2)$, $(1,2,1)$, $(2,1,1)$ and $-24\det Q$ for $(2,2,2)$, and the others, including $B_{122}$, vanish identically.
\item \textbf{Metrics} (\cref{prop:dist-coord1}, \cref{fig:barycenter}).  A multistart minimization over $\SO(3)\times\SO(3)$ agrees with $\abs{s-s'}$ to six digits for both determinant signs; the barycenter of the measure of \cref{fig:barycenter} is found with residual below $10^{-16}$.
\end{enumerate}

\subsection{A synthetic end-to-end test}
\label{sec:comp-synthetic}

Diagnostic (v) was run, with the script provided as Online Resource~2, on the graph $u\mapsto r(u)\,u$ over $\Sph$ with
\[
\begin{split}
r={}&1+\tfrac14x_3+\tfrac7{20}\bigl(x_1^2-\tfrac12x_2^2\bigr)+\tfrac14x_1x_3\\
&+\tfrac15x_1x_2x_3+\tfrac3{20}\bigl(x_3^3-\tfrac35x_3\bigr)\\
&+\tfrac3{25}\bigl(x_1^3-3x_1x_2^2\bigr)x_3-\tfrac1{10}x_2x_3^2,
\end{split}
\]
meshed on the icosahedral subdivision with $n=10242$ vertices and $20480$ faces.  It is chiral and has no continuous symmetry.  Copy A is this mesh.  Copy B is the mesh rotated by a random $R\in\SO(3)$, with its vertices relabeled by a random permutation and its initial spherical map composed with a random M\"obius transformation of translation length $0.45$.  Each copy was processed independently: the conformal map by minimization of the harmonic energy with cotangent weights, the barycenter by Newton's method, the coefficients through $L=6$ with round weights, and the invariants of \cref{thm:L2} and of band three by the code of Online Resource~1.  \Cref{tab:synthetic} lists the diagnostics.

\begin{table*}[t]
\caption{Diagnostics of \cref{sec:comp-diag} for two independently processed copies of one synthetic mesh.  Copy B agrees with copy A to the digits shown.  The last row is a first-order sensitivity ratio; see the text.}
\label{tab:synthetic}
\begin{tabular}{p{0.61\textwidth}p{0.31\textwidth}}
\toprule
diagnostic & value (copy A) \\
\midrule
(i) maximal weight $\max_iw_i$; centering residual $\abs{\sum_iw_iu_i}$ & $1.9\cdot10^{-4}$; $6\cdot10^{-16}$ \\
\phantom{(i)} quasi-conformal distortion of the discrete map, max / mean (additional) & $1.030$ / $1.009$ \\
(ii) $\norm{\mathsf Y^{\top}\Omega\mathsf Y-1}_2$; $\norm{C_0}$ & $1.4\cdot10^{-3}$; $2.3\cdot10^{-5}$ \\
(iii) truncation residual at $L=6$; bound of \cref{prop:truncation}; ratio & $2.5\cdot10^{-5}$; $4.5\cdot10^{-2}$; $5.6\cdot10^{-4}$ \\
\phantom{(iii)} $\sup e^{2\rho}$ & $3.10$ \\
(iv) quadrupole eigenvalues & $-0.2186,\ -0.0843,\ 0.3029$ \\
(v) orbit distance $\delta(c,c')/\norm c$, radial model, $L=3$ / $L=6$ & $1.8\cdot10^{-8}$ / $2.0\cdot10^{-8}$ \\
\phantom{(v)} $\delta(C,C')/\norm C$, coordinate model, $L=3$ / $L=6$ & $8.7\cdot10^{-8}$ / $9.2\cdot10^{-8}$ \\
\phantom{(v)} seven invariants of \cref{thm:L2}, relative difference A vs.\ B & $10^{-8}$ to $8\cdot10^{-7}$ \\
\phantom{(v)} $J_2,J_4,J_6,J_{10}$, relative difference & $2\cdot10^{-7}$ to $2\cdot10^{-6}$ \\
\phantom{(v)} $-i\tilde R$, relative difference & $2.5\cdot10^{-5}$ \\
\phantom{(v)} $\abs{I_k(c)-I_k(c')}/(\abs{\nabla I_k(c)}\,\delta(c,c'))$, $k=1,\dots,7$ & all $\le0.83$ \\
\bottomrule
\end{tabular}
\end{table*}

The exact orbit distance between the two copies is zero.  The cotangent weights are intrinsic, so the discrete energy and the sphere constraint are invariant under rigid motions and relabelings; when the discrete minimizer is unique up to a rotation of $\Sph$, the two copies share it, the discretization error is common to both, and the computed distance $2\cdot10^{-8}$ is the optimization error of the harmonic map and of the minimization in~\cref{eq:orbitdist}.  The invariants agree to the same order.  To first order in $\delta$ the ratios in the last row equal $\abs{\cos\angle(\nabla I_k(c),c-R\cdot c')}$, so they are at most $1+O(\delta)$ by Cauchy--Schwarz; their being below one is consistent with, but not a test of, \cref{thm:stability}(ii), which uses $\sup_B\abs{\nabla I_k}$.  The mirror image of the mesh, processed independently, reproduces the six $\Or(3)$-invariants of \cref{thm:L2} and $J_2,\dots,J_{10}$ to relative $10^{-6}$ and reverses the signs of $\varepsilon$ and $-i\tilde R$, as \cref{thm:certified}(d) requires.  The same surface meshed with $n=2562$ vertices has orbit distance $\delta/\norm c=1.1\cdot10^{-4}$ from copy A, and its invariants differ by relative $3\cdot10^{-4}$ to $3\cdot10^{-3}$ in $J_2,\dots,J_{10}$ and by up to $2\cdot10^{-2}$ in the dipole invariants, whose magnitude here is $\abs a^2=4.7\cdot10^{-4}$.  The discretization error thus exceeds the error of the quotient by four orders of magnitude; it is the quantity a mesh-level validation must control.

\section{Benchmarks on Harmonic Coefficients}
\label{sec:benchmarks}

\Cref{sec:comp-synthetic} shows that on meshes the discretization error dominates the error of the quotient.  To test the separation and stability statements themselves, the experiments of this section work directly on harmonic coefficients in $V_2$, where the descriptor can be evaluated exactly and the orbit distance computed to high accuracy.  A dipole--quadrupole pair is written $c=(a,Q)$ as in \cref{thm:L2}, with the constant band omitted.  Rotations act by $(a,Q)\mapsto(Ra,RQR^{\top})$, and all norms are the $L^2(d\omega)$ norms of the harmonics, $\norm{f_1}^2=\abs a^2/3$ and $\norm{f_2}^2=\tfrac2{15}\tr Q^2$.  The scripts are Online Resource~3.  \Cref{fig:benchmarks} shows the results.

\begin{figure*}[t]
\centering
\includegraphics[width=\textwidth]{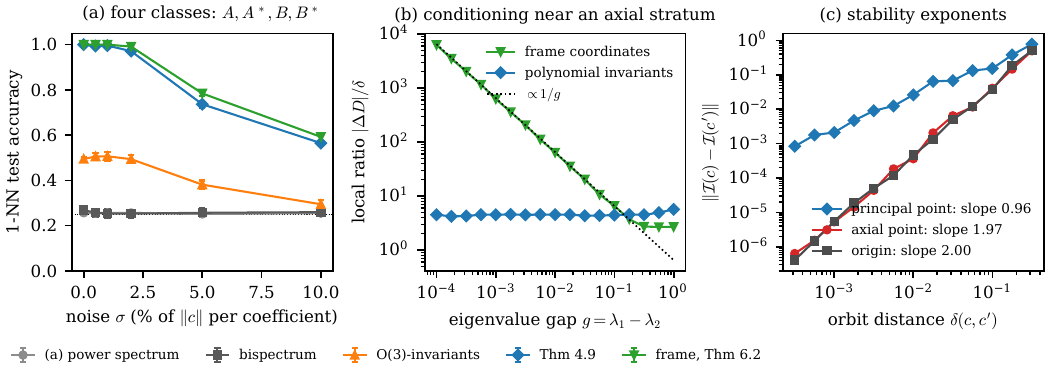}
\caption{Benchmarks of \cref{sec:benchmarks}.  (a) Nearest-neighbour accuracy on the four classes $A,A^*,B,B^*$ of \cref{sec:bench-disc} as a function of coefficient noise, mean and standard deviation over five repetitions; chance is $1/4$.  (b) Local Lipschitz ratio $\abs{\Delta D}/\delta$ of the frame coordinates of \cref{thm:frame} and of the polynomial invariants of \cref{thm:L2}, as the gap between two eigenvalues of the quadrupole tends to zero.  (c) Change of the invariants of \cref{thm:L2} against the exact orbit distance at a principal point, at a point with stabilizer $\SO(2)$ (perturbed in the directions of its slice moved by the stabilizer), and at the origin, with fitted slopes.}
\label{fig:benchmarks}
\end{figure*}

\subsection{Discrimination}
\label{sec:bench-disc}

Let $Q_0=\diag(\lambda)$ with $\lambda=(0.7,0.1,-0.8)$, and put $d=(\lambda_2-\lambda_3,\lambda_3-\lambda_1,\lambda_1-\lambda_2)$, so that $\sum_id_i=\sum_i\lambda_id_i=0$.  Let $a_A$ and $a_B$ have components $\sqrt{w_i}$ with $w=(0.10,0.20,0.10)$ and $w+0.06\,d$, respectively.  The pairs $A=(a_A,Q_0)$ and $B=(a_B,Q_0)$ then have the same $\abs a^2$, $\tr Q^2$, $\det Q$ and $a^{\top}Qa$, but different $a^{\top}Q^2a$ ($0.115$ against $0.164$).  Their mirror images are $A^*=(-a_A,Q_0)$ and $B^*=(-a_B,Q_0)$ by \cref{lem:reality}(iv); they have the same $\Or(3)$-invariants as $A$ and $B$ and the opposite sign of $\varepsilon$.  Each sample is a class representative rotated by a uniformly random $R\in\SO(3)$, with independent Gaussian noise of standard deviation $\sigma\norm c$ added to each coefficient in the orthonormal basis $\{Y_{lm}\}$.

Five descriptors were compared:
\begin{enumerate}[label=(\roman*),leftmargin=2em]
\item the power spectrum $(\norm{f_1},\norm{f_2})$;
\item the bispectrum, that is, the invariants~\cref{eq:bispectral} with bands $l\le2$;
\item the five nonconstant $\Or(3)$-invariants of \cref{thm:L2};
\item all seven invariants of \cref{thm:L2};
\item the frame coordinates of \cref{thm:frame}(a), that is, the $K$-invariant monomials $\lambda_1,\lambda_2,\tilde a_1^2,\tilde a_2^2,\tilde a_3^2,\tilde a_1\tilde a_2\tilde a_3$ in the eigenframe-aligned dipole $\tilde a$.
\end{enumerate}
On $V_2$ the bispectral invariants are, up to the constant band, $4a^{\top}Qa$ for the band triples $(1,1,2)$, $(1,2,1)$, $(2,1,1)$ and $-24\det Q$ for $(2,2,2)$, and all others vanish, including $B_{122}$, the only candidate pseudo-invariant (\cref{sec:comp-verified}).  Each feature $x_k$ of degree $w_k$ was replaced by $\sgn(x_k)\abs{x_k}^{1/w_k}$ and standardized on the training set, and each class was represented by $100$ training and $400$ test samples in a nearest-neighbour classifier.

\Cref{fig:benchmarks}(a) shows the outcome.  The power spectrum and the bispectrum stay at chance at every noise level, including $\sigma=0$, because they are functions of invariants that coincide on all four classes.  The $\Or(3)$-invariants reach one half: they separate $A$ from $B$ but not a class from its mirror image, as \cref{prop:chirality} predicts.  The seven invariants of \cref{thm:L2} and the frame coordinates classify all four classes without error at $\sigma=0$ and with accuracy above $0.97$ at $\sigma=0.02$.  For larger noise all descriptors degrade, and the ordering is unchanged.  The per-coefficient level $\sigma=0.05$ corresponds to a noise vector of about $14\%$ of $\norm c$, comparable to the separation $\abs{\varepsilon}\approx0.037$ between mirror classes.

\subsection{Frames near the degenerate strata}
\label{sec:bench-frames}

Let $Q_g=\diag(\tfrac13+\tfrac g2,\tfrac13-\tfrac g2,-\tfrac23)$ and $a=(0.3,0.4,0.5)$, so that $c_g=(a,Q_g)$ tends to the axial stratum $\lambda_1=\lambda_2$ as $g\to0$.  For each $g$, $300$ perturbations $c_g+\eta u$ with $\eta=10^{-7}$ and $u$ a random unit vector normal to the orbit were drawn; to first order $\delta(c_g,c_g+\eta u)=\eta$.  \Cref{fig:benchmarks}(b) shows the largest ratio $\abs{D(c_g+\eta u)-D(c_g)}/\eta$ for the frame coordinates and for the invariants of \cref{thm:L2}.  For the frame the ratio grows like $0.65/g$, with fitted slope $-1.00$ over $10^{-4}\le g\le10^{-2}$, which is the conditioning predicted in \cref{rem:strata} and reflects the discontinuity of \cref{thm:frame}(c).  For the polynomial invariants the ratio stays between $4.2$ and $5.7$ over four decades of $g$, in accordance with \cref{thm:stability}(ii).  This is the reason for preferring the polynomial invariants on data whose spectral gaps are not controlled.

\subsection{Stability exponents}
\label{sec:bench-exponents}

At three base points $c_0$, perturbations $c=c_0+\eta u$ with $3\cdot10^{-4}\le\eta\le0.3$ and $u$ normal to the orbit were drawn, and $\delta(c_0,c)$ was computed by multistart minimization over $\SO(3)$; at the smallest $\eta$ it agreed with $\eta$ to four digits.  The base points were the principal point $A$; the point $(0,0,\tfrac12)$, $\diag(\tfrac12,\tfrac12,-1)$, whose stabilizer is the group $\SO(2)$ of rotations about the $x_3$-axis, perturbed in the directions of its slice that this group moves; and the origin.  \Cref{fig:benchmarks}(c) shows the median of $\norm{\mathcal I(c)-\mathcal I(c_0)}$ against $\delta$, with $\mathcal I$ the seven invariants of \cref{thm:L2}.  The fitted slopes are $0.96$, $1.97$ and $2.00$.  At the principal point the invariants are locally bi-Lipschitz, as \cref{thm:stability}(iv) asserts.  At the other two points the invariant discrepancy is quadratic in the orbit distance, so the exponent $\alpha_{\mathcal K}$ of \cref{thm:stability}(iii) cannot exceed $1/2$ on a neighborhood of them, as \cref{exa:holder} asserts.

\subsection{Paired anatomical structures}
\label{sec:bench-anatomy}

The last experiment runs the mesh pipeline of \cref{sec:computation} on real anatomy.  The meshes are the eleven bilateral pairs of BodyParts3D, a public anatomical atlas of one adult male: hippocampus, amygdala, putamen, globus pallidus, thalamus, parahippocampal gyrus, testis, kidney, adrenal gland, caudate nucleus and lateral ventricle.  All $22$ meshes pass the preflight of \cref{sec:comp-param}.  Each was remeshed isotropically to $2800$--$2900$ vertices, since the atlas meshes contain triangles with angles below $2^\circ$; the remeshed surfaces pass the preflight again.  The spherical map is the harmonic map of the punctured surface to the plane~\cite{ChoiLamLui15} followed by the harmonic-energy flow with cotangent weights of \cref{sec:comp-param}, with M\"obius centering after every ten steps.  Coefficients through $L=6$ of the radial model were computed with round weights and divided by the mean radius $f_0$, so that the descriptor is scale-free; $\norm c$ denotes the norm of bands $1$ to $6$ of the left structure, and all distances below are divided by it.  On the synthetic surface of \cref{sec:comp-synthetic} this implementation reproduces the quadrupole eigenvalues and the dipole norm of \cref{tab:synthetic} to three digits.  The scripts are Online Resource~4.

For each pair, four meshes were processed independently: the left structure $L$; the right structure $R$; the reflection $R^*$ of the right structure in a plane, with the face orientation restored; and a copy $L'$ of the left structure moved by a random rigid motion, with its vertices relabeled and its initial spherical map composed with a random M\"obius transformation.  The comparison $L$ against $L'$ is diagnostic (v) of \cref{sec:comp-diag} on a real mesh.  The comparison of $R^*$ with the tuple $((-1)^lf_l(R))_l$ tests \cref{cor:orbit}(iv) end to end; the two agree to $\delta/\norm c$ between $4\cdot10^{-4}$ and $2\cdot10^{-3}$, the discretization error of remeshing a reflected mesh.  The \textbf{chirality index} $\chi=\delta(c,c^*)/\norm c$, the orbit distance of a shape from its mirror image, vanishes exactly on achiral shapes by \cref{thm:certified}(d).  \Cref{tab:anatomy} lists the results.

\begin{table*}[t]
\caption{Eleven bilateral pairs of BodyParts3D through the pipeline of \cref{sec:computation}, radial model, $L=6$.  flips: faces with reversed orientation on the sphere; $K_{\max}$: largest quasi-conformal distortion of a face; $\sup e^{2\rho}$: largest ratio of a face area to the area of its spherical image (diagnostic (iii)).  $L'$ is a rigidly moved, relabeled, M\"obius-perturbed copy of $L$, and $R^*$ the reflected right structure; all orbit distances are divided by $\norm c$.  $\chi$ is the chirality index.  The last two columns give the signs of $\varepsilon$ and of $-i\tilde R$ on the left and right side; a sign in parentheses belongs to a value whose gauge coordinate~\cref{eq:gauge} is below $0.1$, that is, within numerical distance of the divisor.}
\label{tab:anatomy}
{\small\setlength{\tabcolsep}{2.9pt}
\resizebox{\textwidth}{!}{%
\begin{tabular}{lrrrrrrrrcc}
\toprule
structure & flips & $K_{\max}$ & $\sup e^{2\rho}$ & $\delta(L,L')$ & $\delta(L,R)$ & $\delta(L,R^*)$ & $\chi_L$ & $\chi_R$ & $\sgn\varepsilon$ & $\sgn(-i\tilde R)$ \\
\midrule
hippocampus & 0 & 2.19 & $8.2\cdot10^{7}$ & $5.1\cdot10^{-14}$ & 0.1225 & 0.0088 & 0.119 & 0.126 & +\,/\,- & -\,/\,+ \\
amygdala & 0 & 1.42 & 140 & $2.8\cdot10^{-13}$ & 0.3411 & 0.0039 & 0.341 & 0.342 & -\,/\,+ & -\,/\,+ \\
putamen & 0 & 1.32 & 18 & $1.1\cdot10^{-6}$ & 0.1729 & 0.0049 & 0.173 & 0.173 & +\,/\,- & +\,/\,- \\
globus pallidus & 0 & 1.23 & 26 & $6.2\cdot10^{-14}$ & 0.3161 & 0.0009 & 0.316 & 0.317 & +\,/\,- & -\,/\,+ \\
thalamus & 0 & 1.06 & 5.6 & $4.9\cdot10^{-14}$ & 0.1940 & 0.0015 & 0.194 & 0.194 & -\,/\,+ & +\,/\,- \\
parahippocampal gyrus & 0 & 1.23 & $1.3\cdot10^{3}$ & $1.8\cdot10^{-4}$ & 0.1890 & 0.0049 & 0.188 & 0.190 & +\,/\,- & -\,/\,+ \\
testis & 0 & 1.05 & 1.7 & $7.4\cdot10^{-14}$ & 0.2651 & 0.0617 & 0.230 & 0.290 & (-)\,/\,(-) & +\,/\,- \\
kidney & 0 & 1.28 & 9.5 & $2.5\cdot10^{-14}$ & 0.1672 & 0.1797 & 0.097 & 0.123 & -\,/\,- & -\,/\,+ \\
adrenal gland & 0 & 1.43 & $6.6\cdot10^{3}$ & $4.0\cdot10^{-14}$ & 0.3784 & 0.3754 & 0.131 & 0.348 & +\,/\,- & +\,/\,- \\
caudate nucleus & 439 & $>10^{4}$ & $6.9\cdot10^{14}$ & \multicolumn{7}{l}{parameterization rejected by diagnostic (iii)/(v): folded map} \\
lateral ventricle & 426 & $>10^{4}$ & $1.1\cdot10^{11}$ & \multicolumn{7}{l}{parameterization rejected by diagnostic (iii)/(v): folded map} \\
\bottomrule
\end{tabular}}}
\end{table*}

Three groups appear.  The diagnostics reject the caudate nucleus and the lateral ventricle: on these long, curved structures the discrete conformal map folds, with about $400$ reversed faces and $\sup e^{2\rho}$ above $10^{11}$.  This is the crowding of \cref{prop:truncation} at its extreme; a conformal map compresses a thin tail into a region of the sphere too small to be represented by the mesh, and the diagnostics report it rather than a descriptor.  The hippocampus is at the boundary: its map does not fold and has $K_{\max}=2.2$, but $\sup e^{2\rho}=8\cdot10^7$, so the second bound of \cref{prop:truncation} is uninformative for it while the first still applies.

For the seven remaining pairs of brain and gonadal structures, the left structure is at distance $0.12$ to $0.34$ from the right one but only $0.001$ to $0.009$ from the reflected right one, a ratio between $14$ and $350$ (testis: $4$), and $\delta(L,R)$ agrees with the chirality index $\chi$ to two digits.  These pairs are one chiral shape in two handednesses: the whole left--right distance is the distance of the shape from its mirror image, and the residual after reflection measures the actual asymmetry.  The pseudo-invariants record the handedness: $-i\tilde R$ has opposite signs on the two sides in all seven pairs, and $\varepsilon$ in six; in the seventh, the testis, $\varepsilon$ lies within numerical distance of its zero divisor, as the gauge coordinate reports, and its sign is not informative.  For the kidney and the adrenal gland, reflection does not reduce the distance, $\delta(L,R^*)\approx\delta(L,R)$, and the two sides have different chirality indices.  These are pairs of different shapes rather than mirror images, which is the anatomical situation for the adrenal glands.  The descriptor thus separates two kinds of laterality, the same shape in opposite handedness and different shapes, that raw coefficients or $\Or(3)$-invariants confound.

Finally, $\delta(L,L')$ is at machine precision, $3\cdot10^{-13}$ or below, for seven of the nine valid pairs.  For the putamen it is $10^{-6}$ and for the parahippocampal gyrus $2\cdot10^{-4}$, and on these two shapes the value varies with the random M\"obius initialization: the harmonic flow reaches its iteration limit before converging fully, which is the optimization error that diagnostic (v) is designed to measure.  This experiment uses a single atlas subject and makes no anatomical claim; a cohort would be needed for that.  What it shows is that the pipeline runs on real meshes with its diagnostics reporting where it is reliable, and that the parity structure of \cref{sec:chirality} is visible in real anatomy.

The experiments give a consistent picture.  The power spectrum and the bispectrum are stable but incomplete and blind to reflections.  Frames are complete on the principal stratum but ill-conditioned near its boundary.  The polynomial invariants are complete, record chirality, and are uniformly Lipschitz; their inverse is Lipschitz on the principal stratum and H\"older of exponent $1/2$ at the strata where the stabilizer jumps.  On real anatomy the same invariants separate mirror-image pairs from asymmetric pairs, and the diagnostics identify the surfaces on which the conformal parameterization cannot be trusted.

\section{Discussion}
\label{sec:discussion}

The central contribution is a quotient-theoretic formulation of spherical-harmonic shape analysis for genus-zero surfaces in which every nuisance freedom is a group action and every step is exact: conformal barycenter normalization for the parameterization, polynomial invariants of a compact group for rotation, weighted projectivization for scale.  Within that formulation we have proved a complete descriptor through degree two, characterized chirality in degree three and tied it to the locus $\mathcal L_2$ of genus-two curves with an elliptic involution, constructed separating frame coordinates on the principal strata of both models with the residual Klein four-group made explicit, recorded the size of continuous separating invariants, and proved the metric and stability statements that certify the descriptor.

SPHARM-COM represents a closed surface by the distance from the center of mass, expanded in spherical harmonics through degree $30$ with respect to an approximately area-preserving parameterization, followed by supervised selection of individual coefficients, which concentrated in bands $1$ through $6$.  Replacing the coordinate functions by the distance removes ambient rotation and reduces three channels to one, but it does not remove the reparameterization freedom: with an area-preserving parameterization that freedom is infinite-dimensional, and even after a canonical parameterization a residual $\SO(3)$ remains.  An individual coefficient is therefore a function of the map, not of the surface.  Coefficients computed from an area-preserving parameterization cannot be converted into the descriptor; the surface must be reparameterized and centered as in \cref{sec:computation}.  The present construction retains the same kind of scalar representation in the radial model, the distance from a center, and quotients exactly the freedoms that remained; for the bands actually used, \cref{thm:L2} gives a complete descriptor of the truncation through degree two with seven invariants, the sextic invariants and $\hat R$ complete band three as an isolated band, and \cref{thm:frame}(a) gives a complete real-analytic descriptor on the principal stratum through any degree.  The coordinate model additionally retains the direction information the radial reduction discards and is faithful in the all-band limit.  The pseudo-invariants isolate mirror asymmetry, which an analysis of raw radial coefficients neither identifies nor controls, and \cref{rem:augmentation} shows that coefficient-space interpolation, used there for augmentation, is not intrinsic on the quotient.  Methodological details of that manuscript are used here: radialization, the truncation degree, the coefficient selection and the augmentation.  Its cohort, classification results and data are not; a quantitative comparison on any cohort is left to it and to future joint work.

Three problems remain open.  The first is a global polynomial separating set for $W_L$, $L\ge2$, of size comparable to $\dim\mathcal Q_L$; covariant generation and null-cone analysis are the natural tools, and the Maxwell-axis reduction of \cref{lem:reality}(ii) suggests a route through the first fundamental theorem for configurations of unoriented lines.  The second is the sharp H\"older exponent in \cref{thm:stability}(iii) and its dependence on the strata; the bi-Lipschitz theory of~\cite{CahillIversonMixonPacker24} may apply after adaptation.  The third is the geometry of the real band-three locus in $\PP_{(2,4,6,10)}(\R)$.  Its intersection with $\mathcal L_2$ is the achiral locus of \cref{thm:chirality3}.  Which real harmonics of degree three have a sextic whose curve has an elliptic subcover of higher degree, and what configuration of their Maxwell axes this imposes, is open; unlike the degree-two case, no reflection is involved, and no anatomical meaning should be assigned to such loci before empirical validation.

The steps of the construction are explicit maps: the conformal barycenter, the correspondence $\Psi$, the invariants of \cref{sec:invariants} and the gauge~\cref{eq:gauge}.  By \cref{thm:certified}, the class $[\mathcal D_L(S)]$ they compute does not depend on the parameterization, the position or the scale, so they can serve as fixed input layers for learning on shapes.  Such learning and a mesh-level validation on anatomical data are natural next steps; the present paper validates the pipeline on synthetic surfaces and on a single atlas subject.

\section*{Acknowledgements}
Not applicable.

\section*{Declarations}

\subsection*{Funding}
The authors did not receive support from any organization for the submitted work.

\subsection*{Competing interests}
The authors have no competing interests to declare that are relevant to the content of this article.

\subsection*{Ethics approval}
Not applicable.

\subsection*{Data availability}
The synthetic surface of \cref{sec:comp-synthetic} is defined in the text.  The anatomical meshes of \cref{sec:bench-anatomy} are the files FMA72713, 72714, 72832, 72833, 72828, 72829, 72830, 72831, 258714, 258716, 72705, 72706, 7211, 7212, 7204, 7205, 15629, 15630, 72826, 72827, 78449 and 78450 of BodyParts3D version 3.0, copyright The Database Center for Life Science, licensed under CC Attribution-Share Alike 2.1 Japan, and are publicly available.  The unpublished SPHARM-COM manuscript is available to the editor and referees on request.

\subsection*{Code availability}
The verification script of \cref{sec:comp-verified} (Online Resource~1) the script of the synthetic test of \cref{sec:comp-synthetic} (Online Resource~2) the scripts of the benchmarks of \cref{sec:bench-disc,sec:bench-frames,sec:bench-exponents} (Online Resource~3) and the mesh pipeline of \cref{sec:bench-anatomy} (Online Resource~4) are supplied as electronic supplementary material.

\subsection*{Author contributions}
Both authors contributed to the study conception and design and approved the final manuscript.

\bibliographystyle{amsplain}
\bibliography{sh-181}

\end{document}